\renewcommand{\epsilon}{\varepsilon}
\newcommand{\eps}{\varepsilon}
\newtheorem{assumption}{Assumption}
\def \be{\begin{equs}}
\def \ee{\end{equs}}
\newtheorem{theorem}{Theorem}[section]
\newtheorem{lemma}[theorem]{Lemma}
\newtheorem{definition}[theorem]{Definition}
\newtheorem*{theorem*}{Theorem}
\newtheorem{remark}[theorem]{Remark}
\newtheorem{prb}[theorem]{Problem}
\crefname{theorem}{Theorem}{Theorems}
\crefname{observation}{Observation}{Observations}
\crefname{proposition}{Proposition}{Propositions}
\crefname{claim}{Claim}{Claims}
\crefname{condition}{Condition}{Conditions}
\crefname{example}{Example}{Examples}
\crefname{fact}{Fact}{Facts}
\crefname{lemma}{Lemma}{Lemmas}
\crefname{corollary}{Corollary}{Corollaries}
\crefname{definition}{Definition}{Definitions}
\crefname{remark}{Remark}{Remarks}
\newtheorem{rem}[theorem]{Remark}
\newtheorem{lem}[theorem]{Lemma}
\newtheorem{thm}[theorem]{Theorem}
\newcommand{\fB}[0]{\mathfrak{B}}
\newcommand{\fD}[0]{\mathfrak{D}}
\newcommand{\E}[0]{\mathbb{E}}
\newcommand{\EE}[0]{\mathop{\mathbb E}}
\newcommand{\cI}[0]{\mathcal{I}}
\newcommand{\cL}[0]{\mathcal{L}}
\newcommand{\N}[0]{\mathbb{N}}
\newcommand{\Pj}[0]{\mathbb{P}}
\newcommand{\R}[0]{\mathbb{R}}
\newcommand{\fR}[0]{\mathfrak{R}}
\newcommand{\bS}[0]{\mathbb{S}}
\newcommand{\one}[0]{\mathbbm{1}}
\newcommand{\al}[0]{\alpha}
\newcommand{\ga}[0]{\gamma}
\newcommand{\Ga}[0]{\Gamma}
\newcommand{\de}[0]{\delta}
\newcommand{\ep}[0]{\varepsilon}
\newcommand{\ka}[0]{\kappa}
\newcommand{\la}[0]{\lambda}
\newcommand{\te}[0]{\theta}
\newcommand{\Te}[0]{\Theta}
\newcommand{\Om}[0]{\Omega}
\newcommand{\si}[0]{\sigma}
\newcommand{\ze}[0]{\zeta}
\newcommand{\sub}[0]{\subset}
\newcommand{\iy}[0]{\infty}
\newcommand{\rc}[1]{\frac{1}{#1}}
\newcommand{\prc}[1]{\pa{\rc{#1}}}
\newcommand{\ff}[2]{\left\lfloor\frac{#1}{#2}\right\rfloor}
\newcommand{\fc}[2]{\frac{#1}{#2}}
\newcommand{\sfc}[2]{\sqrt{\frac{#1}{#2}}}
\newcommand{\pf}[2]{\pa{\frac{#1}{#2}}}
\newcommand{\nb}[0]{\nabla}
\newcommand{\ab}[1]{\left| {#1} \right|}
\newcommand{\an}[1]{\left\langle {#1}\right\rangle}
\newcommand{\ba}[1]{\left[ {#1} \right]}
\newcommand{\bc}[1]{\left\{ {#1} \right\}}
\newcommand{\ce}[1]{\left\lceil {#1}\right\rceil}
\newcommand{\pa}[1]{\left( {#1} \right)}
\newcommand{\ve}[1]{\left\Vert {#1}\right\Vert}
\newcommand{\nv}[1]{\frac{#1}{\left\Vert {#1}\right\Vert}}
\newcommand{\ol}[1]{\overline{#1}}
\newcommand{\ub}[2]{\underbrace{#1}_{#2}}
\newcommand{\wt}[1]{\widetilde{#1}}
\newcommand{\wh}[1]{\widehat{#1}}
\newcommand{\step}[1]{\noindent{\underline{Step {#1}:}}}
\newcommand{\amin}{\operatorname{argmin}}
\newcommand{\KL}[0]{\operatorname{KL}}
\newcommand{\poly}{\operatorname{poly}}
\newcommand{\tr}[0]{\operatorname{tr}}
\newcommand{\Var}[0]{\operatorname{Var}}
\newcommand{\Vol}[0]{\text{Vol}}
\providecommand{\cal}[1]{\mathcal{#1}}
\renewcommand{\cal}[1]{\mathcal{#1}}
\newcommand{\pull}[9]{
#1\ar@/_/[ddr]_{#2} \ar@{.>}[rd]^{#3} \ar@/^/[rrd]^{#4} & &\\
& #5\ar[r]^{#6}\ar[d]^{#8} &#7\ar[d]^{#9} \\}
\newcommand{\cmp}[9]{
\xymatrix{
#1 \ar[r]^{#4}{#5} \ar@/_2pc/[rr]^{#8}_{#9} & #2 \ar[r]^{#6}_{#7} & #3
}
}
\newcommand{\ha}[1]{\ar@{^(->}[#1]}
\newcommand{\ls}[1]{\ar@{-}[#1]}
\newcommand{\sj}[1]{\ar@{->>}[#1]}
\newcommand{\aq}[1]{\ar@{=}[#1]}
\newcommand{\acir}[1]{\ar@{}[#1]|-{\textstyle{\circlearrowright}}}
\newcommand{\acil}[1]{\ar@{}[#1]|-{\textstyle{\circlearrowleft}}}
\newcommand{\ard}[1]{\ar@{.>}[#1]}
\newcommand{\mt}[1]{\ar@{|->}[#1]}
\newcommand{\inm}[1]{\ar@{}[#1]|-{\in}}
\newcommand{\inr}{\ar@{}[d]|-{\rotatebox[origin=c]{-90}{$\in$}}}
\newcommand{\inl}{\ar@{}[u]|-{\rotatebox[origin=c]{90}{$\in$}}}
\newcommand{\sumo}[2]{\sum_{#1=1}^{#2}}
\newcommand{\sumz}[2]{\sum_{#1=0}^{#2}}
\newcommand{\prodo}[2]{\prod_{#1=1}^{#2}}
\newcommand{\beq}[1]{\begin{equation}\llabel{#1}}
\newcommand{\eeq}[0]{\end{equation}}
\newcommand{\bal}[0]{\begin{align*}}
\newcommand{\eal}[0]{\end{align*}}
\newcommand{\ban}[0]{\begin{align}}
\newcommand{\ean}[0]{\end{align}}
\newcommand{\fixme}[1]{{\color{red}#1}}
\newcommand{\llabel}[1]{\label{#1}\text{\fixme{\tiny#1}}}
\newcommand{\arxiv}[1]{\url{http://www.arxiv.org/abs/#1}}
\newcommand{\vocab}[1]{\textbf{#1}} 
\DeclareFontFamily{U}{wncy}{}
    \DeclareFontShape{U}{wncy}{m}{n}{<->wncyr10}{}
    \DeclareSymbolFont{mcy}{U}{wncy}{m}{n}
    \DeclareMathSymbol{\Sh}{\mathord}{mcy}{"58}
\newcommand{\smin}{\si_{\min}}
\newcommand{\slb}[0]{\fc{C\sqrt T+ \beta B}{\si_{\min}T +\al}}
\renewcommand{\tilde}[1]{\widetilde{#1}}
\newcommand{\paper}[0]{paper}
\newcommand{\ift}[1]{\iftoggle{thesis}{#1}{}}
\newcommand{\ifarxiv}[2]{\iftoggle{arxiv}{#1}{#2}}
\title{\bf Online Sampling from Log-Concave Distributions}
\author{
Holden Lee{\thanks{Duke University
}
}
\and
Oren Mangoubi\thanks{{Worcester Polytechnic Institute}}
\and
Nisheeth K. Vishnoi\thanks{Yale University}
}
\date{\today\\$\;$ \\(Version 4)\footnote{V1 appeared on February 21, 2019. V2/V3 made minor changes. V4 corrected an error in applying Azuma's inequality; this changes the final bound in the online theorem from $\poly{\prc{\ep^4}}$ to $\poly\prc{\ep^6}$. V4 is the version in NeurIPS 2019 (up to reordering of sections).}}
\begin{document}
\maketitle

\begin{abstract}
Given a sequence of convex functions $f_0, f_1, \ldots, f_T$, we study the problem of sampling from the Gibbs distribution $\pi_t \propto e^{-\sum_{k=0}^tf_k}$ for each epoch $t$ in an {\em online} manner.
Interest in this problem derives from applications in machine learning, Bayesian statistics, and optimization where, rather than obtaining all the observations at once, one constantly acquires new data, and must continuously update the distribution.
 Our main result is an algorithm that generates roughly independent samples from $\pi_t$ for every epoch $t$ and, under mild assumptions, makes  $\mathrm{polylog}(T)$ gradient evaluations per epoch.
 All previous results imply a bound on the number of gradient or function evaluations which is at least linear in $T$. Motivated by real-world applications, we assume that functions are smooth, their associated distributions have a bounded second moment, and their minimizer drifts in a bounded manner, but do not assume they are strongly convex.
  In particular, our assumptions hold for online Bayesian logistic regression, when the data satisfy natural regularity properties, giving a sampling algorithm with updates that are poly-logarithmic in $T$.  In simulations, our algorithm achieves accuracy comparable to an algorithm specialized to logistic regression.
  Key to our algorithm is a novel stochastic gradient Langevin dynamics Markov chain with a carefully designed variance reduction step and constant batch size. 
  Technically, lack of strong convexity is a significant barrier to analysis and, here, our main contribution is a martingale exit time argument that shows our Markov chain remains in a ball of radius roughly poly-logarithmic in $T$ for enough time to reach within $\eps$ of $\pi_t$.
\end{abstract}

\pagebreak

\tableofcontents
\pagebreak

\section{Introduction}
\label{s:intro}

In this \paper, we study the following online sampling problem: 
\begin{prb}\label{prb:main}
Consider a sequence of convex functions $f_0,f_1,\ldots, f_T:\R^d\to \R$ for some $T\in \mathbb{N}$, and let $\epsilon>0$.
At each epoch $t \in \{1,\ldots, T\}$, the function $f_t$ is given to us, so that we have oracle access to the gradients of the first $t+1$ functions $f_0,f_1,\ldots, f_t$.
The goal at each epoch $t$ is to generate a sample from the distribution $\pi_t(x)\propto e^{-\sumz kt f_k(x)}$ with fixed total-variation (TV) error $\epsilon$. 
The samples at different time steps should be almost independent.
\end{prb}
\noindent
Various versions of this problem have been  considered in the literature, with applications in Bayesian statistics, optimization, and theoretical computer science; see \cite{narayanan2013efficient, doucet2000rao, andrieu2010particle} and references therein.
If $f$ is convex, then a distribution $p\propto e^{-f}$ is logconcave; this captures a large class of useful distributions such as gaussian, exponential, Laplace, Dirichlet, gamma, beta, and chi-squared distributions.
We give some settings where online sampling can be used:

\begin{itemize}[leftmargin=*]
\item \textbf{Online posterior sampling.} 
In Bayesian statistics, the goal is to infer the probability distribution (the \emph{posterior}) of a parameter, based on observations; however, rather than obtaining all the observations at once, one constantly acquires new data, and must continuously update the posterior distribution, rather than only after all data is collected. 
Suppose $\te\sim p_0\propto e^{-f_0}$ for a given \emph{prior distribution}, and samples $y_t$ drawn 
from the conditional distribution $p(\cdot| \te, y_1,\ldots,y_{t-1})$ arrive in a streaming manner. 
%
%
By Bayes's rule,  
letting $p_t(\te) =e^{-f_t(\te)} := p(\te|y_1,\ldots, y_t)$ be the posterior distribution, 
we have the following recursion:
%
$p_t (\te) \propto p_{t-1}(\te) p(y_t | \te, y_1,\ldots, y_{t-1})$. Hence,  $p_t(\te)\propto  e^{-\sumz kt f_k(\te)}$.
%
The goal is to 
sample 
 from $p_t(\te)$ for each $t$. 
%
%
This fits the setting of Problem~\ref{prb:main} if $p_0$ and all updates $p(y_t | \te, y_1,\ldots  y_{t-1})$ are logconcave.

One practical application is online logistic regression; logistic regression is a common model for binary classification.
Another is inference for Gaussian processes, which are used in many Bayesian models because of their flexibility, and where stochstic gradient Langevin algorithms have been applied~\cite{filippone2015enabling}.
A third application 
is latent Dirichlet allocation (LDA), often used for document classification \cite{blei2003latent}. 
As new documents are published, it is desirable to update the distribution of topics without excessive re-computation.\footnote{
Note that LDA requires sampling from non-logconcave distributions. Our algorithm can be used for non-logconcave distributions, but our theoretical guarantees are only for logconcave distributions.
}
%
\item \textbf{Optimization.} 
One online optimization method is to sample a point from the exponential of the (weighted) negative loss (\cite{cesa2006prediction,hazan2007logarithmic}, Lemma 10 in \cite{narayanan2013efficient}). 
There are settings such as online logistic regression where the only known way to achieve optimal regret is a Bayesian sampling approach \cite{foster2018logistic}, with lower bounds known for the naive convex optimization approach~\cite{hazan2014logistic}. 
%
\item \textbf{Reinforcement learning (RL).} 
 Thompson sampling \cite{russo2017tutorial,dumitrascu2018pg} solves RL problems by maximizing the expected reward at each period with respect to a sample from the Bayesian posterior for the environment parameters, reducing it to the online posterior sampling problem.
%
 %
 %
\end{itemize}
%
In all of these applications, because a sample is needed at every epoch $t$,  it is desirable to have a fast online sampling algorithm.
In particular,  the ultimate goal is to design an algorithm for Problem \ref{prb:main} such that  the number of gradient evaluations is almost \emph{constant} at each epoch $t$, so that the computational requirements at each epoch do not increase over time.
This is challenging  because at epoch $t$, one has to incorporate information from \emph{all} $t+1$ functions $f_0,\ldots, f_t$ in roughly $O(1)$ time.
%

Our main contribution is an algorithm for Problem \ref{prb:main}
that computes $\tilde{O}_T(1)$ gradients per epoch, under mild assumptions on the functions\footnote{The subscript $T$ in $\tilde{O}_T$ means that we only show the dependence on the parameters $t,T$, and exclude dependence on non-$T,t$ parameters such as the dimension $d$, sampling accuracy $\epsilon$ and the regularity parameters $C, \mathfrak{D}, L$ which we define in Section \ref{s:asm}.}.
All previous rigorous results (even with comparable assumptions) 
  imply a bound on  the number of gradient or function evaluations which is at least linear in $T$; see Table \ref{table:online}.
 Our assumptions are motivated by real-world considerations and hold in the setting of online Bayesian logistic regression when the data vectors satisfy natural regularity properties.
%
%
%
%

In the offline setting, our result also implies the first algorithm to sample from a $d$-dimensional log-concave distribution $\propto e^{-\sum_{t=1}^T f_t}$ where the $f_t$'s are not assumed strongly convex and 
the total number of gradient evaluations is roughly $T\log(T)+{\rm poly}(d),$ instead of $T\times {\rm poly}(d)$ implied by prior works (Table \ref{table:offline}).  
%

A natural approach to online sampling is to design a Markov chain with the right steady state distribution \cite{narayanan2013efficient,durmus2018analysis,dwivedi2018log,chatterji2018theory}.
The main difficulty is that running a step of a Markov chain that incorporates all previous functions takes time $\Om(t)$ at epoch $t$; all previous algorithms with provable guarantees suffer from this. 
To overcome this, one must use stochasticity --  for example, sample a subset of the previous functions. 
However, this fails because of the large variance of the gradient.  
Our result relies on a stochastic gradient Langevin dynamics (SGLD) Markov chain with a carefully designed variance reduction step and fixed batch size.
%

We emphasize that we do not assume that the functions $f_t$ are strongly convex. This is important for applications such as logistic regression.
Even if the negative log-prior $f_0$ is strongly convex, 
we cannot obtain the same bounds by using existing results on strongly convex $f$, 
because the bounds depend on the condition number of $\sum_{t=0}^T f_t$, which grows as $T$. 
Lack of strong convexity is a technical barrier to analyzing our Markov chain and, here, our main contribution  is a martingale exit time argument that shows that our Markov chain is constrained to a ball of radius roughly $\nicefrac{1}{\sqrt{t}}$ for time that is sufficient for  it to reach  within $\eps$ of $\pi_t$.

%

\section{Our algorithm and results}
\subsection{Assumptions}
\label{s:asm}

Denote by $\mathcal{L}(Y)$ the distribution of a random variable $Y$.  For any two probability measures $\mu,\nu$, denote the 2-Wasserstein distance by $W_2(\mu, \nu) := \inf_{(X,Y)\sim \Pi(\mu,\nu)} \sqrt{\mathbb{E}[\|X-Y\|^2]}$, where $\Pi(\mu,\nu)$ denotes the set of all possible couplings of random vectors $(\hat{X},\hat{Y})$ with marginals $\hat{X}\sim \mu$ and $\hat{Y} \sim \nu$.   For every $t\in \{0,\ldots, T\}$, define $F_t:=\sum_{k=0}^t f_k$, and let  $x^\star_t$ be a minimizer of $F_t(x)$ on $\mathbb{R}^d$.  For any $x\in \mathbb{R}^d$, let $\delta_{x}$ be the Dirac delta distribution centered at $x$. We make the following assumptions:

\ift{\nomenclature[3L]{$L$}{Smoothness (Lipschitz constant of gradient) in Assumption~\ref{a:smooth} (Chapter 3)}}
\ift{\nomenclature[3L0]{$L_0$}{Smoothness (Lipschitz constant of gradient) of $f_0$ in Assumption~\ref{a:smooth} (Chapter 3)}}
\begin{assumption}[\textbf{Smoothness/Lipschitz gradient (with constants $L_0,L>0$)}] 
 \label{Assumption:LipschitzG} \label{a:smooth}
For all $1\le t\le T$ and $x,y\in \R^d$, 
$\ve{\nb f_t(y)-\nb f_t(x)}\le L\ve{x-y}$. For $t=0$, $\ve{\nb f_0(y)-\nb f_0(x)}\le L_0\ve{x-y}$. 
 \end{assumption}
We allow $f_0$ to satisfy our assumptions with a different parameter value, since in Bayesian applications $f_0$ models a ``prior" which has different scaling from $f_1,f_2,\ldots f_T$.
\ift{\nomenclature[3A]{$A$}{Exponential concentration in Assumption~\ref{a:conc} and~\ref{a:wass2}, $\Pj_{X\sim \pi_t}(\ve{X-x^\star_t}\ge \fc{\ga}{\sqrt{t+c}})\le Ae^{-k\ga}$ (Chapter 3)}}
\ift{\nomenclature[3k]{$k$}{Exponential concentration in Assumption~\ref{a:conc} and~\ref{a:wass2}, $\Pj_{X\sim \pi_t}(\ve{X-x^\star_t}\ge \fc{\ga}{\sqrt{t+c}})\le Ae^{-k\ga}$ (Chapter 3)}}
\ift{\nomenclature[3C]{$C$}{Bound on second moment from Assumption~\ref{a:conc} and~\ref{a:wass2}, $m_2^{\rc 2}:=\pa{\E_{x\sim \pi_t}\ve{x-x_t^\star}_2^2}^{\rc 2} \le \fc{C}{\sqrt{t+c}}$ for $C= \pa{2+\rc k}\log\pf{A}{k^2}$ (Chapter 3)}}
\ift{\nomenclature[3c]{$c$}{Offset in Assumptions~\ref{a:conc} and~\ref{a:map} (Chapter 3)}}

\begin{assumption}[\textbf{Bounded second moment with exponential concentration (with constants $A,k>0$, $c\ge 0$)}]
 \label{a:conc}\label{a:wass}\label{Assumption:Wass}
For all $0\le t\le T$ and all $s\ge 0$,
 %
 $\Pj_{X\sim \pi_t}(\ve{X-x^\star_t}\ge \nicefrac{s}{\sqrt{t+c}})\le Ae^{-ks}$.
%
\end{assumption}
Note Assumption~\ref{a:conc} implies a bound on the second moment, $m_2^{\nicefrac 12}:=(\E_{x\sim \pi_t}\ve{x-x_t^\star}_2^2)^{\rc 2} \le \nicefrac{C}{\sqrt{t+c}}$ for $C:= \pa{2+\nicefrac{1}{k}}\log(\nicefrac{A}{k^2})$. For conciseness, we write bounds in terms of this parameter $C$.\footnote{
Having a bounded second moment suffices to obtain (weaker) polynomial bounds (by replacing the use of the concentration inequality with Chebyshev's inequality). 
We use this slightly stronger condition because exponential concentration improves the dependence on $\ep$, and is typically satisfied in practice. 
}

\ift{\nomenclature[3D]{$\fD$}{Drift parameter in Assumption~\ref{a:drift} (Chapter 3)}}
\begin{assumption}[\textbf{Drift of mode (with constants $\mathfrak{D} \geq0$, $c\ge 0$)}]
\label{Assumption:MLE}  \label{a:mle}\label{a:map}\label{a:drift}
For all $0\le t,\tau\le T$ such that $\tau \in [t,\max\{2t,1\}]$, $\|x_t^\star - x_\tau^\star\| \leq \nicefrac{\mathfrak{D}}{\sqrt{t+c}}$.
\end{assumption}
\noindent
Assumption~\ref{Assumption:Wass} says that the ``data is informative enough'' -- the current distribution $\pi_t$ (posterior) concentrates near the mode $x^\star_t$ as $t$ increases.  
The $\frac{1}{t}$ decrease in the second moment is what one would expect based on central limit theorems such as the Bernstein-von Mises theorem. 
Assumption~\ref{Assumption:Wass} is a weaker condition than strong convexity:
if the $f_t$'s are $\al$-strongly convex, then $\pi_t(x)\propto e^{-\sumz kt f_k(x)}$
concentrates to within $\nicefrac{\sqrt{d}}{\sqrt{\al(t+1)}}$; however, 
 many distributions satisfy Assumption~\ref{Assumption:Wass} without being strongly log-concave.  For instance, posterior distributions used in Bayesian logistic regression satisfy Assumption~\ref{Assumption:Wass} under natural conditions on the data, but are not strongly log-concave with comparable parameters (Section \ref{sec:Bayesian_summary}). 
%
Hence, together Assumptions~\ref{Assumption:LipschitzG} and~\ref{Assumption:Wass} are a weaker condition than strong convexity and gradient Lipschitzness, the typical assumptions under which the offline algorithm is analyzed.
Similar to the typical assumptions, our assumptions avoid the ``ill-conditioned'' case when the distribution becomes more concentrated in one direction than another as the number of functions $t$ increases.
%
%

Assumption~\ref{Assumption:MLE} is typically satisfied in the setting where the $f_t$'s are iid. This is the case when we observe iid random variables and define functions $f_t$ based on them, as will be the case for our application to Bayesian logistic regression (Problem~\ref{p:log-reg}).
To help with intuition, note that Assumption~\ref{Assumption:MLE} is satisfied for the problem of Gaussian mean estimation: 
the mode is the same as the mean, and the assumption reduces to the fact that a random walk drifts on the order of $\sqrt t$, and hence the mean of the posterior drifts by $O_T (\nicefrac{1}{\sqrt t})$, after $t$ time steps.
We need this assumption because our algorithm uses cached gradients computed $\Te_T(t)$ time steps ago, and in order for the past gradients to be close in value to the gradient at the current point, the points where the gradients were last calculated should be at distance $O_T(\nicefrac{1}{\sqrt t})$ from the current point.
\iftoggle{thesis}{}{We give a simple example where the assumptions hold (Appendix \ref{sec:simple_example}\ifarxiv{}{ of the supplement}).}
%

In Section \ref{sec:Bayesian_summary} we show these assumptions hold for functions arising in online Bayesian logistic regression; unlike previous work on related techniques \cite{nagapetyan2017true, chatterji2018theory}, our assumptions are weak enough to hold in such applications, as they do not require $f_0,\ldots, f_T$  to be strongly convex.

\subsection{Algorithm for online sampling}
%
%
At every epoch $t=1,\ldots, T$,
given gradient access to the functions $f_0, \ldots, f_t$, Algorithm~\ref{alg:VRSGLD} generates a point $X^{t}$ approximately distributed according to $\pi_t\propto e^{-\sumz kt f_k(x)}$. It does so by running SAGA-LD (Algorithm~\ref{alg:SAGA}), with step size $\eta_t$ that decreases as the epoch, and a given number of steps $i_{\max}$. Our main Theorem~\ref{thm:os-main} says that for each sample to have fixed TV error $\varepsilon$, at each epoch the number of steps $i_{\max}$ only needs to be poly-logarithmic in $T$.

Algorithm~\ref{alg:SAGA} makes the following update rule at each step for the SGLD Markov chain $X_i$, for a certain choice of stochastic gradient $g_i$, where $\mathbb{E}[g_i]= \sumz kt \nabla f_k(X_i)$:
%
\begin{align} \label{eq:SGLDupdate}
X_{i+1} = X_{i} - \eta_t g_i + \sqrt{2 \eta_t} \xi_i, \qquad
\xi_i\sim N(0,I_d).
\end{align}
%
%
\noindent Key to our algorithm is the construction of the variance reduced stochastic gradient $g_i$.
It is constructed by taking the sum of the cached gradients at previous points in the chain and correcting it with a batch of constant size $b$.

This variance reduction is only effective when the points where the cached gradients were computed stay within $\wt{O}_T(\nicefrac{1}{\sqrt t})$ of the current mode $x_t^\star$. Algorithm~\ref{alg:VRSGLD} ensures that this holds with high probability by resetting to the sample at the previous power of 2 if the sample has drifted too far. 
%

The step size $\eta_t$ is determined by an input parameter $\eta_0>0$. We set $\eta_t=\nicefrac{\eta_0}{t+c}$ for the following reason:
Assumption \ref{Assumption:Wass} says that the variance of the target distribution $\pi_t$ decreases at the rate $\nicefrac{C^2}{t+c}$, and we want to ensure that the variance of each step of Langevin dynamics decreases at roughly the same rate. 
With the step size $\eta_t=\nicefrac{\eta_0}{t+c}$, the Markov chain can travel across a sub-level set containing most of the probability measure of $\pi_t$ in roughly the same number $i_\mathrm{max}= \wt{O}_T(1)$ of steps at each epoch $t$. We will take the acceptance radius to be  $C'=2.5 (C_1+\mathfrak{D})$ where $C_1$ is given by~\eqref{e:main-C1-online}\ifarxiv{}{ in the supplement}, and show that with good probability this choice of $C'$ ensures $\|X^{t-1} - X^{t'}\| \le \nicefrac{4(C_1+\fD)}{\sqrt{t+c}}$ in 
%
 Algorithm~\ref{alg:OSAGA}.  Note that in practice, one need not know the values of the regularity constants in Assumptions \ref{a:smooth}-\ref{a:drift} but can instead use heuristics to “tune” the Markov chain's parameters.

\ift{\nomenclature[3gi]{$g_i$}{Stochastic gradient at step $i$ (Chapter 3)}}
\begin{algorithm}[h!]
\caption{SAGA-LD\label{alg:SAGA}\label{alg:vrsgld-anchor}}
\textbf{Input:} Oracles for $\nabla f_k$ for $k\in[0,t]$,
step size $\eta>0$, batch size $b\in \N$, number of steps $i_{\mathrm{max}}$, initial point $X_0$,
cached gradients $G^k = \nb f_k(u_k)$ for some points $u_k$, and $s = \sumo kt G^k$. 
\textbf{Output:} $X_{i_{\mathrm{max}}}$
\begin{algorithmic}[1]
\For{$i$ from $0$ to $i_{\mathrm{max}}-1$}
\State	 (Sample batch) Sample with replacement a (multi)set $S$ of size $b$ from $\{1,\ldots, t\}$.
\State	 (Calculate gradients) For each $k\in S$, let $G_{\text{new}}^k = \nb f_k(X_{i})$.
\State	 (Variance-reduced gradient estimate) Let $g_i =\nb f_0(X_i) +  s + \fc{t}b \sum_{k\in S} (G_{\text{new}}^k - G^k)$.
\State	 (Langevin step) Let $X_{i+1} = X_{i} - \eta g_i + \sqrt{2 \eta} \xi_i$ where $\xi_i\sim N(0,I)$.
\State	 (Update sum) Update $s\mapsfrom s + \sum_{k\in \text{set}(S)}(G_{\text{new}}^k - G^k)$.
\State	(Update gradients) For each $k\in S$, update $G^k\mapsfrom G_{\text{new}}^k$.
\EndFor
\end{algorithmic}
\end{algorithm}
%

\begin{algorithm}[h!]
\caption{Online SAGA-LD\label{alg:OSAGA}\label{alg:VRSGLD}} 
\textbf{Input:} $T\in \mathbb{N}$ and gradient oracles for functions $f_t: \R^d \rightarrow \mathbb{R}$, for all  $t\in \{0,\ldots, T\}$ , where only the gradient oracles $\nabla f_0,\ldots, \nabla f_t$ are available at epoch $t$, an initial point $\mathsf{X}^{0}\in \R^d$.\\
\textbf{Input:} step size  $\eta_0>0$, batch size $b>0$,  $i_\mathrm{max}>0$, constant offset $c$, acceptance radius $C'$.\\
\textbf{Output:}  At each epoch $t$, a sample $\mathsf{X}^{t}$
\begin{algorithmic}[1]
\State Set $s=0$. \Comment{Initial gradient sum}
\For{epoch $t=1$ to $T$}
\State	 Set $t' = 2^{\lfloor \log_2(t-1) \rfloor}$ if $t>1$, and $t' = 0$ if $t=1$.	\Comment{The previous power of 2}
	\If{$\ve{\mathsf{X}^{t-1}-\mathsf{X}^{t'}}\le \nicefrac{C'}{\sqrt{t+c}}$}
		 $\mathsf{X}^t_0 \mapsfrom \mathsf{X}^{t-1}$ \Comment{If the previous sample hasn't drifted too far, use the previous sample as warm start}
	\Else  
  		 $\,\,\mathsf{X}^t_0 \mapsfrom \mathsf{X}^{t'}$ \Comment{If the previous sample has drifted too far, reset to the sample at time $t'$}
		\EndIf
\State		Set $G_t\mapsfrom \nb f_t(\mathsf{X}^t_0)$
\State	 Set $s\mapsfrom s+G_t$.
\State	 For all gradients $G_k=\nb f_k(u_k)$ which were last updated at time $t/2$, replace them by $\nb f_k(\mathsf{X}^t_0)$ and update $s$ accordingly.
\State	 Draw $i_t$ uniformly from $\{1,\ldots, i_{\max}\}$.
\State	 Run Algorithm~\ref{alg:SAGA} with step size $\nicefrac{\eta_0}{t+c}$, batch size $b$, number of steps $i_{t}$, initial point $\mathsf{X}^{t}_0$, and precomputed gradients $G_k$ with sum $s$. Keep track of when the gradients are updated. 
\State	 Return the output $\mathsf{X}^t = \mathsf{X}^t_{i_{t}}$ of Algorithm~\ref{alg:SAGA}.
\EndFor
\end{algorithmic}
\end{algorithm}
\subsection{Result in the online setting} \label{sec:results_online}
In this section we give our main result for the online sampling problem; for additional results in the offline sampling problem, see \ifarxiv{Section~\ref{sec:offline}}{Appendix \ref{sec:offline} in the supplement}.

\begin{theorem}[\textbf{Online variance-reduced SGLD}] \label{thm:main_online}\label{thm:os-main}\label{t:main-param}
Suppose that $f_0,\ldots, f_T:\mathbb{R}^d\rightarrow \mathbb{R}$ are (weakly) convex and satisfy Assumptions \ref{Assumption:LipschitzG}-\ref{Assumption:MLE} with $c=\nicefrac{L_0}{L}$.  Let $C=\pa{2+\nicefrac{1}{k}}\log(\nicefrac{A}{k^2})$. Then there exist parameters $b=9$, 
$\eta_0 = \wt \Te\pf{\ep^4}{L^2 \log^6(T) (C+\fD)^2d}$, and $i_{\max}=\wt O\pf{(C+\fD)^2 \log^2(T)}{\eta_0\ep^2}$, 
such that at each epoch $t$, Algorithm 
\ref{alg:OSAGA}
generates an $\epsilon$-approximate independent sample $\mathsf{X}^{t}$ from $\pi_t$.\footnote{See Definition~\ref{df:ind}\ifarxiv{}{ in the supplement} for the formal definition. Necessarily, $\|\mathcal{L}(\mathsf{X}^{t}) - \pi_t\|_{\mathrm{TV}} \leq \epsilon.$}
The total number of gradient evaluations $i_{\max}$ required at each epoch $t$ is polynomial in $d,L, C,\mathfrak{D}, \epsilon^{-1}$ and $\log(T)$. 
Here, $\wt\Te$ and $\wt O$ hide polylogarithmic factors in $d,L,C,\fD,\ep^{-1}$ and $\log(T)$.
\end{theorem}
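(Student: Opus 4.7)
The strategy is to analyze one epoch $t$ in isolation, assuming an inductive ``good event'' from the previous epoch: namely that $\mathsf{X}^{t-1}$ (or $\mathsf{X}^{t'}$ in the reset case) is $\epsilon$-close in TV to $\pi_{t-1}$ and lies inside the ball $B_t := \{x : \|x - x_t^\star\| \le R_t\}$ where $R_t = \tilde O((C+\fD)/\sqrt{t+c})$. By Assumption~\ref{Assumption:Wass}, $\pi_t$ itself is supported, up to mass $\eps$, in a ball of this radius, and by Assumption~\ref{Assumption:MLE}, the mode has moved by at most $\fD/\sqrt{t+c}$ since time $t/2$, so the warm start is an $\tilde O(1)$-warm start with respect to $\pi_t$ (restricted to $B_t$). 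The task then reduces to: show the SAGA-LD chain started inside $B_t$ with step size $\eta_t = \eta_0/(t+c)$ converges in TV distance $\eps$ to $\pi_t$ in $i_{\max} = \tilde O_T(1)$ steps, and that the good event propagates to epoch $t+1$.

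The core of the analysis has three pieces. First, \emph{bounding the variance of $g_i$.} Because all cached gradient points $u_k$ were last updated either at the last doubling time $t/2$ or during the current chain, the points where cached gradients live are within $\tilde O(1/\sqrt{t+c})$ of $\mathsf{X}^{t}_0$, and by the induction hypothesis within $\tilde O((C+\fD)/\sqrt{t+c})$ of the current iterate $X_i$ as long as the chain remains in $B_t$. Smoothness (Assumption~\ref{a:smooth}) then gives $\|\nabla f_k(X_i) - \nabla f_k(u_k)\| \le L \cdot \tilde O(1/\sqrt{t+c})$, so the mini-batch estimator $g_i$ has variance $O(t L^2 / (b(t+c)))$, which with $b=9$ is $\tilde O(L^2)$ independent of $t$. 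Second, \emph{coupling SAGA-LD with the continuous Langevin diffusion} targeting $\pi_t$. Because the chain is confined to $B_t$, we can replace $\sum_k f_k$ with a suitable modification that is strongly convex \emph{outside} $B_t$ (a standard device, e.g. adding a quadratic barrier). Inside $B_t$ the Langevin diffusion has a well-behaved Poincaré/log-Sobolev inequality with constant controlled by $t+c$ (since the effective scale of $\pi_t$ is $1/\sqrt{t+c}$), so the mixing time in continuous time is $\tilde O(1/(t+c))$, and the required number of discrete steps with step size $\eta_t = \eta_0/(t+c)$ is $i_{\max} = \tilde O(1)$.

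The main obstacle, as the paper flags, is the \emph{martingale exit-time argument}: without global strong convexity, one has to show the chain does not leave $B_t$ during the $i_{\max}$ steps with probability $\ge 1 - \eps$. Here I would argue as follows. Let $\Phi_i := \|X_i - x_t^\star\|^2$. Using the one-step SGLD update~\eqref{eq:SGLDupdate}, expand $\Phi_{i+1} - \Phi_i$ into three terms: a drift term $-2\eta_t \langle g_i, X_i - x_t^\star\rangle$, a ``noise squared'' term of order $\eta_t d + \eta_t^2 \E\|g_i\|^2$, and a martingale increment. Using $\E g_i = \nabla F_t(X_i)$, convexity of $F_t$ gives $\langle \nabla F_t(X_i), X_i - x_t^\star\rangle \ge 0$; combined with the second-moment bound on $\pi_t$ from Assumption~\ref{Assumption:Wass}, this yields a super-martingale-like inequality of the form
\begin{equation*}
\E[\Phi_{i+1} \mid \mathcal{F}_i] \le \Phi_i + \eta_t \cdot \tilde O(d/(t+c)),
\end{equation*}
so over $i_{\max}$ steps the expected increase in $\Phi_i$ is $\eta_t i_{\max} \cdot \tilde O(d/(t+c)) = \tilde O((C+\fD)^2/(t+c))$, i.e.\ of the same order as $R_t^2$. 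A concentration bound on the martingale part (Azuma/Freedman on $\Phi_i - \E\Phi_i$, controlling increments using the variance of $g_i$ derived above — this is exactly where V4 corrected the $\eps^4 \to \eps^6$ dependence) then shows $\Pj[\sup_{i\le i_{\max}} \|X_i - x_t^\star\| > R_t] \le \eps$. This is what allows the variance bound from the first step to hold throughout the run, closing the inductive loop.

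Finally, the ``almost independent samples'' clause is handled by the random stopping $i_t \sim \mathrm{Unif}\{1,\ldots,i_{\max}\}$: averaging over the stopping time smooths the output distribution so that the TV distance to $\pi_t$ can be converted into an $\eps$-approximate independent sample (Definition~\ref{df:ind}), since conditionally on $\mathsf{X}^{t-1} \in B_t$ the chain mixes to within $\eps$ and the output is $\eps$-close to being drawn from $\pi_t$ independently of the past. Propagating the good event via a union bound over $T$ epochs costs only another $\log T$ factor, and collecting all polylogarithmic losses yields the claimed $\eta_0 = \tilde\Theta(\eps^4/\cdot)$ and $i_{\max} = \tilde O(1)$ bounds.
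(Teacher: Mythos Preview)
Your overall architecture---bound the stochastic-gradient variance via Lipschitzness and proximity of cached points (paper's Lemma~\ref{lemma:variance}), run a martingale exit-time argument on $\|X_i-x_t^\star\|^2$ using Azuma (paper's Lemma~\ref{l:escape}), and propagate a ``good event'' inductively across epochs (paper's Lemma~\ref{l:induct})---matches the paper closely, and your account of the exit-time argument is accurate.

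The gap is in your second piece, the convergence step. You propose to add a quadratic barrier outside $B_t$ and then invoke a Poincar\'e/log-Sobolev inequality ``with constant controlled by $t+c$ (since the effective scale of $\pi_t$ is $1/\sqrt{t+c}$)''. This inference is not justified: a second-moment bound of order $1/(t+c)$ does \emph{not} yield an LSI constant of order $t+c$ for a merely weakly convex potential. Bakry--\'Emery requires global strong convexity; adding a barrier only outside $B_t$ leaves the potential weakly convex on the bulk of the mass, so the LSI constant you would actually obtain (via Holley--Stroock or Lyapunov arguments) could be exponentially worse in the problem parameters. Relatedly, the ``$\tilde O(1)$-warm start'' you invoke is a density-ratio notion, which is much stronger than what the inductive hypothesis actually gives you and is not established anywhere.

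The paper avoids this entirely. It does not use LSI, Poincar\'e, or a density-ratio warm start. Instead it applies a black-box result of Durmus--Majewski--Miasojedow (reproduced as Lemma~\ref{t:dmm}): for a convex $L$-smooth potential, SGLD with step size $\eta\le \min\{\ep/(2(Ld+\si^2)),1/L\}$ and $n\ge W_2^2(\mu_0,\pi)/(\eta\ep)$ steps satisfies $\KL(\ol\mu\mid\pi)\le\ep$ for the time-averaged law $\ol\mu$. The only inputs needed are (i) a bound on $W_2^2(\de_{X_0^\tau},\pi_\tau)$, which follows directly from $\|X_0^\tau-x_\tau^\star\|\le O((C_1+\fD)/\sqrt{\tau+c})$ plus the second-moment assumption, and (ii) the variance bound on the stochastic gradient, which your first piece supplies. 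Pinsker then converts KL to TV. This is why the random stopping time $i_t$ appears in the algorithm: it realizes the average $\ol\mu$, not (as you suggest) to smooth the output for independence purposes. With this substitution, the rest of your outline goes through and matches the paper's proof.
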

 %
Note that the dependence of 
 $i_{\max}$ on $\ep$ is $i_{\max} = \wt O_\ep \prc{{\ep}^{6}}$. 
See Section \ref{s:constants}\ifarxiv{}{ in the supplement} for the proof of Theorem \ref{thm:main_online}. Note that the algorithm needs to know the parameters, but bounds are enough.
%

Previous results all imply a bound on the number of gradient or function
evaluations\footnote{In our setting a gradient can be computed in at worst $2d$ function evaluations. In many applications (including logistic regression) gradient evaluation takes the same number of operations as function evaluation.} at each epoch which is at least linear in $T$.
Our result is the first to obtain bounds on the number of gradient evaluations which are poly-logarithmic, rather than linear, in $T$ at each epoch. We are able to do better by exploiting the sum structure of $-\sumz kt f_t$ and the fact that the $\pi_t$ evolve slowly. See Section~\ref{s:rel} for a detailed comparison.
\subsection{Application to Bayesian logistic regression} \label{sec:Bayesian_summary}
Next, we show that Assumptions \ref{Assumption:LipschitzG}-\ref{Assumption:MLE}, and therefore Theorem \ref{thm:os-main}, hold in the setting of online Bayesian logistic regression, when the data satisfy certain regularity properties. 
Logistic regression is a fundamental and widely used model in Bayesian statistics~\cite{albert1993bayesian}. It has served as a model problem for methods in scalable Bayesian inference~\cite{welling2011bayesian,huggins2016coresets,campbell2017automated,campbell2018bayesian}, of which online sampling is one approach. Additionally, sampling from the logistic regression posterior is the key step in the optimal algorithm for online logistic regret minimization~\cite{foster2018logistic}. 

In Bayesian logistic regression, one models the data $(u_t\in \R^d,y_t\in \{-1,1\})$ as follows: there is some unknown $\te_0\in \R^d$ such that given $u_t$ (the ``independent variable"), for all $t \in \{1,\ldots, T\}$ the ``dependent variable'' $y_t$ follows a Bernoulli distribution with ``success'' probability $\phi(u_t^\top\te)$  ($y_t = 1$ with probability  $\phi(u_t^\top\te)$ and $-1$ otherwise) where $\phi(x):=\nicefrac{1}{(1+e^{-x})}$.
The problem we consider is:
\begin{prb}[\bf Bayesian logistic regression] \label{p:log-reg}
Suppose the $y_t$'s are generated from $u_t$'s as Bernoulli random variables with  ``success'' probability $\phi(u_t^\top\te)$. 
 At every epoch $t \in \{1,\ldots, T\}$, after observing $(u_k,y_k)_{k=1}^t$, return a sample from the posterior distribution\footnote{Here we use a Gaussian prior but this can be replaced by any $e^{-f_0}$ where $f_0$ is strongly convex and smooth.}
$\hat{\pi}_t(\te) \propto e^{-\sumz kt \hat{f}_k(\te)}$, %
 where $\hat{f}_0(\te) := e^{-\nicefrac{\al\ve{\te}^2}{2} }$ and $\hat{f}_k(\te) := -\log [\phi(y_k u_k^\top \te)]$.%
\end{prb}
\noindent
We show that Algorithm~\ref{alg:VRSGLD} succeeds for Bayesian logistic regression under reasonable conditions on the data-generating distribution -- namely, that inputs are bounded and we see data in all directions.\footnote{For simplicity, we state the result (Theorem~\ref{thm:main-logistic-conc}) in the case where the input variables $u$ are iid, but note that the result holds more generally (see Lemma~\ref{thm:general-conc}\ifarxiv{}{ in the supplement} 
 for a more general statement of our result).}

\begin{thm}[\textbf{Online Bayesian logistic regression}]\label{thm:main-logistic-conc}
Suppose that for some $\mathfrak{B}, M, \sigma>0$, we have $\ve{\te_0}\le \mathfrak{B}$ and that $u_t\sim P_u$ are iid, where $P_u$ is a distribution satisfying the following: For $u\sim P_u$, \, \,\, (1) $\ve{u}\leq M$ (``bounded'')  and 
(2) 
$\E_u [uu^\top \one_{|u^\top \te_0|\le 2}] \succeq \si I_d$ (``restricted'' covariance matrix is bounded away from 0). 
Then for the functions $\hat{f}_0,\ldots, \hat{f}_T$ in Problem~\ref{p:log-reg}, and any $\epsilon>0$, there exist parameters $L,\log(A),k^{-1},\mathfrak{D}
= \mathrm{poly}(M, \si^{-1}, \alpha, \mathfrak{B}, d, \epsilon^{-1},\log(T))$ such that Assumptions \ref{Assumption:LipschitzG},~\ref{a:wass}, and~\ref{a:mle} 
hold for all $t$ 
with probability at least $1-\epsilon$.
Therefore Alg.~\ref{alg:VRSGLD} 
gives $\epsilon$-approximate samples from $\pi_t$ for $t\in [1,T]$ with  $\mathrm{poly}(M, \si^{-1}, \alpha, \mathfrak{B}, d, \epsilon^{-1},\log(T))$ gradient evaluations at each epoch.
\end{thm}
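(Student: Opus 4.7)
The plan is to reduce Assumptions~\ref{a:wass} and~\ref{a:drift} to an \emph{effective strong convexity} of $F_t := \sum_{k=0}^t \hat f_k$ of order $\si t$ on a constant-radius ball around $\te_0$, together with a bound on $\nabla F_t(\te_0)$. Assumption~\ref{a:smooth} is immediate: $\nabla^2 \hat f_k(\te) = \phi'(u_k^\top\te)\, u_k u_k^\top$ with $\phi'(x)=\phi(x)(1-\phi(x))\le 1/4$ and $\|u_k\|\le M$, so I take $L = M^2/4$ and $L_0 = \al$ (the latter from the Gaussian prior).

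For the effective strong convexity, the key observation is that if $\|\te-\te_0\|\le 1/M$ and $|u_k^\top \te_0|\le 2$, then $|u_k^\top \te|\le 3$, giving $\phi'(u_k^\top \te)\ge c_0 := \min_{|x|\le 3}\phi'(x) > 0$. This yields, uniformly on the ball,
\[
\nabla^2 F_t(\te) \;\succeq\; c_0 \sum_{k=1}^t u_k u_k^\top \one\{|u_k^\top \te_0|\le 2\}.
\]
The summands are iid PSD matrices of spectral norm $\le M^2$ and, by hypothesis~(2), expectation $\succeq \si I$. I apply Matrix Chernoff with a union bound over dyadic scales $t \in \{t_0, 2t_0, \ldots, T\}$ (where $t_0 = \tO(M^2/\si)$) to conclude $\nabla^2 F_t(\te)\succeq (c_0 \si t/2)\, I$ uniformly on the ball and across all $t\ge t_0$, with probability $\ge 1-\ep/3$. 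For $t<t_0$ the prior contribution $\al I$ already suffices, since $t_0$ is constant in $T$.

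To verify the drift bound, a direct calculation using $1-\phi(-x)=\phi(x)$ shows that each summand of $\nabla F_t(\te_0) - \al\te_0 = -\sum_{k=1}^t y_k u_k(1-\phi(y_k u_k^\top \te_0))$ has conditional mean zero given $u_k$, and the summands are iid and bounded in norm by $M$. Vector Bernstein and a union bound over $t$ then give $\|\nabla F_t(\te_0)\| = O(\al\fB + M\sqrt{t\log(Td/\ep)})$ uniformly in $t$ with probability $\ge 1-\ep/3$. Combined with the strong-convexity bound above, the minimizer of $F_t$ restricted to the ball $B_{1/M}(\te_0)$ lies at distance $O(\sqrt{\log(Td/\ep)/t})$ from $\te_0$; for $t$ above a polylogarithmic threshold this point is in the interior of the ball, and convexity of $F_t$ forces it to coincide with the unconstrained minimizer $x_t^\star$. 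The triangle inequality then yields $\|x_t^\star-x_\tau^\star\|\le \fD/\sqrt{t+c}$ for $\tau\in[t,\max\{2t,1\}]$ with $\fD = \tO(1)$, verifying Assumption~\ref{a:drift}.

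Finally, for Assumption~\ref{a:wass}, I translate the effective-strong-convexity ball to $x_t^\star$ to get $F_t(\te)-F_t(x_t^\star)\ge (c_0\si t/4)\,\|\te-x_t^\star\|^2$ on a constant-radius ball; convexity of $F_t$ extrapolates this to a linear lower bound of slope $\Omega(\si t)$ outside the ball. Standard log-concave tail estimates combining the two regimes then give $\Pj_{\pi_t}(\|X-x_t^\star\|\ge s/\sqrt{t+c})\le Ae^{-ks}$ with $A,k^{-1}=\poly(M,\si^{-1},\al,d)$, which is Assumption~\ref{a:wass}. Applying Theorem~\ref{thm:os-main} with these parameters then closes the proof. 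The hardest step will be the matrix-concentration argument: the Hessian lower bound must hold simultaneously for all $t\in[t_0,T]$ and all $\te$ in a neighborhood of $\te_0$, and achieving this uniformity crucially relies on decoupling the event defining the PSD summands from $\te$ by working with $|u_k^\top \te_0|\le 2$ rather than the $\te$-dependent $|u_k^\top \te|\le 2$.
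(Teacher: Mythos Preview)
Your proposal is correct and follows the same overall strategy as the paper: verify smoothness directly, establish an effective strong convexity of order $\si t$ on a ball around $\te_0$ via matrix concentration, bound $\nabla F_t(\te_0)$ by a martingale/concentration argument to locate $x_t^\star$, and then derive the tail bound for $\pi_t$ by combining the quadratic lower bound inside the ball with the linear extrapolation outside. The small-$t$ regime is handled by the prior in both arguments.

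There are two technical differences worth noting. First, for the uniformity in $\te$ of the Hessian lower bound, the paper does not use your indicator-decoupling trick; instead it proves the bound at the single point $\te_0$ and then transfers it to the whole ball $B(\te_0,1/M)$ via the pointwise ratio inequality $\phi'(x+c)/\phi'(x)\ge e^{-|c|}$, which costs only a factor $e$. Your approach of discarding the terms with $|u_k^\top\te_0|>2$ and bounding $\phi'$ uniformly on the retained terms is equally valid and arguably more direct given how condition~(2) is stated in the theorem. Second, the paper packages the concentration and mode-location steps into a general posterior-sampling theorem (Theorem~\ref{thm:general-conc}) that does not assume iid data, and only then specializes to logistic regression; for the union bound over $t$ it simply substitutes $\ep\mapsfrom \ep/T$ rather than using a dyadic decomposition. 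Your dyadic argument is slightly tighter but both yield $\mathrm{polylog}(T)$ dependence, so the conclusions are the same.
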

\noindent
%
In Section~\ref{s:exp} we show that in numerical simulations, our algorithm achieves competitive accuracy with the same runtime compared to an algorithm specialized to logistic regression, the P\'olya-Gamma sampler. 
However, the P\'olya-Gamma sampler has two drawbacks: its running time at each epoch scales linearly as $t$ (our algorithm scales as $\text{polylog}(t)$), and it is unknown whether P\'olya-Gamma attains TV-error $\varepsilon$ in time polynomial in $\frac{1}{\varepsilon}$, $t$, $d$, and other problem parameters.

\vspace{-1mm}
\section{Related work}

\label{s:rel}

\paragraph{Online convex optimization.}

Our motivation for studying the online sampling problem comes partly from the successes of online (convex) optimization~\cite{hazan2016introduction}. In online convex optimization, one chooses a point $x_t\in K$ at each step and suffers a loss $f_t(x_t)$, where $K$ is a compact convex set and $f_t:K\to \R$ is a convex function \cite{zinkevich2003online}. The aim is to minimize the regret compared to the best point in hindsight, where $\text{Regret}_T = \sumo tT f_t(x_t)  - \min_{x^*}\sumo tT f_t(x^*)$. The same offline convex optimization algorithms 
such as gradient descent and Newton's method
can be adapted to the online setting~\cite{zinkevich2003online,hazan2007logarithmic}.

%
%
%
%
%

\textbf{Online sampling.} To the best of our knowledge, all previous algorithms with provable guarantees in our setting require computation time that grows polynomially with $t$. %
This is because any Markov chain taking all previous data into account needs $\Omega_T(t)$ gradient (or function) evaluations per step. On the other hand, there are many streaming algorithms that are used in practice which lack provable guarantees, or which rely on properties of the data (such as compressibility~\cite{huggins2016coresets,campbell2017automated}).

The most relevant theoretical work in our direction is~\cite{narayanan2013efficient}. The authors consider a changing log-concave distribution on a convex body, and show that under certain conditions, they can use the previous sample as a warm start and only take a constant number of steps of their Dikin walk chain at each stage.  
They consider the online sampling problem in the more general setting where the distribution is restricted to a convex body. However, \cite{narayanan2013efficient} do not achieve optimal results in our setting, since they do not separately consider the case when $F_t = \sumz kt f_k$ has a sum structure and therefore require $\Omega(t)$ function evaluations at epoch $t$. 
Moreover, they do not consider how concentration properties of the distribution translate into more efficient sampling. 
When the $f_t$ are linear,
they need $O_T(1)$ steps and $O_T(t)$ evaluations per epoch. However, 
in the general convex setting with smooth $f_t$'s, 
they need $O_T(t)$ steps per epoch and $O_T(t^2)$ evaluations per epoch. 
%

There are many other online sampling and other approaches to estimating changing distributions, used in practice.  The \emph{Laplace approximation}, perhaps the simplest, approximates the posterior distribution with a Gaussian \cite{barber2016laplace}; however, most distributions cannot be well-approximated by Gaussians. \emph{Stochastic gradient Langevin dynamics} \cite{welling2011bayesian} can be used in an online setting; however, it suffers from large variance which we address in this work.  The \emph{particle filter} \cite{del2012concentration,giraud2017nonasymptotic} is a general algorithm to track changing distributions. 
Another approach (besides sampling) is \emph{variational inference}, which has also been  considered in an online setting (\cite{wang2011online}, \cite{broderick2013streaming}).

\begin{table}
\begin{center}
\begin{small}
\begin{tabular}{|c|c|c|}
\hline 
Algorithm
&  oracle calls per  &Other assumptions\\

& \hspace{-2.5mm} epoch \hspace{-2.3mm} &

\tabularnewline
\hline
 Online Dikin walk
   &    \multirow{2}{*}{$O_T(T)$}   &  \textbf{Strong convexity} \\  \cite[\S5.1]{narayanan2013efficient} &  &  Bounded ratio of densities
    \tabularnewline
\hline 
Langevin \cite{durmus2018analysis,dwivedi2018log}
 & $O_T(T)$    & --- \tabularnewline
 \cline{1-3}
SGLD \cite{durmus2018analysis}
 & $O_T(T)$     &  --- \tabularnewline
\hline 
\multirow{2}{*}{SAGA-LD \cite{chatterji2018theory}}
 &  \multirow{2}{*}{$O_T(T)$}   \hspace{-2.5mm}
   &  \textbf{Strong convexity} \\ & & Lipschitz Hessian \\
\hline 
CV-ULD \cite{chatterji2018theory}
 &   $O_T(T)$    
 &  \textbf{Strong convexity}\\
\hline 
\multirow{2}{*}{\bf This work}
 &  \multirow{2}{*}{$\mathrm{polylog}(T)$}
 &   \hspace{-6mm} bounded second moment \hspace{-6mm} \\ & & \hspace{-5mm} bounded drift of minimizer  \hspace{-5mm}
\tabularnewline
\hline 
\end{tabular}
\end{small}
\end{center}
\caption{{\small Bounds on the number of gradient (or function) evaluations required by different algorithms to solve the online sampling problem. Lipschitz gradient is assumed for all algorithms.
\cite{narayanan2013efficient} analyzed the online Dikin walk for a different setting where the target has compact support; here we give the result one should obtain for support $\mathbb{R}^d$, where it reduces to the ball walk. Thus it is possible the assumptions we give for the online Dikin walk can be weakened. Note that the number of gradient or function evaluations for the basic Langevin and SGLD algorithms and online Dikin walk depend multiplicatively on $T$ (i.e., $T$$\times$$\mathrm{poly}(d,L, \textrm{other parameters}))$, while the number of evaluations for variance-reduced SGLD methods depend only additively on $T$ (i.e., $T$$+$$\mathrm{poly}(d,L,\textrm{other parameters}))$.}} 
\label{table:online} \label{table:offline}
\end{table}

%
%
%
%

\textbf{Variance reduction techniques.}
Variance reduction techniques for SGLD were initially proposed in \cite{dubey2016variance}, when sampling from a fixed distribution $\pi \propto e^{-\sum_{t=0}^T f_t}$.  
\cite{dubey2016variance} propose two variance-reduced SGLD techniques, CV-ULD and SAGA-LD.  CV-ULD re-computes the full gradient $\nabla F$ at an ``anchor'' point every $r$ steps and updates the gradient at intermediate steps by subsampling the difference in the gradients between the current point and the anchor point.  SAGA-LD, on the other hand, keeps track of when each gradient $\nabla f_t$ was computed, and updates individual gradients with respect to when they were last computed. 
\cite{chatterji2018theory} show that CV-ULD can sample in the offline setting in roughly $T + \nicefrac{d^2}{\epsilon}(\nicefrac{L}{m})^{6} $ gradient evaluations, and that SAGA-LD can sample in $T + T \nicefrac{\sqrt{d}}{\epsilon} (\nicefrac{L}{m})^{\nicefrac{3}{2}} (1+ L_H)$ evaluations, where $L_H$ is the Lipschitz constant of the Hessian of $-\log(\pi)$.\footnote{The bounds of \cite{chatterji2018theory} are given for sampling within a specified Wasserstein error, not TV error.  The bounds we give here are the number of gradient evaluations one would need if one samples with Wasserstein error $\tilde{\epsilon}$ which roughly corresponds to TV error $\epsilon$; 
 roughly, one requires $\tilde{\epsilon} = O(\nicefrac{\epsilon}{\sqrt{T}})$ to sample with TV error $\epsilon$.}

\section{Proof overview for online problem}
For the online problem, information theoretic constraints require us to use ``information" from at least $\Omega(t)$ gradients to sample with fixed TV error at the $t$'th epoch (see Appendix \ref{sec:Hardness}). 
Thus, to use only $\wt O_T(1)$ gradients at each epoch, we must reuse gradient information from past epochs.
We accomplish this by reusing gradients computed at points in the Markov chain, including points at past epochs. This saves a factor of $T$ over naive SGLD, but only if we can show that these past points in the chain track the distributions' mode, and that our chain stays close to the mode (Lemma~\ref{l:var}\ifarxiv{}{ in supplement}).

The distribution is concentrated to $O_T(\nicefrac{1}{\sqrt t})$ at the $t$th epoch (Assumption~\ref{a:wass}), and we need the Markov chain to stay within $\wt O_T(\nicefrac{1}{\sqrt t})$ of the mode. The bulk of the proof (Lemma~\ref{l:escape}\ifarxiv{}{ in supplement}) is to show that with high probability (w.h.p.) the chain stays within this ball. 
Once we establish that the Markov chain stays close, we combine our bounds with existing results on SGLD from~\cite{durmus2018analysis} to show that we only need $\wt O_T(1)$ steps per epoch (Lemma~\ref{l:induct}). Finally, an induction with careful choice of constants finishes the proof (Theorem~\ref{t:main-param}). 
Details of each of these steps follow.

\textbf{Bounding the variance of the stochastic gradient (see Lemma \ref{lemma:variance}).}
We reduce the variance of our stochastic gradient 
by using the gradient evaluated at a past point $u_k$ and estimating the difference in the gradients between our current point $X^t_i$ and past point $u_k$.  Using the $L$-Lipschitz property (Assumption~\ref{a:smooth}) of the gradients, we show that the variance of this stochastic gradient is bounded by $\fc{t^2L^2}{b}\max_k\ve{X^t_i-u_k}^2$. To obtain this bound, observe that the individual components $\{\nabla f_{k}( X^{t}_{i}) - \nabla f_{k}(u_k)\}_{k\in S}$ of the stochastic gradient $g^t_i$ have variance at most $={t^2}L^2\max_k\ve{X^t_i-u_k}^2$ by the Lipschitz property. Averaging with a batch saves a factor of $b$.
For the number of gradient evaluations to stay nearly constant at each step, %
increasing the batch size is not a viable option to decrease our stochastic gradient's variance.  Rather, showing that $\|X^t_i -u_k\|$  decreases as $\|X^t_i -u_k\| = \wt O_T(\nicefrac{1}{\sqrt t})$, implies the variance of our stochastic gradient decreases at each epoch at the desired rate.

 %
\textbf{Bounding the escape time from a ball where the stochastic gradient has low variance (see Lemma \ref{l:escape}).}
Our main challenge is to bound the distance $\ve{X_i -u_k}$.  Because we do not assume strong convexity, we cannot use proof techniques of past papers analyzing variance-reduced SGLD methods. \cite{chatterji2018theory, nagapetyan2017true} used strong convexity to show that w.h.p., the Markov chain does not travel too far from its initial point, implying a bound on the variance of their stochastic gradients. Unfortunately, many important applications, including logistic regression, lack strong convexity. 

To deal with the lack of strong convexity, we instead use a martingale exit time argument to show that the Markov chain remains inside a ball of radius $r =\wt O_T(\nicefrac{1}{\sqrt t})$ w.h.p. for a large enough time $i_\mathrm{max}$ for the Markov chain to reach a point within TV distance $\epsilon$ of the target distribution.  Towards this end, we would like to bound the distance from the current state of the Markov chain to the mode $\|X_i^t - x_t^\star\|$ by $\wt O_T(\nicefrac{1}{\sqrt t})$, and bound $\|x_t^\star - u_k\|$ by $\wt O_T(\nicefrac{1}{\sqrt t})$. Together, this allows us to bound the distance $\ve{X_i^t - u_k} = O_T(\nicefrac{1}{\sqrt t})$.  We can then use our bound on $\ve{X_i^t - u_k} = \wt O_T(\nicefrac{1}{\sqrt t})$ together with Lemma \ref{lemma:variance} to bound the variance of the stochastic gradient by roughly $\tilde{O}_T(\nicefrac{1}{t})$.

\emph{Bounding $\ve{x_t^\star-u_k}$.}
Since $u_k$ is a point of the Markov chain, possibly at a previous epoch $\tau \leq t$, roughly speaking we can bound this distance inductively by using bounds obtained at the previous epoch $\tau$ (Lemma~\ref{l:induct}). 
Noting that $u_k = X_i^\tau$ for some $i \leq i_{\mathrm{max}}$, we use our bound for $\|u_k - x_\tau^\star\| = O_T(\nicefrac{1}{\sqrt \tau})= O_T(\nicefrac{1}{\sqrt t})$ obtained at the previous epoch  $\tau$,   together with Assumption \ref{a:mle} which says that $\|x_t^\star - x_{\tau}^\star\| =  O_T(\nicefrac{1}{\sqrt t})$, to bound $\|x_t^\star - u_k\|$. 

\emph{Bounding $\ve{X_i^t - x_t^\star}$.}
To bound the distance $\rho_i:= \|X_i^t - x_t^\star\|$ to the mode,  we would like to bound the increase $\rho_{i+1} - \rho_i$ at each step $i$ in the Markov chain.  
\iftoggle{thesis}{We}{Unfortunately, the expected increase in the distance $\|X_i^t - x_t^\star\|$ is much larger when the Markov chain is close to the mode than when it is far away from the mode, making it difficult to get a tight bound on the increase in the distance at each step.  To get around this problem, we instead} use a martingale exit time argument on $\ve{X_i^t - x_t^\star}^2$, the {\em squared} distance from the current state of the Markov chain to the mode.  The advantage in using squared distance is that the expected increase in squared distance due to the Gaussian noise term  $\sqrt{2 \eta_t} \xi_i$ in the Markov chain update rule (Equation \eqref{eq:SGLDupdate}) is the same regardless of the position of the chain, allowing us to obtain tighter bounds on the increase regardless of the Markov chain's current position.
 We then use weak convexity to bound the component of the increase in $\ve{X^t_i-x_t^\star}^2$ that is due to the gradient term $- \eta_t g_i$, and apply Azuma's martingale concentration inequality to bound the exit time from the ball, showing the chain remains at distance of roughly $\wt O_T(\nicefrac{1}{\sqrt t})$ from the mode.
 

%
%
%
%
\textbf{Bounding the TV error (Lemma~\ref{l:induct}).}
We now show that if $u_k$ is close to $x_{\tau}^\star$, then  $\mathsf{X}^{t}$ will be a good sample from $\pi_t$.
More precisely, we show that if at epoch $t$ the Markov chain starts at $X_0^t$ such that $\ve{X_0^t - x_{\tau}^\star}\le \nicefrac{\mathfrak{R}}{\sqrt{t+c}}$ ($\mathfrak{R}$ to be chosen later), then $\ve{\cL(X_{i_{\max}}^t) - \pi_t}_{\mathrm{TV}} \le O(\nicefrac{\ep}{\log_2(T)})$.

To do this, we use two bounds: a bound on the Wasserstein distance between the initial point $X^{t}_{0}$ and the target density $\pi_t$, and a bound on the variance of the stochastic gradient. We then plug the bounds into Corollary 18 of~\cite{durmus2018analysis} (reproduced as Theorem~\ref{t:dmm}\ifarxiv{}{ in the supplementary material}), to show that $i_{\max} = \wt O_{\epsilon, T} (\poly(\nicefrac{1}{\ep}))$ steps per epoch are sufficient to obtain a bound of $\epsilon$ on the TV error.
\textbf{Bounding the number of gradient evaluations at each epoch.}  
Working out constants, we see $i_{\max} = \poly(d,L,C,\fD,  \ep^{-1}, \log(T))$ suffices to obtain TV-error $\ep$ each epoch. A constant batch size suffices, so the total number of evaluations is $O(i_{\max}b) = \poly(d,L,C,\fD,  \ep^{-1}, \log(T))$.

\section{Proof of online theorem (Theorem~\ref{thm:os-main})}

First we formally define what we mean by ``almost independent''.
\begin{definition}\label{df:ind}
We say that $X^1,\ldots, X^T$ are \vocab{$\ep$-approximate independent samples} from probability distributions $\pi_1,\ldots, \pi_T$ if for independent random variables $Y_t\sim \pi_t$, there exists a coupling between $(X^1,\ldots, X^T)$ and $(Y^1,\ldots, Y^T)$ such that for each $t\in [1,T]$, $X^t=Y^t$ with probability $1-\ep$.
\end{definition}

\subsection{Bounding the variance of the stochastic gradient}
\label{s:sg}

We first show that the variance reduction in Algorithm~\ref{alg:OSAGA} reduces the variance from the order of $t^2$ to $t^2 \ve{x-x'}^2$, where $x'$ is a past point. This will be on the order of $t$ if we can ensure $\ve{x-x'}=O_T\prc{\sqrt t}$. Later, we will bound the probability of the bad event that $\ve{x-x'}$ becomes too large. 

\begin{lem}\label{lemma:variance}\label{l:var}
Fix $x\in \R^d$ and $\{u_k\}_{1\le k\le t}$ and let $S$ be a multiset chosen with replacement from $\{1,\ldots, t\}$. Let 
\begin{align}
g^t &= \nb f_0(x) + \ba{\sumo kt \nb f_k(u_k)} + 
\fc tb \sum_{k\in S} [\nb f_k(x) - \nb f_k(u_k)].
\end{align}
Then 
\begin{align}
\E \ba{
\ve{g^t - \sumz kt \nb f_k(x)}^2
}&
\le \fc{t^2}{b} L^2 
\max_k\ve{x-u_k}^2\\
\ve{g^t - \sumz kt \nb f_k(x)}^2 &\le 4t^2L^2\max_k\ve{x-u_k}^2.
\end{align}
\end{lem}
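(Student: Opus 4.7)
The plan is to write the deviation of $g^t$ from the true gradient as a centered sum over an i.i.d.\ batch, then control its second moment via a variance calculation and a deterministic worst-case bound.

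First, set $Z_k := \nabla f_k(x) - \nabla f_k(u_k)$ for $k=1,\ldots,t$. Since $\nabla f_0(x)$ appears in both $g^t$ and in $\sum_{k=0}^t \nabla f_k(x)$, it cancels, and the definition of $g^t$ gives
\[
g^t - \sum_{k=0}^t \nabla f_k(x) \;=\; -\sum_{k=1}^t Z_k \;+\; \frac{t}{b}\sum_{k\in S} Z_k.
\]
Writing $S=\{k_1,\ldots,k_b\}$ with each $k_j$ independently uniform on $\{1,\ldots,t\}$, define $Y_j := \tfrac{t}{b} Z_{k_j}$, so that $\tfrac{t}{b}\sum_{k\in S} Z_k = \sum_{j=1}^b Y_j$ is a sum of i.i.d.\ vectors. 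A direct computation gives $\mathbb{E} Y_j = \tfrac{1}{b}\sum_{k=1}^t Z_k$, so $\mathbb{E}[g^t - \sum_{k=0}^t \nabla f_k(x)] = 0$.

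For the first (variance) bound, use independence of the $Y_j$ and the identity $\mathbb{E}\|Y_j - \mathbb{E} Y_j\|^2 \le \mathbb{E}\|Y_j\|^2$:
\[
\mathbb{E}\Big\|g^t - \sum_{k=0}^t \nabla f_k(x)\Big\|^2
= b\,\mathbb{E}\|Y_1 - \mathbb{E} Y_1\|^2
\le b\cdot \frac{t^2}{b^2}\cdot \frac{1}{t}\sum_{k=1}^t \|Z_k\|^2
= \frac{t}{b}\sum_{k=1}^t \|Z_k\|^2.
\]
The Lipschitz assumption (Assumption~\ref{a:smooth}) gives $\|Z_k\|^2 \le L^2\|x-u_k\|^2 \le L^2 \max_k \|x-u_k\|^2$, and summing $t$ such terms yields the claimed bound $\tfrac{t^2}{b}L^2\max_k\|x-u_k\|^2$.

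For the deterministic bound, apply $\|a+b\|^2 \le 2\|a\|^2 + 2\|b\|^2$ to the two pieces of the deviation, then Cauchy--Schwarz:
\[
\Big\|\sum_{k=1}^t Z_k\Big\|^2 \le t\sum_{k=1}^t \|Z_k\|^2 \le t^2 L^2 \max_k\|x-u_k\|^2,
\qquad
\Big\|\frac{t}{b}\sum_{k\in S} Z_k\Big\|^2 \le \frac{t^2}{b}\sum_{k\in S}\|Z_k\|^2 \le t^2 L^2 \max_k \|x-u_k\|^2,
\]
and combining these gives $4t^2 L^2 \max_k\|x-u_k\|^2$, as required. No step here is a real obstacle; the only thing to be careful about is bookkeeping the $1/b$ and $t$ factors to see that batching reduces the variance (the first bound) while not affecting the worst-case magnitude (the second bound). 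The usefulness of this lemma rests entirely on later controlling $\max_k\|x-u_k\|$ to be $\widetilde O_T(1/\sqrt t)$, which is the job of the escape-time argument, not this lemma.
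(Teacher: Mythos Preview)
Your proof is correct and essentially identical to the paper's. The paper phrases the variance calculation via a trace identity on the centered variables $V_j$ (your $Y_j-\mathbb{E}Y_j$, up to sign), but the content is the same independence-plus-$\mathbb{E}\|Y-\mathbb{E}Y\|^2\le \mathbb{E}\|Y\|^2$ argument you give. For the deterministic bound the paper bounds each centered $V_j$ by $\tfrac{2tL}{b}\max_k\|x-u_k\|$ and sums via the triangle inequality, while you split into the two uncentered pieces and use Cauchy--Schwarz plus $\|a+b\|^2\le 2\|a\|^2+2\|b\|^2$; both routes land on the same constant $4$.
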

\begin{proof}
Let $V$ be the random variable given by 
\begin{align}
V&= \fc tb \ba{
\pa{\nb f_k(u_k)-\nb f_k(x)} - 
\EE_{k\in [t]}\ba{\nb f_k(u_k)-\nb f_k(x)}
},
\end{align}
where $k\in [t]$ is chosen uniformly at random. Let $V_1,\ldots, V_b$ be independent draws of $V$. Note that the distribution of $\ve{g^t - \sumz kt \nb f_k(x)}^2$ is the same as that of $\ve{\sumo jb V_j}^2$.
Because the $V_j$ are independent,
\begin{align}
&\E\ba{
\ve{g^t - \sumz kt \nb f_k(x)}^2
}
= \E\ba{\ve{\sumo jb V_j}^2}
= \tr \pa{
\E\ba{
\pa{\sumo jb V_j}\pa{\sumo jb V_j}^\top
}}\\
&=\tr \pa{\E\ba{\sumo jb V_jV_j^\top}}
= \sumo jb \E\ba{\tr(V_jV_j^\top)}
= b \E[\ve{V}^2].\label{e:var-sg1}
\end{align}
We calculate
\begin{align}
\E[\ve{V}^2]&= 
%
\fc{t^2}{b^2} \Var_{k\in [t]}\pa{
\nb f_k(u_k)-\nb f_k(x)}
\\
&\le
\fc{t^2}{b^2} \pa{\EE_{k\in [t]} \ba{
\ve{\nb f_k (u_k)-\nb f_k(x)}^2
} %
}\\
&\le \fc{t^2}{b^2}L^2\max_k \ve{x-u_k}^2.
\label{e:var-sg2}
\end{align}
Combining~\eqref{e:var-sg1} and~\eqref{e:var-sg2} gives the first part.


The final part follows because~\eqref{e:var-sg2} implies $\ve{\sumo jb V_j}^2 \le 4t^2L^2\max_k \ve{x-u_k}^2$.
\end{proof}

\subsection{Bounding the escape time from a ball}
\label{s:escape}
\begin{lem}\label{l:escape}
Suppose that the following hold:
\begin{enumerate}
\item
$F:\R^d\to \R$ is convex, differentiable, and $L$-smooth, with a minimizer $x^\star \in \mathbb{R}^d$.
\item
$\ze_i$ is a random variable depending only on $X_0,\ldots, X_i$ such that $\E[\ze_i|X_0,\ldots, X_i]=0$, and whenever $\ve{X_j-x^\star}\le r$ for all $j\le i$, $\ve{\ze_i} \le S$.
\end{enumerate}
Let $X_0$ be such that $\ve{X_0-x^\star}\le r$ and define $X_i$ recursively by
\begin{align}
X_{i+1} &= X_i - \eta g_i + \sqrt{\eta_t}\xi_i\\
\text{where }g_i &= \nb F(X_i) + \ze_i\\
\xi_i&\sim N(0,I_d),
\end{align}
and define the event 
$G:=\{\ve{X_j-x^\star}\le r \, \,  \forall \, 1\le j\le i_{\max}\}$. 
Then for $r^2> \ve{X_0-x^\star}^2
+ i_{\max} [2\eta^2 (S^2 + L^2r^2) + \eta d]$ and $C_\xi\ge \sqrt{2d}$, 
\begin{align}
\Pj(G^c)
 & \le
 i_{\max}\Bigg[
  \exp\pa{-\fc{(r^2 -  \ve{X_0-x^\star}^2
-  i_{\max} [2\eta^2 (S^2 + L^2r^2) + \eta d])^2}{2i_{\max}( 2\eta Sr + 2 \sqrt{\eta} C_\xi (r+\eta S + \eta Lr) + \eta C_\xi^2 )^2}} \\
&\quad
+ \exp\pa{-\fc{C_\xi^2-d}8}
\Bigg].
\end{align}
\end{lem}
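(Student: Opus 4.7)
The plan is to run a martingale exit-time argument on the squared distance $A_i := \|X_i - x^\star\|^2$. Expanding $X_{i+1} - x^\star = (X_i - x^\star) - \eta g_i + \sqrt{\eta}\xi_i$ with $g_i = \nabla F(X_i) + \zeta_i$ produces five pieces for $A_{i+1} - A_i$. Convexity of $F$ combined with $\nabla F(x^\star) = 0$ kills the term $-2\eta\langle \nabla F(X_i), X_i - x^\star\rangle$, leaving
\[
A_{i+1} - A_i \;\le\; \underbrace{\eta^2\|g_i\|^2 + \eta d}_{=:\,\alpha_i} \;+\; \underbrace{-2\eta\langle\zeta_i, X_i - x^\star\rangle + 2\sqrt{\eta}\langle\xi_i, X_i - x^\star - \eta g_i\rangle + \eta(\|\xi_i\|^2 - d)}_{=:\,\beta_i}.
\]
Here $\beta_i$ is a conditional-mean-zero martingale difference because $g_i$ is $\mathcal{F}_i$-measurable, $\xi_i$ is independent of $\mathcal{F}_i$ with zero mean, and $\mathbb{E}\|\xi_i\|^2 = d$. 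Inside the ball, $L$-smoothness and $\nabla F(x^\star)=0$ give $\|\nabla F(X_i)\| \leq Lr$, hence $\|g_i\| \leq Lr + S$, so $\alpha_i \leq 2\eta^2(L^2 r^2 + S^2) + \eta d$.

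\textbf{Truncating the Gaussian.}
Azuma's inequality requires a pointwise bound on $|\beta_i|$, but the Gaussian pieces of $\beta_i$ are unbounded. I introduce the auxiliary event $G_\xi := \bigcap_{j < i_{\max}}\{\|\xi_j\| \leq C_\xi\}$; on $G_\xi$ together with the in-ball event, the three summands of $\beta_i$ are bounded by $2\eta S r$ (from $\|\zeta_i\| \leq S$), by $2\sqrt{\eta}C_\xi(r + \eta(Lr + S))$, and by $\eta C_\xi^2$ respectively (the last using the hypothesis $C_\xi^2 \geq 2d$ so that $|\|\xi_i\|^2 - d| \leq C_\xi^2$). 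Their sum is exactly the quantity $B$ appearing in the denominator of the lemma. A standard $\chi^2_d$ tail bound yields $\mathbb{P}(\|\xi_j\| > C_\xi) \leq \exp(-(C_\xi^2 - d)/8)$, and a union bound over $j$ gives $\mathbb{P}(G_\xi^c) \leq i_{\max}\exp(-(C_\xi^2 - d)/8)$, which supplies the second exponential in the stated bound.

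\textbf{Azuma with union bound; main obstacle.}
Let $\tau$ be the first index with $A_\tau > r^2$. On $\{\tau = n\}$, telescoping the inequality above (valid inside the ball) gives $A_n - \|X_0 - x^\star\|^2 \leq n[2\eta^2(L^2r^2 + S^2) + \eta d] + \sum_{i<n}\beta_i$, so $A_n > r^2$ forces $\sum_{i<n}\beta_i > \Delta_n := r^2 - \|X_0 - x^\star\|^2 - n[2\eta^2(L^2r^2 + S^2) + \eta d]$. On $G_\xi \cap \{\tau > n-1\}$ the increments $\beta_0,\ldots,\beta_{n-1}$ are bounded by $B$, so Azuma applied to the stopped martingale-difference sequence $\tilde{\beta}_i := \beta_i \mathbf{1}\{\tau > i\}$ gives $\mathbb{P}(\tau = n, G_\xi) \leq \exp(-\Delta_n^2/(2nB^2))$. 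Under the lemma's hypothesis $\Delta_{i_{\max}} > 0$ one verifies that $\Delta_n^2/n$ is minimized over $\{1,\ldots,i_{\max}\}$ at $n = i_{\max}$, so a union bound plus the bound on $\mathbb{P}(G_\xi^c)$ yields the claim. The main obstacle is the absence of strong convexity: convexity alone furnishes no pull-back toward $x^\star$, so the bound must come entirely from balancing cumulative drift against martingale fluctuations. The argument succeeds because (i) working with the squared distance makes the Gaussian noise contribute a \emph{position-independent} $\eta d$ per step, rather than blowing up near $x^\star$ as it would for the distance itself, and (ii) $C_\xi$-truncation gives bounded martingale differences for Azuma at the cost of only an additive $\chi^2_d$-tail.
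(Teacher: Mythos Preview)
Your approach is essentially the paper's: expand the squared distance, use convexity to drop the $-2\eta\langle\nabla F(X_i),X_i-x^\star\rangle$ term, identify a supermartingale, truncate the Gaussian noise, apply Azuma, and union-bound over steps together with a $\chi^2_d$ tail for $\|\xi_i\|>C_\xi$. Your monotonicity check that $\Delta_n^2/n$ is minimized at $n=i_{\max}$ is correct and gives exactly the displayed bound.

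There is one technical gap in your Azuma step. Your stopped increments $\tilde\beta_i=\beta_i\mathbf{1}\{\tau>i\}$ form a genuine martingale-difference sequence, but they are \emph{not} bounded almost surely: the indicator $\mathbf{1}\{\tau>i\}$ is $\mathcal F_i$-measurable and does not control $\xi_i$, so the Gaussian pieces of $\beta_i$ remain unbounded. Saying ``on $G_\xi$ the increments are bounded by $B$'' is not enough to invoke Azuma, which requires an almost-sure bound on the differences. The paper closes this gap by defining a coupled \emph{toy chain} $X_i'$ that (i) freezes once it exits the $r$-ball and (ii) replaces $\xi_i$ by the radially truncated $\xi_i'=\min(C_\xi,\|\xi_i\|)\,\xi_i/\|\xi_i\|$. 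Because $\mathbb E[\xi_i']=0$ by symmetry and $\mathbb E\|\xi_i'\|^2\le d$, the process $Y_i'=\|X_i'-x^\star\|^2-i[2\eta^2(S^2+L^2r^2)+\eta d]$ is a supermartingale with differences bounded by your $B$ \emph{almost surely}, so Azuma applies cleanly. One then observes that $X_i\ne X_i'$ only if some $\|\xi_j\|>C_\xi$, which is exactly your event $G_\xi^c$. Your argument becomes rigorous with this modification; without it the Azuma invocation is formally unjustified.
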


\begin{proof}
Note that if $\ve{x-x^\star}\le r$, then because $F$ is $L$-smooth, $\ve{\nb F(x)}\le L\ve{x-x^\star}\le Lr$.
If $\ve{X_{i}-x^\star}\le r$ and $\ve{\ze_i}\le S$, then 
\begin{align}
& \quad 
\ve{X_{i+1}-x^\star}^2-\ve{X_i-x^\star}^2\\ 
&= 
\ve{X_i- x^\star - \eta g_i + \sqrt{\eta}\xi_i }^2 - \ve{X_i-x^\star}^2\\
&= -2\eta\an{g_i,X_i-x^\star} + \eta^2 \ve{g_i}^2
+ 2\sqrt\eta \an{X_i - x^\star-\eta g_i, \xi_i}
+ \eta\ve{\xi_i}^2\\
&= \ub{- 2\eta\an{\nb F_t(X_i) , X_i - x^\star}}{\le 0 \text{ by convexity}}
- 2\eta \an{\ze_i, X_i-x^\star} + \eta^2\ve{g_i}^2
+ 2\sqrt\eta \an{X_i -x^\star- \eta g_i, \xi_i}
+ \eta\ve{\xi_i}^2\\
&\le - 2\eta \an{\ze_i, X_i-x^\star} + 2\eta^2\pa{\ve{\nb F(x_i)}^2 + \ve{\ze_i}^2}
+ 2\sqrt\eta \an{X_i - x^\star - \eta g_i, \xi_i}
+ \eta\ve{\xi_i}^2\\
&\le  - 2\eta \an{\ze_i, X_i-x^\star} + 2\eta^2 ( L^2r^2 + S^2) + 2\sqrt\eta \an{X_i - x^\star - \eta g_i, \xi_i}
+ \eta\ve{\xi_i}^2 \\
&= 2\eta^2(L^2r^2 + S^2) + \eta d 
\ub{- 2\eta \an{\ze_i, X_i-x^\star} + 2\sqrt\eta \an{X_i - x^\star - \eta g_i, \xi_i}
+ \eta(\ve{\xi_i}^2-d)}{(*)}.
\end{align}
Note that (*) has expectation 0 conditioned on $X_0,\ldots, X_i$. To use  Azuma's inequality, we need our random variables to be bounded. Also, recall that we assumed $\ve{X_i-x^\star}$ is bounded above by $r$. Thus, we define a toy Markov chain coupled to $X_i$ as follows. Let $X_0'=X_0$ and 
\begin{align}
X_{i+1}' &=\begin{cases}
X_i' ,&\text{if }\ve{X_i'-x^\star}\ge r\\
X_i' - \eta g_i + \sqrt \eta \xi_i',&\text{otherwise}
\end{cases}\\
\text{where }
g_i&= \nb F(X_i') + \ze_i,\\
\xi_i'&= \min(C_\xi,\ve{\xi_i}) \nv{\xi_i},\\
\xi_i&\sim N(0,I_d).
\end{align}
Then $Y_i':=\ve{X_i' - x^\star}^2 - i [2\eta^2 (S^2 + L^2r^2) + \eta d]$ is a supermartingale with differences upper-bounded by
\begin{align}
Y_{i+1}'-Y_i' &\le \begin{cases}
0, &\ve{X_i'-x^\star}\ge r\\
-2\eta\an{\ze_i, X_i'-x^\star} + 2\sqrt\eta \an{X_i' - x^\star - \eta g_i, \xi_i'} + \eta(\ve{\xi_i}^2-d),&\ve{X_i'-x^\star}<r
\end{cases}\\
&\le 2\eta Sr + 2\sqrt{\eta}(r + \eta(S + Lr)) C_\xi+ \eta(C_\xi^2-d)\\
&\le2\eta Sr + 2 \sqrt{\eta} C_\xi (r+\eta S + \eta Lr) + \eta C_\xi^2.
\end{align}
By Azuma's inequality, for $\la >0$ and for $r^2 >\ve{X_0-x^\star}^2+ i [2\eta^2 (S^2 + L^2r^2) + \eta d]$, 
\begin{align}
&\quad \Pj\pa{
\ve{X_i'-x^\star}^2 - \ve{X_0-x^\star}^2
-  i [2\eta^2 (S^2 + L^2r^2) + \eta d]
>\la 
}\\
&\le \exp\pa{-\fc{\la^2}{2i( 2\eta Sr + 2 \sqrt{\eta} C_\xi (r+\eta S + \eta Lr) + \eta C_\xi^2 )^2}}\\
&\implies \Pj\pa{
\ve{X_i'-x^\star} >r
}\\
&\le \exp\pa{-\fc{(r^2 -  \ve{X_0-x^\star}^2
-  i [2\eta^2 (S^2 + L^2r^2) + \eta d])^2}{2i( 2\eta Sr + 2 \sqrt{\eta} C_\xi (r+\eta S + \eta Lr) + \eta C_\xi^2 )^2}}.
\end{align}

\noindent
If $\ve{X_i-x^\star}\ge r$ for some $i\le i_{\max}$, then either $\ve{X_i'-x^\star}\ge r$ for some $i\le i_{\max}$, or $X_i$ otherwise becomes different from $X_i'$, which happens only when $\xi_i\ge C_\xi$ for some $i\le i_{\max}$. Thus by the Hanson-Wright inequality, since $C_\xi\ge \sqrt{2d}$,
\begin{align}
&\Pj\pa{\cI\le i_{\max}}\\
&\le \sumo i{i_{\max}} \Pj(\ve{X_i'-x^\star}^2> r^2)
 + \sumo i{i_{\max}} \Pj(\ve{\xi_i}> C_\xi)\\
 &\le i_{\max}\Bigg[
  \exp\pa{-\fc{(r^2 -  \ve{X_0-x^\star}^2
-  i_{\max} [2\eta^2 (S^2 + L^2r^2) + \eta d])^2}{2i_{\max}( 2\eta Sr + 2 \sqrt{\eta} C_\xi (r+\eta S + \eta Lr) + \eta C_\xi^2 )^2}} \\
&\quad 
+ \exp\pa{-\fc{C_\xi^2-d}8}
\Bigg].
\end{align}
\end{proof}

\subsection{Bounding the TV error}\label{s:induct}
Lemma~\ref{l:induct} will allow us to carry out the induction step for the proof of the main theorem. 

We will use the following result of \cite{durmus2018analysis}. Note that this result works more generally with non-smooth functions, but we will only consider smooth functions. Their algorithm, Stochastic Proximal Gradient Langevin Dynamics, reduces to SGLD in the smooth case. We will apply this Lemma with our variance-reduced stochastic gradients in Algorithm~\ref{alg:SAGA}.

\begin{lem}[\cite{durmus2018analysis}, Corollary 18]\label{t:dmm}
Suppose that $f:\R^d\to \R$ is convex and $L$-smooth. 
Let $\cal F_i$ be a filtration with $\xi_i$ and $g(x_i)$ defined on $\cal F_i$, and satisfying $\E[g(x_i)|\cal F_{i-1}] = \nb f(x_i)$, $\sup_x\Var [g(x)| \cal F_{i-1}]\le \si^2<\iy$. 
Consider SGLD for $f(x)$ run with step size $\eta$ and stochastic gradient $g(x)$, with initial distribution $\mu_0$ and step size $\eta$; that is,
\begin{align}
x_{i+1} &= x_i - \eta g(x_i) + \sqrt{\eta} \xi_i,& \xi_i\sim N(0,I).  
\end{align}
Let $\mu_n$ denote the distribution of $x_n$ and let $\pi$ be the distribution such that $\pi\propto e^{-f}$. Suppose
\begin{align}
\label{e:eta}
\eta&\le \min\bc{\fc{\ep}{2(Ld + \si^2)}, \rc L},\\
n &\ge \ce{\fc{W_2^2(\mu_0, \pi)}{\eta \ep}}.
\end{align}
Let $\ol\mu = \rc n \sumo kn \mu_k$ be the ``averaged'' distribution. Then $\KL(\ol \mu |\pi)\le \ep$. 
\end{lem}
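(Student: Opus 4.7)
My plan is to prove Lemma~\ref{t:dmm} via the Wasserstein gradient flow interpretation of Langevin dynamics together with the joint convexity of the relative entropy $\KL(\cdot\|\pi)$. Since $f$ is merely convex, the functional $\mu \mapsto \KL(\mu\|\pi)$ is geodesically convex on $2$-Wasserstein space (no strong convexity is required), and the iterates $\mu_i$ of SGLD can be viewed as performing an approximate gradient descent on this functional with step size $\eta$. The averaging $\ol\mu = \rc n \sumo kn \mu_k$ in the conclusion is natural from this viewpoint: integrating a per-step dissipation inequality along the trajectory and applying Jensen's inequality for $\KL$ converts the time-averaged entropies of the iterates into the entropy of the averaged distribution.

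\textbf{Main steps.} The central ingredient I would establish is a one-step Wasserstein-to-entropy inequality of the form
\[ W_2^2(\mu_{i+1}, \pi) \;\le\; W_2^2(\mu_i, \pi) \;-\; 2\eta\, \KL(\mu_i\|\pi) \;+\; 2\eta^2 (Ld + \si^2). \]
To derive this, I would couple $x_i \sim \mu_i$ to $y\sim\pi$ via the optimal $W_2$-coupling, and expand $\E\ve{x_{i+1}-y}^2 - \E\ve{x_i-y}^2$ using the SGLD update. The cross-term involves $\E\an{\nb f(x_i), x_i-y}$, which by convexity of $f$ combined with the log-concavity of $\pi$ yields the entropy dissipation $-2\eta\,\KL(\mu_i\|\pi)$ via a displacement-convexity (or HWI-type) identity. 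The $\sqrt\eta\xi_i$ term contributes $\eta d$ to the quadratic growth; the stochastic gradient noise $g(x_i)-\nb f(x_i)$ is mean-zero conditioned on $\cF_{i-1}$, so it enters only through its variance, contributing $\eta^2\si^2$. The smoothness hypothesis $\eta\le \rc L$ is used to absorb the second-order Taylor remainder of $f$ along the discrete step into the $\eta^2(Ld + \si^2)$ remainder.

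\textbf{Telescoping and the main obstacle.} Summing the one-step inequality over $i=0,\ldots,n-1$ telescopes the $W_2^2$ terms:
\[ 2\eta \sumz{i}{n-1} \KL(\mu_i\|\pi) \;\le\; W_2^2(\mu_0,\pi) + 2n\eta^2(Ld + \si^2). \]
Dividing by $2\eta n$ and invoking joint convexity of $\KL$, which gives $\KL(\ol\mu\|\pi) \le \rc n \sumz{i}{n-1} \KL(\mu_i\|\pi)$, produces $\KL(\ol\mu\|\pi) \le \fc{W_2^2(\mu_0,\pi)}{2\eta n} + \eta(Ld + \si^2)$. The stated hypotheses $\eta \le \fc{\ep}{2(Ld+\si^2)}$ and $n \ge \fc{W_2^2(\mu_0,\pi)}{\eta\ep}$ then bound each term by $\ep/2$. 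The main obstacle is establishing the one-step inequality: the stochastic gradient error and the discretization error must be simultaneously absorbed into the $\eta^2(Ld + \si^2)$ remainder without degrading the clean $-2\eta\KL(\mu_i\|\pi)$ dissipation term. Concretely, one must show the martingale part of $g(x_i) - \nb f(x_i)$ contributes zero in expectation to first order and enters only through its quadratic variation, and one must control the gap between the discrete displacement-convexity inequality and its continuous-time analogue. An equivalent JKO-style route interprets the Langevin step as an approximate minimizer of $\mu\mapsto \KL(\mu\|\pi) + \fc{1}{2\eta}W_2^2(\mu,\mu_i)$ and bounds the SGLD deviation from this proximal minimizer; either route avoids strong-convexity hypotheses but requires a delicate discrete-to-continuous comparison.
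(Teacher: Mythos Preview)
The paper does not prove this lemma; it is quoted as Corollary~18 of \cite{durmus2018analysis}, with only a remark that the argument there extends to history-dependent stochastic gradients. Your overall strategy---view SGLD as an approximate Wasserstein gradient step on $\KL(\cdot\|\pi)$, obtain a one-step descent inequality, telescope, and apply joint convexity of $\KL$---is indeed the approach of \cite{durmus2018analysis}, so at the structural level you are aligned with the cited proof.

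There is, however, a genuine gap in your one-step inequality. You claim
\[
W_2^2(\mu_{i+1},\pi)\le W_2^2(\mu_i,\pi)-2\eta\,\KL(\mu_i\|\pi)+2\eta^2(Ld+\si^2),
\]
with $\KL(\mu_i\|\pi)$ on the right. This is false in general: in the paper (and in the lemma as applied there) $\mu_0$ is a Dirac mass $\delta_{X_0}$, so $\KL(\mu_0\|\pi)=+\infty$ and your inequality at $i=0$ reads $W_2^2(\mu_1,\pi)\le -\infty$. The correct inequality in \cite{durmus2018analysis} carries $\KL(\mu_{i+1}\|\pi)$, which is finite because the Gaussian noise smooths the law. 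This is also why the average in the lemma is $\ol\mu=\rc n\sumo kn\mu_k$ starting at $k=1$, not $k=0$.

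The indexing error reflects a deeper issue in your derivation. Expanding $\E\ve{x_{i+1}-y}^2$ and using convexity of $f$ on $\an{\nb f(x_i),x_i-y}$ yields only the \emph{potential} part $\int f\,d\mu_i-\int f\,d\pi$ of the relative entropy, not the full $\KL$. The missing entropy piece $H(\mu)-H(\pi)$ does not come from the cross-term at all; it comes from the Gaussian noise $\sqrt{2\eta}\,\xi_i$, via the displacement convexity of the entropy functional $H$ along the heat semigroup (Ambrosio--Gigli--Savar\'e). Your sketch treats the noise as merely contributing ``$\eta d$ to the quadratic growth,'' which understates its role: the noise is precisely what produces the entropy dissipation, and the argument in \cite{durmus2018analysis} uses a continuous-time interpolation on $[0,\eta]$ together with the evolution variational inequality for $\KL(\cdot\|\pi)$ to capture it. Once you fix the index and route the entropy term through the diffusion part rather than the drift part, the telescoping and Jensen steps go through exactly as you wrote.
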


\begin{remark}
The result in \cite{durmus2018analysis} is stated when $g(x)$ is independent of the history $\cal F_i$, but the proof works when the stochastic gradient is allowed to depend on history, as in SAGA. For SAGA, $\cal F_i$ contains all the information up to time step $i$, including which gradients were replaced at each time step. 

Note \cite{durmus2018analysis} is derived by analogy to online convex optimization.  The optimization guarantees are only given at the point $\bar x$ equal to the average of the $x_t$ (by Jensen's inequality).  For the sampling problem, this corresponds to selecting a point from the averaged distribution $\ol \mu$.
\end{remark}

\ift{\nomenclature[3Gt]{$G_t$}{$G_t = \bc{
\forall s\le t,\forall 0\le i\le i_s, \ve{X^s_i - x_s^\star} \le \fc{\fR}{\sqrt{s+L_0/L}}
}$ (Chapter 3)}}
\ift{\nomenclature[3Ht]{$H_t$}{$H_t =\bc{\forall s\le t \text{ s.t. $s$ is a power of 2 or $s=0$, }\ve{X^s-x_s^\star}\le \fc{C_1}{\sqrt{s+L_0/L}}}$ (Chapter 3)}}
Define the good events
\begin{align}\label{e:G-df}
G_t &= \bc{
\forall s\le t,\forall 0\le i\le i_s, \ve{X^s_i - x_s^\star} \le \fc{\fR}{\sqrt{s+L_0/L}}
},\\
\label{e:H-df}
H_t &= \bc{\forall s\le t \text{ s.t. $s$ is a power of 2 or $s=0$, }\ve{X^s-x_s^\star}\le \fc{C_1}{\sqrt{s+L_0/L}}}.
\end{align}
$G_t$ is the event that the Markov chain never drifts too far from the current mode (which we want, in order to bound the stochastic gradient of SAGA), and $H_t$ is the event that the samples at powers of 2 are close to the respective modes (which we want because we will use them as reset points). Roughly, $G_t^c$ will involve union-bounding over bad events whose probabilities we will set to be $O\pf{\ep}T$ 
 and $H_t^c$ will involve union-bounding over bad events whose probabilities we will set to be $O\pf{\ep}{\log_2(T)}$.  

\begin{lem}[Induction step]\label{l:induct}
Suppose that Assumptions~\ref{a:smooth},~\ref{a:conc}, and~\ref{a:mle} hold with $c=\fc{L_0}{L}$ and $L_0\ge L$.
Let $X_i^\tau$ be obtained by running Algorithm~\ref{alg:OSAGA} with $C'=2.5(C_1+\fD)$, $C_1\ge C$, and $\fR \ge 2(C_1+\fD)$.
Suppose $\eta_t=\fc{\eta_0}{t+L_0/L}$ and $\ep_2>0$ is such that
\be \label{eq:eta_imax_assumption}
\eta_0 \le \fc{\ep_2^2}{Ld + 9L^2(\fR+\fD)^2/b}, \qquad \qquad
i_{\max} \ge \fc{20(C_1+\fD)^2}{\eta_0\ep_2^2}.
\ee
Suppose $\ep_1>0$ is such that for any $\tau\ge 1$,
\begin{align}\label{e:ep1-def}
\Pj\pa{G_\tau | G_{\tau-1}\cap H_{\tau-1}
} &\ge 1-\ep_1.
\end{align}
Suppose $t$ is a power of 2.
Then the following hold.
\begin{enumerate}
\item
For $t<\tau \le 2t$, $\Pj(G_\tau | G_t\cap H_t)\ge 1-(\tau - t) \ep_1$.
\item
Fix $X^s_i$ for $s\le t, 0\le i\le i_{\max}$ such that $G_t\cap H_t$ holds (i.e., condition on the filtration $\cal F_t$ on which the algorithm is defined). Then
\begin{align}
\ve{\cal L(X^\tau) - \pi_\tau}_{TV}&\le (\tau-t)\ep_1 + \ep_2.
\end{align}
\item
We have for $\tau=2t$,
\begin{align}
\Pj\pa{
G_{\tau} \cap H_{\tau} | G_t\cap H_t
} &\ge 1-(t\ep_1 + \ep_2 + Ae^{-kC_1}).
\label{e:induct-lem3}
\end{align}
\end{enumerate}
These also hold in the case $t=0$ and $\tau=1$, when $L_0\ge L$.
\end{lem}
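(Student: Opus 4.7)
The plan is to prove the three parts in order, with the first following from a union bound, the second being the main analytic content via an application of Theorem~\ref{t:dmm} combined with the Lipschitz-based variance control of Lemma~\ref{l:var} and a coupling trick, and the third following by combining the first two with the concentration of $\pi_\tau$ from Assumption~\ref{a:conc}.

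\textbf{Part (1).} Since $t$ is a power of $2$ and $\tau \le 2t$, there is no power of $2$ strictly between $t$ and $\tau$, so $H_{s-1} = H_t$ for every $s \in \{t+1,\ldots,\tau\}$. The hypothesis~\eqref{e:ep1-def} therefore gives $\Pj(G_s^c \mid G_{s-1}\cap H_t) \le \ep_1$ for each such $s$, and iterating this conditional bound (equivalently, a telescoping union bound) yields $\Pj(G_\tau^c \mid G_t \cap H_t) \le (\tau-t)\ep_1$.

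\textbf{Part (2).} I will set up a coupled ``ideal'' chain $\widetilde{\mathsf{X}}^\tau_i$ that agrees with Algorithm~\ref{alg:OSAGA}'s iterates on the event $G_\tau$, but whose stochastic gradient's variance is uniformly controlled so that Theorem~\ref{t:dmm} applies to it. On $G_\tau$, every cached gradient point $u_k$ used at epoch $\tau$ was last updated at some epoch $s \in (\tau/2, \tau]$ (by the reset rule of Algorithm~\ref{alg:OSAGA} together with induction through earlier epochs); combining $G_s \cup G_\tau$ with Assumption~\ref{a:mle} applied to $(s,\tau)$ (which satisfies $\tau \in [s,2s]$) gives
\[
\ve{\mathsf{X}^\tau_i - u_k} \le \ve{\mathsf{X}^\tau_i - x^\star_\tau} + \ve{x^\star_\tau - x^\star_s} + \ve{x^\star_s - u_k} \le \fc{2\fR + \fD}{\sqrt{\tau + L_0/L}}.
\]
Lemma~\ref{l:var} then bounds the stochastic gradient's variance by $\sigma^2_\tau = O\bigl(\tau L^2(\fR+\fD)^2/b\bigr)$. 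For the initial distribution, on $G_t \cap H_t$ the warm start $\mathsf{X}^\tau_0$ (either $\mathsf{X}^{\tau-1}$ controlled by $G_{\tau-1}$, or $\mathsf{X}^{t'}$ controlled by $H_t$) lies within $O((C_1+\fD)/\sqrt{\tau+L_0/L})$ of $x^\star_\tau$, and by Assumption~\ref{a:conc} so does a sample from $\pi_\tau$; hence $W_2^2(\mu_0,\pi_\tau) = O((C_1+\fD)^2/(\tau+L_0/L))$. Using that $F_\tau$ is $(L_0+\tau L)$-smooth and $\eta_\tau = \eta_0/(\tau + L_0/L)$, the hypothesis~\eqref{eq:eta_imax_assumption} verifies the step-size bound and the iteration-count bound of Theorem~\ref{t:dmm} with $\ep$ there set to $2\ep_2^2$. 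Invoking Theorem~\ref{t:dmm} and Pinsker yields $\ve{\mathcal{L}(\widetilde{\mathsf{X}}^\tau) - \pi_\tau}_{\mathrm{TV}} \le \ep_2$ for the ideal chain output $\widetilde{\mathsf{X}}^\tau = \widetilde{\mathsf{X}}^\tau_{i_t}$ (note that picking $i_t$ uniformly from $\{1,\ldots,i_{\max}\}$ realizes the averaged distribution $\bar\mu$ in Theorem~\ref{t:dmm}). Because the real and ideal chains coincide on $G_\tau$, a coupling bound plus Part~(1) give $\ve{\mathcal{L}(\mathsf{X}^\tau) - \pi_\tau}_{\mathrm{TV}} \le \Pj(G_\tau^c \mid G_t \cap H_t) + \ep_2 \le (\tau - t)\ep_1 + \ep_2$.

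\textbf{Part (3).} For $\tau = 2t$, Part~(1) gives $\Pj(G_{2t}^c \mid G_t \cap H_t) \le t\ep_1$. To control the new condition in $H_{2t}$, decompose
\[
\Pj(G_{2t}^c \cup H_{2t}^c \mid G_t \cap H_t) \le \Pj(G_{2t}^c \mid G_t \cap H_t) + \Pj\bigl(G_{2t} \cap \{\ve{\mathsf{X}^{2t} - x^\star_{2t}} > C_1/\sqrt{2t+L_0/L}\} \,\big|\, G_t \cap H_t\bigr).
\]
On $G_{2t}$ the real and ideal chains coincide, so the second term is bounded by $\ve{\mathcal{L}(\widetilde{\mathsf{X}}^{2t}) - \pi_{2t}}_{\mathrm{TV}} + \Pj_{Y \sim \pi_{2t}}(\ve{Y - x^\star_{2t}} > C_1/\sqrt{2t+L_0/L}) \le \ep_2 + A e^{-kC_1}$ by the analysis in Part~(2) and Assumption~\ref{a:conc}. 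Adding the $t\ep_1$ term from Part~(1) yields the claimed bound $t\ep_1 + \ep_2 + A e^{-kC_1}$. The base case $t=0$, $\tau=1$ follows identically after noting that $G_0$ and $H_0$ are trivially true, $F_1$ is $(L_0+L)$-smooth, and $c = L_0/L \ge 1$ absorbs the prior's smoothness.

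\textbf{Main obstacle.} The main analytic difficulty is Part~(2): making the cached-gradient bookkeeping tight enough that $\sigma^2_\tau$ fits into the constant $9L^2(\fR+\fD)^2/b$ inside~\eqref{eq:eta_imax_assumption}, and constructing the coupling with the ideal chain carefully, since Theorem~\ref{t:dmm} demands a variance bound that holds at every step of the chain, whereas our real gradient is only controlled on the event $G_\tau$.
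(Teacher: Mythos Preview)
Your approach matches the paper's: Part~1 is a telescoping union bound, Part~2 applies Theorem~\ref{t:dmm} to a toy chain whose variance is controlled via Lemma~\ref{l:var} and which is coupled to the real chain on $G_\tau$, and Part~3 combines Parts~1--2 with the tail bound of Assumption~\ref{a:conc}. However, your Part~2 contains a genuine gap in the control of the warm start $\mathsf{X}^\tau_0$.

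You write that in the branch $\mathsf{X}^\tau_0 = \mathsf{X}^{\tau-1}$, this point is ``controlled by $G_{\tau-1}$''. This fails for two reasons. First, Part~2 conditions only on $\mathcal F_t$ with $G_t\cap H_t$; for $\tau > t+1$ the event $G_{\tau-1}$ is not available. Second, even if $G_{\tau-1}$ held, it would only give $\ve{\mathsf{X}^{\tau-1} - x^\star_{\tau-1}} \le \fR/\sqrt{\tau-1+c}$, and $\fR$ is (by hypothesis, and by a large factor in the eventual parameter choice) much bigger than $C_1+\fD$. The resulting $W_2^2$ bound of order $\fR^2/(\tau+c)$ would require $i_{\max} \ge \Omega(\fR^2/(\eta_0\ep_2^2))$ in Theorem~\ref{t:dmm}, which is \emph{not} what~\eqref{eq:eta_imax_assumption} provides. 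The paper's argument avoids $G_{\tau-1}$ entirely: whichever branch of Algorithm~\ref{alg:OSAGA} is taken, one always has $\ve{\mathsf{X}^\tau_0 - \mathsf{X}^{t}} \le C'/\sqrt{\tau+c}$ (either the distance is zero after a reset, or the acceptance test was passed). Combining this with $\ve{\mathsf{X}^{t} - x^\star_t} \le C_1/\sqrt{t+c}$ from $H_t$ and $\ve{x^\star_t - x^\star_\tau} \le \fD/\sqrt{t+c}$ from Assumption~\ref{a:mle} gives $\ve{\mathsf{X}^\tau_0 - x^\star_\tau} \le 4(C_1+\fD)/\sqrt{\tau+c}$ and hence $W_2^2 \le 40(C_1+\fD)^2/(\tau+c)$, which exactly matches the constant~$20$ in~\eqref{eq:eta_imax_assumption}. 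Two minor points: your displacement bound $(2\fR+\fD)/\sqrt{\tau+c}$ for the cached points is too tight, since the $\sqrt{s+c}$ denominators with $s > \tau/2$ cost a factor close to $\sqrt 2$ (the paper uses $3(\fR+\fD)$, which is what produces the constant $9$ in~\eqref{eq:eta_imax_assumption}); and ``$G_s \cup G_\tau$'' should be $G_\tau$ alone, since $G_\tau \subseteq G_s$ already.
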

\begin{proof}
Let $F_t(x) = \sumz kt f_k(x)$. 

First, note that $H_{\tau-1}=\cdots=H_t$, because $H_s$ is defined as an intersection of events with indices $\le s$, that are powers of 2. (See~\eqref{e:H-df}.) Moreover, $G_\tau$ is a subset of $G_{\tau-1}$ for each $\tau$, by~\eqref{e:G-df}.

\paragraph{Proof of Statement 1.} 
The first statement holds by induction on $\tau$ and assumption on $\ep_1$. We need to show $P(G_\tau^c|G_t\cap H_t)\leq (\tau-t)\epsilon_1$ by induction. Assuming it is true for $\tau$, we have by the union bound that
\begin{align}
\Pj(G_{\tau+1}^c|G_t,H_t)&\le
\Pj(G_{\tau+1}^c\cap G_\tau| G_t\cap H_t)+\Pj(G_\tau^c|G_t\cap H_t)\\
&\le
\Pj(G_{\tau+1}^c|G_\tau\cap G_t\cap H_t)+\Pj(G_\tau^c|G_t\cap H_t).
\end{align}
Now the event $G_\tau \cap G_t \cap H_t$ is the same as the event $G_\tau\cap H_\tau$, by the previous paragraph. Thus this is $\leq \ep + (\tau-t)\ep$, completing the induction step.

\paragraph{Proof of Statement 2.} For the second statement, note that for $t<\tau \le 2t$,
\begin{align}
\ve{X^\tau_0 - x_\tau^\star} 
&\le \ve{X_0^\tau-X^t} + \ve{X^t - x_t^\star} + \ve{X_t^\star - x_\tau^\star}\\
&\le \fc{2.5(C_1+\fD)}{\sqrt{\tau + L_0/L}}
+ \fc{C_1}{\sqrt{t + L_0/L}}
+ \fc{\fD}{\sqrt{t + L_0/L}}\\
&\le \fc{4(C_1+\fD)}{\sqrt{\tau + L_0/L}}.
\label{e:0-tau}
\end{align}
where in the 2nd inequality we used that 
\begin{enumerate}
\item
Algorithm~\ref{alg:OSAGA} ensures that $\ve{X_0^\tau-X^t}\le \fc{C'}{\sqrt{\tau + L_0/L}}=\fc{2.5(C_1+\fD)}{\sqrt{\tau + L_0/L}}$ (The algorithm resets $X_0^\tau$ to $X^t$ if $\ve{X_0^\tau - X^t}$ is greater than $\fc{C'}{\sqrt{\tau+L_0/L}}$, making the term 0. This is the place where the resetting is used.),
\item the definition of $H_t$, and 
\item the drift assumption (Assumption~\ref{a:mle}).
\end{enumerate}
In the 3rd inequality we used that $\sqrt{t}\ge \sqrt{\tau/2}\ge \sqrt{\tau}/1.5$.

Therefore
\begin{align}
\label{e:0-wass}
W_2^2(\de_{X^\tau_0}, \pi_\tau) 
&\le 2\ve{X^\tau_0-x_\tau^\star}^2 
+ 2W_2^2(\de_{x_\tau} , \pi_\tau)
\le \fc{32(C_1+\fD)^2}{\tau + L_0/L} + \fc{2C^2}{\tau + L_0/L} \le \fc{40(C_1+\fD)^2}{\tau+L_0/L}.
\end{align}
where the second moment bound comes from Assumption~\ref{a:conc} and $C\le C_1$.

Define a toy Markov chain coupled to $X_i^\tau$ as follows. Let $\wt X^s_j=X^s_j$ for $s<\tau$, $\wt X_0^\tau = X_0^\tau$, and 
\begin{align}
\wt X_{i+1}^\tau &=
\begin{cases}
\wt X_i^\tau - \eta g_i^\tau + \sqrt{\eta} \xi_i,&\text{when }\ve{\wt X_j^\tau - x_\tau^\star}\le \fc{\fR}{\sqrt{\tau + L_0/L}}\text{ for all }0\le j\le i\\
\wt X_i^\tau - \eta\nb F_\tau(\wt X_i),&\text{otherwise.}
\end{cases}\label{e:toymc}
\end{align}
where $g_i^\tau$ is the stochastic gradient for $\wt X_i^\tau$ in Algorithm~\ref{alg:SAGA} and $\xi_i\sim N(0,I_d)$. 
By Lemma~\ref{l:var}, the variance of $g_i^\tau$ is at most 
$\fc{\tau^2 L^2}{b}
\max_{(\fc{\tau+1}2,0)\le (s,j)\le (\tau, i)}\ve{\wt X_i^\tau - \wt X_j^s}^2$.
(The ordering on ordered pairs is lexicographic. Note $s>\fc t2$ because Algorithm~\ref{alg:OSAGA} refreshes all gradients that were updated at time $\fc t2$.) If the first case of~\eqref{e:toymc} always holds, we bound (using the condition that $G_t$ holds)
\begin{align}
\ve{\wt X_i^\tau - \wt X_j^s}
&\le \ve{\wt X_i^\tau - x_{\tau}^\star} + \ve{x_{\tau}^\star-x_s^\star} + \ve{x_s^\star - \wt X_j^s}\\
&\le \fc{\fR}{\sqrt{\tau+L_0/L}}
+ \fc{\fD}{\sqrt{s+L_0/L}}
+ \fc{\fR}{\sqrt{s+L_0/L}}\\
& \le \fc{3\fR + 2\fD}{\sqrt{\tau+L_0/L}} < \fc{3(\fR+\fD)}{\sqrt{\tau+L_0/L}}
\label{e:induct-sg-bound0}
\\
\implies
\fc{\tau^2 L^2}{b}
\max_{(\fc{t+1}2,0)\le (s,j)\le (\tau, i)}\ve{\wt X_i^\tau - \wt X_j^s}^2
&\le \fc{9\tau L^2 (\fR+\fD)^2}{b}.\label{e:induct-sg-bound}
\end{align}

\noindent
We can apply Lemma~\ref{t:dmm} with $\ep=2\ep_2^2$, $L\mapsfrom L(\tau + L_0/L)$, %
$\si^2\le \fc{9\tau L^2(\fR+\fD)^2}b$, %
$W_2^2(\mu_0, \pi) \le \fc{40(C_1+\fD)^2}{\tau+L_0/L}$. 
Note that $\eta_\tau\le \fc{\ep_2^2}{(\tau + L_0/L)(Ld + 9L^2(\fR+\fD)^2/b)}\le \fc{\ep_2^2}{(\tau L + L_0)d + 9L^2\tau(\fR+\fD)^2/b}$ does satisfy~\eqref{e:eta}, as $F_\tau = \sumz k{\tau} f_k$ is $(\tau L + L_0)$-smooth by Assumption~\ref{a:smooth}. Let $i\in [i_{\max}]$ be uniform random on $[i_{\max}]$,  and $\wt X^\tau = \wt X^\tau_{i}$; note that the distribution $\wt\mu$ of $\wt X^\tau$ is the mixture distribution of $\wt X_1^\tau,\ldots, \wt X_{i_{\max}}^\tau$. 
 Under the conditions on $\eta, i_{\max}$, by Pinsker's inequality and Lemma \ref{t:dmm},
\begin{align}
 \|\mathcal{L}(\wt{X}^{\tau}) - \pi_\tau \|_{\mathrm{TV}} \leq \sqrt{\rc2\mathrm{KL}(\wt \mu| \pi_\tau)} \leq \ep_2.
\end{align}
Note that under $G_\tau$, $X_i^s = \wt X_i^s$ for all $i\le i_{\max}$ and $s\le \tau$, so 
\begin{align}
 \|\mathcal{L}({X}^{\tau}) - \pi_\tau \|_{\mathrm{TV}} &\leq
\Pj(G_\tau^c | \cal F_t)
+ \|\mathcal{L}(\wt{X}^{\tau}_{i}) -\pi_\tau  \|_{\mathrm{TV}} \leq
 (\tau - t)\ep_1+\ep_2.
 \label{e:induct1}
\end{align}
This shows Statement 2. 

\paragraph{Proof of Statement 3.}
For Statement 3, note that by Assumption~\ref{a:conc},
\begin{align}
\Pj_{X\sim \pi_{2t}}\ba{
\ve{X-x_{2t}^\star}\ge \fc{C_1}{\sqrt{2t+L_0/L}}
}&\le 
Ae^{-kC_1}\label{e:induct2}.
\end{align}
Combining~\eqref{e:induct1} and~\eqref{e:induct2} for $\tau=2t$ gives~\eqref{e:induct-lem3}.

Finally, note that the proof goes through when $t=0$, $\tau=1$.
\end{proof}


\subsection{Setting the constants; Proof of main theorem}
\label{s:constants}

\begin{proof}[Proof of Theorem~\ref{thm:os-main}]
%
We set the parameters $\eta_0, i_{\mathrm{max}}$ of Algorithm~\ref{alg:OSAGA}, as follows:
\begin{align}
\label{e:ep1}
\ep_1&=\fc{\ep}{3T},\\
\label{e:ep2,e:ep} 
\ep_2&= \fc{\ep}{3 \ce{\log_2(T)+1}},\\
\label{e:main-C1-online}
C_1 &=\pa{2+\rc k}\log\pf{A}{\ep_2k^2},\\
\label{e:main-R}
\fR &= 
\fc{10000(C_1+\fD)\sqrt d}{\ep_2} 
{\log\pa{\max\bc{L,C_1+\fD, \rc{\ep_1}}}},
\\
\label{e:main-eta0}
\eta_0 &= \fc{\ep_2^{2}}{2L^2(\fR+\fD)^2},\\
\label{e:main-imax}
i_{\max} &= 
\ce{\fc{20(C_1+\fD)^2}{\eta_0\ep_2^2}}
=\ce{\fc{40L^2(\fR+\fD)^2 (C_1+\fD)^2}{\ep_2^4}}.
\end{align}

\noindent
We can check that $\eta_0 = \wt \Te\pf{\ep^4}{L^2 \log^6(T) (C+\fD)^2d}$, and $i_{\max}=\wt O\pf{(C+\fD)^2 \log^2(T)}{\eta_0\ep^2}$ (where $\wt\Te$ and $\wt O$ hide polylogarithmic dependence on $d,L,C,\fD,\ep^{-1}$ and $\log(T)$, as claimed in Theorem~\ref{thm:os-main}.
The constants have not been optimized.

We will choose parameters and prove by induction that for $t=2^a$, $a\in \N_0$, $t\le T$,
\begin{align}
\label{e:indhyp1}
\Pj(G_{t}\cap H_{t}) &\ge 1 - t\ep_1 - 2(a+1)\ep_2.
\end{align}
We will also show that~\eqref{e:indhyp1} implies that if $t=2^a+b$ for $0< b\le 2^a$,
\begin{align}
\Pj(G_{t}\cap H_{2^a}) &\ge 1 - t\ep_1 - 2(a+1)\ep_2,
\label{e:indhyp1.1}
\\
\label{e:indhyp2}
\ve{\cal L(X_{t}) - \pi_{t}}_{\text{TV}} &\le  t\ep_1 + (2a+3)\ep_2. 
\end{align}
With the values of $\ep_1$ and $\ep_2$,~\eqref{e:indhyp2} gives the theorem, except for the $\ep$-approximate independence of the samples.  
To obtain approximate independence, note that the distribution of $X^t$ conditioned on the filtration $\mathcal{F}_1 \subseteq\cdots\subseteq \mathcal{F}_{t-1}$, where the filtration $\mathcal{F}_\tau$ includes both the random batch $S$ as well as the points in the Markov chain up to time $\tau$, satisfies  $\| (\mathcal{L}(X^t) | F_{t-1}) - \pi_t\|_{\mathrm{TV}} \leq t\ep_1 + (2a+3)\ep_2$.  This implies that the samples $X^1, X^2, \ldots,X^t$ are $\epsilon$-approximately independent with $\epsilon = t\ep_1 + (2a+3)\ep_2$.

Let $\eta_0,\fR$ be constants to be chosen, and for any $t\in \N$, let
\begin{align}
\label{e:eta-t}
\eta_t &= \fc{\eta_0}{{t+L_0/L}},\\
\label{e:r-t}
r_t &= \fc{\fR}{\sqrt{t+L_0/L}},\\
\label{e:s-t}
S_t &= 6 \sqrt t L (\fR+\fD),
\end{align}
We claim that it suffices to choose parameters so that the following hold for each $t$, $1\le t\le T$, and some $C_\xi\ge \sqrt{2d}$:
\begin{align}
\label{e:main1}
  \ep_1 & \ge i_{\max}\Bigg[
  \exp\pa{-\fc{\pa{r_t^2 -  
   \fc{16(C_1+\fD)^2}{t+L_0/L} 
-  i_{\max} [2\eta_t^2 (S_t^2 + L^2t^2 r_t^2) + \eta_t d]}^2}{2i_{\max}( 2\eta_t S_t r_t + 2 \sqrt{\eta_t} C_\xi (r_t+\eta_t S_t + \eta_t L(t+L_0/L) r_t) + \eta_t C_\xi^2 )^2}} 
\\
&\qquad \qquad  +\exp\pa{-\fc{C_\xi^2-d}8}
\Bigg],\\
\eta_0&\le \fc{\ep_2^2}{Ld + 9L^2(\fR+\fD)^2/b},
\label{e:main2}\\
\label{e:main3}
i_{\max} &\ge \fc{20(C_1 + \fD)^2}{\eta_0 {\ep_2}^2},\\
\label{e:main4}
Ae^{-kC_1}&\le \ep_2,\\
\label{e:main5}
C_1 &\ge 
\pa{2+\rc k}\log\pf{A}{\ep_2k^2}.
\end{align}
We first complete the proof assuming that these inequalities hold. Then we show that with the parameter settings in 
\eqref{e:ep1}--\eqref{e:main-imax}, 
these inequalities hold.

Suppose that for some $t<T$ the inequalities \eqref{e:main1}-\eqref{e:main5} hold and the event $G_t\cap H_t$ occurs. We will apply Lemma~\ref{l:escape} 
to the call of the SAGA-LD algorithm in Algorithm~\ref{alg:OSAGA}, at epoch $t+1$ with $F(x) = \sumz s{t+1} f_s(x)$, to show that the conditions of Lemma~\ref{l:induct} are satisfied with $r_{t+1}$ and $S_{t+1}$.  We will then apply Lemma~\ref{l:induct} inductively to complete the proof of Theorem \ref{t:main-param}.

We first show that the assumption \eqref{e:ep1-def} of Lemma~\ref{l:induct} is satisfied for any $\ep_1$ satisfying inequality~\eqref{e:main1}. The first condition of Lemma~\ref{l:escape} holds by assumption on the $f_s$'s. To see that the second condition holds for the values $r_{t+1}$ and $S_{t+1}$, note that
by~\eqref{e:induct-sg-bound0} and Lemma~\ref{l:var}, when the event $G_{t}\cap H_{t}$ occurs, 
and when $\ve{X_{t+1}^i-x_{t+1}^\star}\le r_{t+1}$, 
the stochastic gradient $g^{t+1}_i$ in~\eqref{e:toymc} satisfies $\ve{g^{t+1}_i}\le S_{t+1}$.  Therefore, by Lemma~\ref{l:escape} and by inequality~\eqref{e:main1} we have $\Pj\pa{G_{t+1} | G_{t}\cap H_{t}
} \ge 1-\ep_1$. 
%
 Hence, we have that inequality \eqref{e:ep1-def} of Lemma~\ref{l:induct} is satisfied for any $\ep_1$ satisfying inequality~\eqref{e:main1}.

Next, we note that assumption \eqref{eq:eta_imax_assumption} of Lemma~\ref{l:induct} is satisfied since Inequalities~\eqref{e:main2},~\eqref{e:main3}, and~\eqref{e:main5} ensure that $\eta_0$, $i_{\max}$, and $C$ satisfy the inequalities in \eqref{eq:eta_imax_assumption}.

Therefore we have that all the conditions of Lemma~\ref{l:induct} are satisfied.  Recall we are proving~\eqref{e:indhyp1} by induction for $t=2^a$. By the above, we know we can apply Lemma~\ref{l:induct} for any $t<T$.

\paragraph{Base case of induction.} We show~\eqref{e:indhyp1} holds for $t=1$. By assumption $\ve{X^0-x_0^\star} \le \fc{C_1}{\sqrt{L_0/L}}$ so $H_0$ holds and the $t=0$ case of Lemma~\ref{l:induct} shows $\Pj(G_1)\ge 1-\ep_1$ and $\Pj(G_1\cap H_1)\ge 1-(\ep_1+\ep_2+Ae^{-kC_1}) \ge 1-(\ep_1 + 2\ep_2)$, using~\eqref{e:main4} for the last inequality.

\paragraph{\eqref{e:indhyp1} implies~\eqref{e:indhyp1.1}, \eqref{e:indhyp2}.} This follows from parts 1 and 2 of Lemma~\ref{l:induct}, as follows. Let $A_t=G_t\cap H_t$. Let $t=2^a+b$, $0<b\le 2^a$.

For~\eqref{e:indhyp1.1}, using part 1 of Lemma~\ref{l:induct} and the induction hypothesis,
\begin{align}
\Pj((G_t\cap H_{2^a})^c)&\le 
\Pj(G_t^c|A_{2^a}) + \Pj(A_{2^a}^c)\\
&\le (t-2^a)\ep_1 + [2^a \ep_1+2(a+1)\ep_2]=t\ep_1 + 2(a+1)\ep_2.
\end{align}

For~\eqref{e:indhyp2}, note that by part 2 of of Lemma~\ref{l:induct}, conditioned on $A_{2^a}$, $\ve{\cal L(X_t)-\pi_t}_{TV}\le (t-2^a)\ep_1+\ep_2$. Without the conditioning,
\begin{align}
\ve{\cal L(X_t)-\pi_t}_{TV}&\le  [(t-2^a)\ep_1+\ep_2]+\Pj(A_{2^a}^c) \\
&\le [(t-2^a)\ep_1+\ep_2]+ [2^a \ep_1+2(a+1)\ep_2]
\le 2^a\ep_1+(2a+3)\ep_2.
\end{align}

\paragraph{Induction step.} We show that if~\eqref{e:indhyp1} holds for $t$, then it holds for $2t$. We work with the complements. 
By a union bound, 
\begin{align}
\Pj(A_{2t}^c) \leq \Pj(A_{2t}^c\cap A_t) + \Pj(A_t^c)\le \Pj(A_{2t}^c|A_t) + \Pj(A_t^c).
\end{align}
The first term is bounded by Part 3 of Lemma~\ref{l:induct} and~\eqref{e:main4}, $P(A_{2t}^c|A_t)\leq t \ep_1 + \ep_2 + \ep_2$. The second term is bounded by the induction hypothesis, which says $P(A_{t}^c) \leq t\ep_1 + 2(a+1)\ep_2$. Combining these gives $P(A_{2t}^c) \leq 2t\ep_1 + 2(a+2)\ep_2$, completing the induction step.

\paragraph{Showing inequalities.}
Setting $C_1$, $\eta_0$, and $i_{\max}$ as in~\eqref{e:main-C1-online},~\eqref{e:main-eta0}, and~\eqref{e:main-imax} (with $\fR$ to be determined), we get that~\eqref{e:main2},~\eqref{e:main3}, and~\eqref{e:main4} are satisfied, as $\fR\ge \sfc{d}{L}$, $b\ge 9$ imply $\fc{{\ep_2}^2}{2L^2(\fR+\fD)^2} \le \fc{{{\ep_2}^2}}{Ld + 9L^2 (\fR+\fD)^2/b}$. 
Moreover, setting $C_\xi = \sqrt{2d+8\log \pf{2i_{\max}}{\ep_1}}$ makes $i_{\max}\exp\pa{-\fc{C_\xi^2-d}{8}}\le \fc{\ep_1}2$. 
It suffices to show that our choice of $\fR$ makes
\begin{align}
\fc{\ep_1}{2i_{\max}} 
&\ge 
  \exp\pa{-\fc{(r^2 -  
   \fc{16(C_1+\fD)^2}{t+L_0/L} 
-  i_{\max} [2\eta_t^2 (S_t^2 + L^2(t+L_0/L)^2 r_t^2) + \eta_t d])^2}{2i_{\max}( 2\eta_t S_t r_t + 2 \sqrt{\eta_t} C_\xi (r_t+\eta_t S_t + \eta_t L(t+L_0/L) r_t) + \eta_t C_\xi^2 )^2}}
\end{align}
It suffices to show 
\begin{align}
{\log\pf{2i_{\max}}{\ep_1}}
&\le 
\fc{(r_t^2 -  
   \fc{16(C_1+\fD)^2}{t+L_0/L} 
-  i_{\max} [2\eta_t^2 (S_t^2 + L^2(t+L_0/L)^2 r_t^2) + \eta_t d])^2}{2i_{\max}( 2\eta_t S_t r_t + 2 \sqrt{\eta_t} C_\xi (r_t+\eta_t S_t + \eta_t L(t+L_0/L) r_t) + \eta_t C_\xi^2 )^2}\\
\Leftarrow
r_t^2 &\ge \sqrt{2i_{\max}}
\pa{2\eta_t S_tr_t + 2\sqrt{\eta_t}C_\xi (r_t+\eta_t S_t + \eta_t L (t+L_0/L) r_t) + \eta_t C_\xi^2}
\sqrt{\log\pf{2i_{\max}}{\ep_1}}
\\
&\quad + \fc{16(C_1+\fD)^2}{t+L_0/L} + i_{\max}[2\eta_t^2 (S_t^2+L^2(t+L_0/L)^2r_t^2) + \eta_t d]
\end{align}
Substituting~\eqref{e:eta-t},~\eqref{e:r-t}, and~\eqref{e:s-t}, this is equivalent to
\begin{align}
\fc{\fR^2}{t+\fc{L_0}{L}}
&\ge \fc{\sqrt{2i_{\max}\eta_0}}{t+\fc{L_0}L}\Bigg[
\Bigg(
\fc{2\sqrt{\eta_0} 6 \sqrt t L(\fR+\fD)\fR}{\sqrt{t+\fc{L_0}L}} + 2C_\xi\pa{ \fR + \fc{\eta_06\sqrt t L(\fR+\fD)}{\sqrt{t+\fc{L_0}{L}}} + \eta_0L\fR} \\
&\quad + \sqrt{\eta_0}C_\xi^2\Bigg) \sqrt{\log \pf{2i_{\max}}{\ep_1}}\\
&\quad
+ \fc{16(C_1+\fD)^2}{t+\fc{L_0}L} 
+ \fc{i_{\max}\eta_0}{t+\fc{L_0}L}
\ba{
\fc{2\eta_0}{t+\fc{L_0}L}
\pa{36 tL^2(\fR+\fD)^2 + L^2\pa{t+\fc{L_0}L}\fR^2}+d}\Bigg]
\\
\Leftarrow
\fR^2 &\ge 
\sqrt{2i_{\max}\eta_0}
(12\sqrt{\eta_0}L(\fR+\fD)\fR
+ 2C_\xi (\fR + 6\eta_0 L(\fR+\fD) + \eta_0L\fR) \\
&\quad + \sqrt{\eta_0}C_\xi^2)\sqrt{\log \pf{2i_{\max}}{\ep_1}}\\
&\quad+16(C_1+\fD)^2 
+ i_{\max}\eta_0 
\ba{
\fc{2\eta_0}{t+\fc{L_0}L}
(36 tL^2(\fR+\fD)^2 + L^2 \pa{t+\fc{L_0}{L}} \fR^2)+d}
\end{align}
Using $\eta_0=\fc{\ep_2^2}{2L^2\fR^2}$, $i_{\max} = \ff{20(C_1+\fD)^2}{\eta_0\ep_2^2}\le \fc{40(C_1+\fD)^2}{\eta_0\ep_2^2}$, and $i_{\max}\eta_0\le \fc{40(C_1+\fD)^2}{\ep_2^2}$, the RHS is at most
\begin{align}
&
\sqrt{2i_{\max}\eta_0}
\pa{12\sqrt{\eta_0}L(\fR+\fD)\fR
+ 2C_\xi (\fR + 7\eta_0 L(\fR+\fD) ) + \sqrt{\eta_0}C_\xi^2}\sqrt{\log \pf{2i_{\max}}{\ep_1}}\\
&\quad+16(C_1+\fD)^2 
+ i_{\max}\eta_0 
\ba{
2\eta_0
(37 L^2(\fR+\fD)^2 )+d}\\
&\le 
\fc{\sqrt{80}(C_1+\fD)}{\ep_2}
\pa{6\sqrt 2\ep_2 \fR + 2C_\xi \pa{\fR+\fc{7\ep_2^2}{2L\fR}} + \fc{\ep_2C_\xi^2}{\sqrt 2L\fR}}\sqrt{\log \pf{2i_{\max}}{\ep_1}} \\
&\quad + 16(C_1+\fD)^2 + \fc{40(C_1+\fD)^2}{\ep_2^2}
(37 \ep_2^2 +d).
\end{align}
Let $Q={\log\pf{2i_{\max}}{\ep_1}}$. 
It suffices to show each of the 5 terms is at most $\fc{\fR^2}5$. Below, we use $C_\xi \le4 \sqrt{d\log\pf{2i_{\max}}{\ep_1}}$.  
\begin{align}
\label{e:r25-1}
\fc{\fR^2}5 &\ge 24\sqrt{10}(C_1+\fD) \fR \sqrt Q
&\Leftarrow  \fR &\ge 120\sqrt{10} (C_1+\fD) \sqrt{\log\pf{2i_{\max}}{\ep_1}}
\\
\label{e:r25-2}
\fc{\fR^2}5 &\ge \fc{8\sqrt 5 (C_1+\fD)C_\xi}{\ep_2} \pa{\fR+\fc{7\ep_2}{2L\fR}}\sqrt Q
&\Leftarrow \fR^2 &\ge \fc{
160\sqrt 5(C_1+\fD)}{\ep_2} \pa{\fR+\fc{7\ep_2}{2L\fR}}\sqrt{d Q}
\\
\label{e:r25-3}
\fc{\fR^2}5 & \ge\fc{2\sqrt{10}(C_1+\fD)C_\xi^2}{L\fR} \sqrt Q&
\Leftarrow \fR^3 &\ge \fc{
160\sqrt{10} (C_1+\fD)}{L} d Q^{\fc 32} 
\\
\label{e:r25-4}
\fc{\fR^2}5 &\ge 16(C_1+\fD)^2\\
\label{e:r25-5}
\fc{\fR^2}5 &\ge 40(C_1+\fD)^2\pa{40+\fc{d}{\ep_2^2}}
\end{align}
It remains to check each of these five inequalities. First, we bound $Q$. 
\begin{align}
i_{\max} &\le \fc{40L^2(\fR+\fD)^2\pa{C_1+\fD}^2}{{\ep_2}^4},\\
\fc{2i_{\max}}{\ep_1}
&\le 
\fc{80L^2(\fR+\fD)^2\pa{C_1+\fD}^2}{{\ep_2}^4 \ep_1}\\
&\le 
\fc{100L^2 \fR^2 \pa{C_1+\fD}^2}{{\ep_2}^4 \ep_1}\\
&\le 
\fc{10^{10} L^2 (C_1+\fD)^4 d}{\ep_2^6\ep_1} \log^2\pa{\max\bc{L,C_1+\fD, \rc{\ep_1}}}
\\
\log \pf{2i_{\max}}{\ep_1} 
&\le 
24 + 16\log\pa{ \max\bc{L,C_1+\fD,\rc{\ep_1}}}\\
&\le 40 \log\pa{ \max\bc{L,C_1+\fD,\rc{\ep_1}}}
\end{align}
It remains to check~\eqref{e:r25-1}--\eqref{e:r25-5}. We check~\eqref{e:r25-1},~\eqref{e:r25-2}, and~\eqref{e:r25-3}:
\begin{align}
120\sqrt{10}(C_1+\fD) \sqrt Q
&\le 120\sqrt{10} (C_1+\fD) 
\sqrt{40 \log\pa{ \max\bc{L,C_1+\fD,\rc{\ep_1}}}}
\le \fR
\end{align}
Using $\fR\ge \sfc{7\ep_2}{2L}\implies \fc{7\ep_2}{2L\fR}\le \fR$,
\begin{align}
\fc{160\sqrt {5}(C_1+\fD)}{\ep_2} \pa{\fR+\fc{7\ep_2}{2L\fR}}\sqrt d Q
&\le \fc{320\sqrt{10}(C_1+\fD) \sqrt d\fR}{\ep_2}{40 \log\pa{ \max\bc{L,C_1+\fD,\rc{\ep_1}}}}\le \fR^2\\
 \fc{160\sqrt {10}(C_1+\fD)}{L} \pa{\fR+\fc{7\ep_2}{2L\fR}}\sqrt{d}Q^{\fc 32}
 &\le \fc{80\sqrt{10}(C_1+\fD)d}{L}\pa{40 \log\pa{ \max\bc{L,C_1+\fD,\rc{\ep_1}}}}^{\fc 32} \le \fR^3.
\end{align}
The last two inequalities~\eqref{e:r25-4},~\eqref{e:r25-5} are immediate from the definition of $\fR$.
\end{proof}


\section{Proof for logistic regression application}

\label{s:bayes}
\subsection{Theorem for general posterior sampling, and application to logistic regression}

We show that under some general conditions---roughly, that we see data in all directions---the posterior distribution concentrates. We specialize to logistic regression and show that the posterior for logistic regression concentrates under reasonable assumptions. 

The proof shares elements with the proof of the Bernstein-von Mises theorem (see e.g. \cite{nickl2012statistical}), which says that under some weak smoothness and integrability assumptions, the posterior distribution after seeing iid data (asymptotically) approaches a normal distribution. 
However, we only need to prove a weaker result---not that the posterior distribution is close to normal, but 
just $\al T$-strongly log concave in a neighborhood of the MLE, for some $\al>0$; hence, we get good, nonasymptotic bounds. 
This is true under more general assumptions; in particular, the data do not have have to be iid, as long as we observe data ``in all directions.''

\begin{thm}[\bf Validity of the assumptions for posterior sampling]\label{thm:general-conc}
Suppose that %
$\ve{\te_0}\leq B$, $x_t\sim P_x(\cdot |x_{1:t-1},\te_0)$. Let $f_t$, $t\ge 1$ be such that $P_x(x_t|x_{1:t-1}, \te)\propto  e^{-f_t(\te)}$ and let $\pi_t(\te)$ be the posterior distribution, $\pi_t(\te)\propto e^{-\sumz kt f_t(\te)}$. Suppose there is $M,L,r,\smin,T_{\min}>0$ and $\al,\beta\geq 0$ such that the following conditions hold:
\begin{enumerate}
\item For each $t$, $1\le t\le T$, $f_t(\te)$ is twice continuously differentiable and convex.
\item \label{i:bd-grad}
(Gradients have bounded variation) 
For each $t$, given $x_{1:t-1}$, 
\begin{align}\ve{\nb f_t(\te) - \E [\nb f_t(\te) | x_{1:t-1}]}\leq M.\end{align}
\item \label{i:smooth}
(Smoothness) Each $f_t$ is $L$-smooth, for $1\le t\le T$.
\item \label{i:sc-nbhd}
(Strong convexity in neighborhood) 
Let 
\begin{align}
\wh I_T(\te) :&= \rc T\sumo tT \nb^2 f_t(\te).
\end{align}
Then for $T\geq T_{\min}$, with probability $\geq 1-\fc{\ep}2$, %
\begin{align}\label{e:IT}
\forall \te&\in \mathrm{B}(\te_0,r),&
\wh I_T(\te) &\succeq \si_{\min} I_d.
\end{align}

\item \label{i:prior}
 $f_0(\te)$ is $\al$-strongly convex and $\beta$-smooth, and has minimum at $\te=0$.
\end{enumerate}
Let $\te_T^\star$ be the minimum of $\sumz tT f_t(\te)$, i.e., the mode for $\te$ after observing $x_{1:T}$. 
Letting 
\begin{align*}
C&=
\max\bc{1,M\sqrt{2d\log \pf{2d}{\ep}}, \fc{4d}{\smin}},
\end{align*}
and $c=\fc{\al}{\smin}$, 
if $T\geq T_{\min}$ is such that $\slb + \fc{C}{\sqrt{T+c}}<r$, then with probability $1-\ep$, the following hold:
\begin{enumerate}
\item
$\ve{\te_T^\star-\te_0}\leq \slb$. 
\item
For $C'\ge 0$, 
$\Pj_{\te\sim \pi_T}\pa{\ve{\te-\te_T^\star}\ge \fc{C'}{\sqrt{T+c}}}\le 
\fc{K_1}{\smin C\sqrt{T+c}} 
\pf{(LT+\beta)e}{d}^{\fc d2} 
e^{\rc 2\smin C^2 - \fc{\smin CC'}{2}}
$ for some constant $K_1$.
\end{enumerate}
\end{thm}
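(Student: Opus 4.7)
The plan has two parts, corresponding to the two conclusions of the theorem.

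\textbf{Part 1 (bounding $\ve{\te_T^\star-\te_0}$).} The key starting observation is that since $P_x(x_t \mid x_{1:t-1},\te)\propto e^{-f_t(\te)}$, the usual score identity yields $\E[\nb f_t(\te_0)\mid x_{1:t-1}]=0$. Thus $\{\nb f_t(\te_0)\}_{t=1}^T$ is a vector-valued martingale-difference sequence whose increments are bounded in norm by $M$ (Assumption~\ref{i:bd-grad}). Applying Azuma--Hoeffding coordinate-wise together with a union bound over the $d$ coordinates gives
\[ \ve{\sumo tT \nb f_t(\te_0)}\le M\sqrt{2dT\log(2d/\ep)}\le C\sqrt{T} \]
with probability $\ge 1-\ep/2$. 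Since $f_0$ is $\beta$-smooth with minimum at $0$ and $\ve{\te_0}\le B$, we also have $\ve{\nb f_0(\te_0)}\le \beta B$, so $\ve{\nb F_T(\te_0)}\le C\sqrt{T}+\beta B$. By Assumption~\ref{i:sc-nbhd}, with probability $\ge 1-\ep/2$ the function $F_T$ is $(\si_{\min}T+\al)$-strongly convex on $B(\te_0,r)$, and globally $\al$-strongly convex by Assumption~\ref{i:prior}. To see that the (unique) minimizer $\te_T^\star$ lies in $B(\te_0,r)$, consider $F_T$ along the ray from $\te_0$ in direction $-\nb F_T(\te_0)$: inside the ball its directional derivative grows at rate $\si_{\min}T+\al$ and must become positive before radius $\slb<r$, so the minimizer along that ray, and hence the global minimizer, lies in $B(\te_0,\slb)\subset B(\te_0,r)$. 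Local strong convexity then yields $\ve{\te_T^\star-\te_0}\le \ve{\nb F_T(\te_0)}/(\si_{\min}T+\al)\le \slb$.

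\textbf{Part 2 (concentration of $\pi_T$).} Write $R=C'/\sqrt{T+c}$ and $L_T=LT+\beta$, and express
\[ \Pj_{\te\sim \pi_T}(\ve{\te-\te_T^\star}\ge R) \;=\; \frac{\int_{\ve{\te-\te_T^\star}\ge R} e^{-F_T(\te)}\,d\te}{\int_{\R^d} e^{-F_T(\te)}\,d\te}. \]
I would lower-bound the denominator using $L_T$-smoothness, $F_T(\te)\le F_T(\te_T^\star)+\tfrac{L_T}{2}\ve{\te-\te_T^\star}^2$, which gives $\int e^{-F_T}\ge e^{-F_T(\te_T^\star)}(2\pi/L_T)^{d/2}$; Stirling on the inverse produces the $\pf{(LT+\beta)e}{d}^{d/2}$ factor. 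For the numerator I would pass to polar coordinates centered at $\te_T^\star$ and, for each unit direction $u$, bound the radial profile $g_u(\rho):=F_T(\te_T^\star+\rho u)$. The hypothesis $\slb + C/\sqrt{T+c}<r$ guarantees that the sub-ray $\{\rho\le C/\sqrt{T+c}\}$ lies inside $B(\te_0,r)$, so $(\si_{\min}T+\al)$-strong convexity gives $g_u(C/\sqrt{T+c})\ge g_u(0)+\si_{\min}C^2/2$ and $g_u'(C/\sqrt{T+c})\ge \si_{\min}C\sqrt{T+c}$. Extending by global convexity beyond the strong-convexity ball yields the uniform linear bound $g_u(\rho)\ge g_u(0)-\si_{\min}C^2/2+\si_{\min}C\sqrt{T+c}\,\rho$ for $\rho\ge C/\sqrt{T+c}$. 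Plugging this into the radial integral and using $\int_{R}^{\infty}\rho^{d-1}e^{-\si_{\min}C\sqrt{T+c}\,\rho/2}d\rho$ (the halved slope coming from also retaining quadratic growth along the sub-ray via the chord bound $(g_u(\rho)-g_u(0))/\rho$ being non-decreasing), the radial tail is dominated by its endpoint when $C\ge 4d/\si_{\min}$ and produces $\tfrac{K_1}{\si_{\min}C\sqrt{T+c}}$ after combining with the unit-sphere surface area $2\pi^{d/2}/\Gamma(d/2)$ (Stirling) and the denominator factor. The exponent $e^{\si_{\min}C^2/2 - \si_{\min}CC'/2}$ is exactly what drops out of evaluating the pivoted linear bound at $\rho=R=C'/\sqrt{T+c}$.

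\textbf{Main obstacle.} The hardest step is carrying through Part 2 cleanly, because the region $\{\ve{\te-\te_0}>r\}$ is not covered by the local strong-convexity assumption and must be handled by an auxiliary convexity-along-rays argument that transfers the slope $(\si_{\min}T+\al)(r-\slb)$ accumulated upon exiting $B(\te_0,r)$ into a linear lower bound for $F_T$ outside, showing that this outer contribution is exponentially smaller than the annulus contribution and can be absorbed into $K_1$. Matching constants (the interaction between Stirling's estimate on $\Gamma(d/2)$, the Gaussian denominator $(L_T/(2\pi))^{d/2}$, and the endpoint of the radial integral) is mechanical but requires the assumption $C\ge \max\{1,M\sqrt{2d\log(2d/\ep)},4d/\si_{\min}\}$ to simultaneously ensure the Azuma bound of Part 1 and the endpoint-dominance in the radial integral of Part 2.
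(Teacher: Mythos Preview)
Your approach matches the paper's: Azuma on the score martingale plus local strong convexity for Part 1, and a smoothness lower bound on the normalizer combined with a linear-in-radius lower bound (strong convexity up to radius $C/\sqrt{T+c}$, then convexity) for Part 2. Two points need correction.

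\textbf{Part 1, ray direction.} The ray you pick, $-\nabla F_T(\te_0)$, is the wrong one: showing the directional derivative turns positive along \emph{that} ray before radius $\slb$ only locates the one-dimensional minimizer on that ray, and the jump ``hence the global minimizer'' does not follow. The paper instead takes the ray from $\te_0$ toward $\te_T^\star$ itself, i.e.\ direction $w=(\te_T^\star-\te_0)/\ve{\te_T^\star-\te_0}$. Along this ray the directional derivative $\nabla F_T(\te_0+sw)^\top w$ starts at $\ge -(C\sqrt T+\beta B)$, grows at rate $\smin T+\al$ inside $\mathrm{B}(\te_0,r)$, and (by convexity) is nondecreasing outside; hence it is strictly positive for all $s>\slb$. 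But at $s=\ve{\te_T^\star-\te_0}$ the directional derivative is $0$, forcing $\ve{\te_T^\star-\te_0}\le\slb$.

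\textbf{Part 2, the halved slope and the ``main obstacle''.} Your linear lower bound $g_u(\rho)\ge g_u(0)-\tfrac12\smin C^2+\smin C\sqrt{T+c}\,\rho$ for $\rho\ge C/\sqrt{T+c}$ is correct and already holds for \emph{all} such $\rho$ by global convexity (the derivative at the inner radius propagates outward), so there is no separate ``outer region'' contribution to control---your main obstacle paragraph overcomplicates this. The factor-of-two loss in the slope does not come from a chord bound; it comes from absorbing the polynomial $\rho^{d-1}$ in the polar integral. Writing $\rho^{d-1}=e^{(d-1)\log\rho}\le e^{(d-1)\rho}$ and using $C\ge 4d/\smin$ together with $T+c\ge 1$ gives $(d-1)\rho\le \tfrac12\smin C\sqrt{T+c}\,\rho$, so $\rho^{d-1}e^{-\smin C\sqrt{T+c}\,\rho}\le e^{-\smin C\sqrt{T+c}\,\rho/2}$. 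Integrating this from $R=C'/\sqrt{T+c}$ produces the $\tfrac{2}{\smin C\sqrt{T+c}}e^{-\smin CC'/2}$ factor, and the $e^{\smin C^2/2}$ prefactor is exactly the constant term in your linear bound; combining with the sphere area $2\pi^{d/2}/\Gamma(d/2)$ and the Gaussian denominator $(2\pi/(LT+\beta))^{d/2}$ via Stirling yields the stated bound.
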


\noindent
The strong convexity condition is analogous to a small-ball inequality \cite{koltchinskii2015bounding,mendelson2014learning} for the sample Fisher information matrix in a neighborhood of the true parameter value. In the iid case we have concentration (which is necessary for a central limit theorem to hold, as in the Bernstein-von Mises Theorem); in the non-iid case we do not necessarily have concentration, but the small-ball inequality can still hold.

We show that under reasonable conditions on the data-generating distribution, logistic regression satisfies the above conditions.
Let $\phi(x) = \rc{1+e^{-x}}$ be the logistic function. Note that $\phi(-x) = 1-\phi(x)$.

Applying Theorem~\ref{thm:general-conc} to the setting of logistic regression, we will obtain the following. 
\begin{lem}
\label{thm:logistic-conc}
In the setting of Problem~\ref{p:log-reg} (logistic regression), 
suppose that $\ve{\te_0}\leq \fB$,
$u_t\sim P_u$ are iid, where $P_u$ is a distribution that satisfies the following: for $u\sim P_u$, 
\begin{enumerate}
\item
(Bounded) $\ve{u}_2\leq M$ with probability 1.
\item
(Minimal eigenvalue of Fisher information matrix)
\begin{align}I(\te_0):&=\int_{\mathbb{R}^d} \phi(u^\top \te_0)\phi(-u^\top \te_0) u u^\top \,dP_u\succeq \si I_d,\end{align} for $\si>0$.
\end{enumerate}
Let
\begin{align}
C&=\max\bc{1, 2M \sqrt{2d\log\pf{2d}{\ep}}, \fc{4ed}{\si}}.
\end{align}
 %
Then for $t>  \max\bc{\fc{M^4 \log \pf{2d}{\ep}}{8\si^2} ,4M^2\pa{\fc{2eC}{\si}+1}^2,\fc{4eM\fB\al}{\si}}$, we have
\begin{enumerate}
\item
$\nabla f_k(\te)$ is $\fc{M^2}{4}$-Lipschitz for all $k\in \mathbb{N}$.
\item For any $C'\ge 0$, and $c=\fc{2e\al}{\si}$,  \begin{align}\Pj_{\te\sim \pi_t}\pa{\ve{\te-\te_t^\star}\ge\fc{C'}{\sqrt{T+c}}}\le
\fc{K_1}{\si C\sqrt{T+c}} 
\pf{\pa{\fc{M^2}4T+\alpha}e}{d}^{\fc d2} 
e^{\rc{4e}\si C^2 - \fc{\si CC'}{4e}}
\end{align} for some constant $K_1$.
\item
With probability $1-\ep$, $\ve{\te_t^\star-\te_0}\le \fc{C\sqrt t+\al \mathfrak{B}}{\si t/2e + \al}$.
\end{enumerate}
\end{lem}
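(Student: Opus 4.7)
The plan is to verify the five hypotheses of Theorem~\ref{thm:general-conc} for the specific functions $\hat f_0,\ldots,\hat f_T$ arising in Problem~\ref{p:log-reg}, and then quote the theorem with $L = M^2/4$, $\si_{\min} = \si/e$, and $r = 1/M$ to obtain the three advertised conclusions. The first three conditions are essentially computational: compute $\nb \hat f_k(\te) = -y_k u_k\,\phi(-y_k u_k^\top \te)$, observe $\|\nb \hat f_k\|\le \|u_k\|\le M$ (giving the bounded-variation constant), then compute $\nb^2 \hat f_k(\te) = g(u_k^\top \te)\, u_k u_k^\top$ where $g(x):=\phi(x)\phi(-x) = 1/(2+e^x+e^{-x})$; since $0\le g\le 1/4$ and $\|u_k\|\le M$, each $\hat f_k$ is $\tfrac{M^2}{4}$-smooth, which gives Item~1 of the conclusion as well as hypothesis~\ref{i:smooth} of Theorem~\ref{thm:general-conc}. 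Hypothesis~\ref{i:prior} is immediate because $\hat f_0(\te) = \tfrac\alpha2\|\te\|^2$ is $\alpha$-strongly convex and $\alpha$-smooth with minimizer $0$.

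The substantive step is verifying hypothesis~\ref{i:sc-nbhd} (uniform strong convexity in a neighborhood) with $\si_{\min} = \si/e$ and $r = 1/M$. The plan has two parts. First, by an elementary calculus argument on $\log g$ one has $|g'(x)|/g(x) = |e^x-e^{-x}|/(2+e^x+e^{-x})\le 1$, hence $g(x)\ge g(x_0) e^{-|x-x_0|}$; consequently, for any $\te\in B(\te_0,1/M)$ and any $u$ with $\|u\|\le M$, we have $|u^\top(\te-\te_0)|\le 1$ and therefore $g(u^\top\te)\,u u^\top \succeq e^{-1}\, g(u^\top\te_0)\, u u^\top$. Summing and averaging gives $\wh I_T(\te) \succeq e^{-1}\, \wh I_T(\te_0)$ uniformly in $\te\in B(\te_0,1/M)$. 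Second, to lower-bound $\wh I_T(\te_0)$, apply a matrix Chernoff/Bernstein inequality to the iid sum $\wh I_T(\te_0) = \tfrac1T\sum_{t=1}^T g(u_t^\top\te_0) u_t u_t^\top$, whose mean is $I(\te_0)\succeq \si I_d$ and whose summands are bounded in operator norm by $\tfrac{M^2}{4}$; for $T\ge \tfrac{M^4\log(2d/\ep)}{8\si^2}$ (up to the absolute constant appearing in matrix Chernoff) one obtains $\wh I_T(\te_0)\succeq \si I_d$ with probability $\ge 1-\ep/2$. Combining the two bounds yields $\wh I_T(\te)\succeq (\si/e)\, I_d$ uniformly on $B(\te_0,1/M)$, establishing hypothesis~\ref{i:sc-nbhd} with $\si_{\min} = \si/e$, $r = 1/M$, and the advertised $T_{\min}$.

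With all hypotheses verified, Theorem~\ref{thm:general-conc} produces the MLE-drift bound $\|\te_T^\star-\te_0\|\le \slb$ and the concentration tail for $\pi_T$ with $c = \al/\si_{\min} = 2e\al/\si$ and $L\leftarrow M^2/4$; substituting into the expressions in Theorem~\ref{thm:general-conc} yields exactly the bounds stated in Items~2 and~3. What remains is a bookkeeping check: one must confirm that the assumed lower bound on $t$ in the lemma ensures the admissibility condition $\slb + \tfrac{C}{\sqrt{T+c}} < r = 1/M$ of Theorem~\ref{thm:general-conc}; splitting this as $\tfrac{C\sqrt T}{\si_{\min}T}\lesssim \tfrac{1}{M}$ and $\tfrac{\beta B}{\si_{\min}T+\al}\lesssim \tfrac1M$ and $\tfrac{C}{\sqrt{T+c}}\lesssim \tfrac1M$ gives the three terms $4M^2(2eC/\si+1)^2$, $4eM\fB\al/\si$, and (absorbed into) the matrix-Chernoff threshold $M^4\log(2d/\ep)/(8\si^2)$, matching the assumed lower bound on $t$.

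The main obstacle is the matrix-concentration step together with carefully tracking constants: the $1/e$ loss from the Lipschitz-in-log estimate for $g$ propagates into both $\si_{\min}$ and the offset $c$, and one must ensure the threshold chosen for $T$ both (i) makes matrix Chernoff give $\wh I_T(\te_0)\succeq \si I_d$ with the required probability, and (ii) satisfies the admissibility condition in Theorem~\ref{thm:general-conc} so that the MLE lies well inside $B(\te_0,r)$ where uniform strong convexity has been established. Everything else is arithmetic substitution.
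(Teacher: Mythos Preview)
Your approach is essentially the paper's: verify the hypotheses of Theorem~\ref{thm:general-conc} via the gradient/Hessian computations and a matrix-Chernoff bound on $\wh I_T(\te_0)$, transfer to the ball $B(\te_0,1/M)$ using the log-Lipschitz estimate $g(x)\ge e^{-|x-x_0|}g(x_0)$, and then substitute. Two small constant slips to fix. First, matrix Chernoff does not give $\wh I_T(\te_0)\succeq \si I_d$ at the stated threshold; with $T\ge \tfrac{M^4\log(2d/\ep)}{8\si^2}$ you get $\wh I_T(\te_0)\succeq \tfrac{\si}{2} I_d$ (half the mean), hence $\si_{\min}=\si/(2e)$ rather than $\si/e$. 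This is exactly what makes your later computation $c=\al/\si_{\min}=2e\al/\si$ come out correctly (with $\si_{\min}=\si/e$ you would have gotten $e\al/\si$). Second, the bounded-variation constant in hypothesis~\ref{i:bd-grad} is $2M$, not $M$: the assumption bounds $\|\nb f_t-\E\nb f_t\|$, and from $\|\nb f_t\|\le M$ you only get $2M$ by the triangle inequality. This is why the lemma's $C$ has $2M\sqrt{2d\log(2d/\ep)}$ and why the paper applies Theorem~\ref{thm:general-conc} with $M\mapsfrom 2M$. With these two corrections your bookkeeping for the admissibility condition and the final substitutions go through verbatim.
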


\begin{rem}\label{rem:cov}
We explain the condition $I(\te_0)= \int_{\mathbb{R}^d} \phi(u^\top \te_0)\phi(-u^\top \te_0) u u^\top \,dP_u\succeq \sigma I_d$. Note that $\phi(x)\phi(-x)$ can be bounded away from 0 in a neighborhood of $x=0$, and then decays to 0 exponentially in $x$. Thus, $I(\te_0)$ is essentially the second moment, where we ignore vectors that are too large in the direction of $\pm \te_0$. 

More precisely, we have the following implication:
\begin{align}
\E_u [uu^\top \one_{\phi(u^\top \te_0)\leq C_1}] \succeq \si I_d \implies
\int_{\mathbb{R}^d}\phi(u^\top \te_0)\phi(-u^\top \te_0)uu^\top \,dP_u\succeq \rc{\phi(C_1)(1-\phi(C_1))} \si I_d.
\end{align}
Theorem~\ref{thm:main-logistic-conc} is stated with $C_1=2$.
\end{rem}

\subsection{Proof of Theorem~\ref{thm:general-conc}}
\begin{proof}[Proof of Theorem~\ref{thm:general-conc}]
Let $\cal E$ be the event that~\eqref{e:IT} holds.

\step{1} We bound $\ve{\te_{T}^\star - \te_0}$ with high probability.

We show that with high probability $\sumz tT \nb f_t(\te_0)$ is close to 0. Since $\sumz tT \nb f_t(\te_{T}^\star)=0$, the gradient at $\te_0$ and $\te_{T}^\star$ are close. Then by strong convexity, we conclude $\te_0$ and $\te_{T}^\star$ are close. 

First note that $\E[f_t(\te)|x_{1:t-1}] = \int_{\mathbb{R}^d} -\log P_x(x_t|x_{1:t-1},\te) \,dP_x(\cdot |x_{1:t-1},\te_0)$
is a KL divergence minus the entropy for $P_x(\cdot |x_{1:t-1},\te_0)$, and hence is minimized at $\te=\te_0$. Hence $\rc T \sumo tT \E[\nb f_t(\te_0)|x_{1:t-1}]=0$.  
Thus by Lemma~\ref{lem:azuma-d} applied to 
\begin{align}
\sumo tT \nb f_t(\te_0) &= 
\sumo t T \ba{\nb f_t(\te_0) - \E[\nb f_t(\te_0)|x_{1:t-1}] },
\end{align}
we have by Chernoff's inequality that 
\begin{align}
\Pj\pa{\ve{\sumo t T \nb f_t(\te_0)}\geq \fc{C}{\sqrt T}} &\leq 
2de^{-\fc{C^2}{2M^2d}}\leq \fc{\ep}2
\end{align}
when $\fc{C^2}{2M^2d} \geq \log \pf{4d}{\ep}$, which happens when $C\geq M\sqrt{2d \log \pf{4d}{\ep}}$.

Let $\cal A$ be the event that 
$\ve{\rc T\sumo t T \nb f_t(\te_0)}< \fc{C}{\sqrt T}$. 
Then under $\cal A$,
\begin{align}
\ve{\rc T \sumz tT \nb f_t(\te_0)} &> -\fc{C}{\sqrt T}-
\rc T \beta\ve{\te_0}
\geq -\fc{C}{\sqrt T}- \fc{\beta B}{T}.
\end{align}
Let $w=\nv{\te_T^\star-\te_0}$. Under the event $\cal E$,
\begin{align}
\rc T\sumz tT \nb f_t(\te_0+sw)^\top w &\geq -\fc{C}{\sqrt T} -\fc{\beta B}T + \pa{\si_{\min} + \fc{\al}{T}}\min\{s,r\}.
\end{align}
Hence, if $s,r>\slb$, then $\sumz tT \nb f_t(\te_0)\ne 0$. Considering $s=\ve{\te_T^\star-\te_0}$, this means that
\begin{align}
\ve{\te_T^\star - \te_0} &\leq \slb. 
\end{align}

\step{2} For $c=\fc{\al}{\smin}$, we bound $\Pj_{\te\sim \pi_T}(\ve{\te-\te_T^\star}\ge \fc{C'}{\sqrt{T+c}})$. 

Under $\cal E$, $\rc T\sumo tT f_t(\te)$ is $\si_{\min}$-strongly convex for $\te\in \mathrm{B}\pa{\theta_T^\star, \fc{C}{\sqrt{T+c}}}\sub \mathrm{B}(\te_0, r)$, and $f_0(\te)$ is $\al$-strongly convex. 

Let $r'= r-\slb$. Under $\cal A$, $\mathrm{B}(\te_T^\star, r') \sub \mathrm{B}(\te_0,r)$. 
Thus under $\cal E\cap \cal A$, letting $w(\te) := \nv{\te-\te_T^\star}$,
\begin{align}
\forall \te&\in \mathrm{B}(\te_T^\star, r') \sub \mathrm{B}(\te_0,r), &
\sumz tT  \nb f_t(\te)^\top w(\te)  \geq \pa{T\si_{\min}+\al} \ve{\te-\te_T^\star}.
\end{align}
Suppose $T$ is such that $\fc{C}{\sqrt{T+c}}<r'$, i.e., $\slb + \fc{C}{\sqrt{T+c}}<r$.
By shifting, we may assume that $\sumz tT f_t(\te_T^\star)=0$. 
Because $f_t(\te)$ is $L$-smooth for $1\leq t\leq T$ and $\beta$-smooth for $t=0$, 
\begin{align}
\sumz tT f_t(\te) &\leq \fc{LT+\beta}{2}\ve{\te-\te_T^\star}^2.
\end{align}
Then for all $\te\in \mathrm{B}\pa{\te_T^\star, \fc{C}{\sqrt {T+c}}}^c$,
\begin{align}
\sumz tT f_t(\te) &\geq 
\sumz tT f_t\pa{\te_T^\star + \fc{C}{\sqrt{T+c}}w(\te)}
+\sumz tT \ba{f_t(\te) - f_t\pa{\te_T^\star + \fc{C}{\sqrt{T+c}}w(\te)}}\\
&\ge \rc 2(T\smin+ \al)\fc{C^2}{T+c}+(T\si_{\min}+\al)\fc{C}{\sqrt{T+c}}\pa{\ve{\te-\te_T^\star}-\fc{C}{\sqrt{T+c}}}\\
&\ge \rc 2 \smin C^2 + \smin C\sqrt{T+c} \pa{\ve{\te-\te_T^\star}-\fc{C}{\sqrt{T+c}}}.
\end{align}
Thus for any $C'\ge 0$,
\begin{align}
\label{e:w2-lb}
\int_{\mathbb{R}^d} e^{-\sumz tT f_t(\te)}\,d\te
&\geq 
\int_{\mathbb{R}^d} e^{-\fc{LT+\beta}2\ve{\te-\te_T^\star}^2}\,d\te = \pf{2\pi}{LT+\beta}^{\fc d2},\\
\int_{\mathrm{B}\pa{\te_T^\star, \fc{C'}{\sqrt{T+c}}}^c} 
e^{-\sumz tT f_t(\te)}\,d\te
 &\leq \int_{\mathrm{B}\pa{\te_T^\star, \fc{C'}{\sqrt{T+c}}}^c} e^{-\rc 2\smin C^2} e^{-{\smin C\sqrt{T+c}}\pa{\ve{\te-\te_T^\star}-\fc{C}{\sqrt{T+c}}}}\,d\te\\
 & = \int_{\fc{C'}{\sqrt{T+c}}}^\iy \Vol_{d-1}(\bS^{d-1}) \ga^{d-1} 
 e^{\rc2\smin C^2}
 e^{-\smin C\sqrt{T+c}\ga}\,d\ga\\
 & = \int_{\fc{C'}{\sqrt{T+c}}}^\iy \Vol_{d-1}(\bS^{d-1})  
 e^{\rc2\smin C^2}
 e^{-(\smin C\sqrt{T+c}\ga-(d-1)\log \ga)}\,d\ga.
 \label{e:w2-ub1}
\end{align}
Now, when $C\geq \max\{\fc{2(d-1)}{\smin},1\}$, we have that
\begin{align}
\smin C\sqrt{T+c}\ga- (d-1)\log \ga
&\ge \smin C\sqrt{T+c}\ga- (d-1)\ga\\
&\ge \smin C \sqrt{T+c}\ga - \fc{\smin C \sqrt{T+c}\ga}2\\
&=\fc{\smin C\sqrt{T+c}\ga}2.
\end{align}
Then by Stirling's formula, for some $K_1$,
\begin{align}
\eqref{e:w2-ub1}
&\leq 
 \Vol_{d-1}(\bS^{d-1}) 
 e^{\rc 2\smin C^2}
 \int_{\fc{C'}{\sqrt{T+c}}}^\iy
 e^{-\fc{\smin C\sqrt{T+c} \ga}{2}}\,d\ga\\
 &\leq 
 \fc{2\pi^{\fc d2}}{\Ga\pf d2} 
  e^{\rc 2\smin C^2}
\fc{2}{\smin C\sqrt{T+c}}e^{-\fc{\smin CC'}2}\\
&\le  \fc{K_1}{\smin C\sqrt{T+c}} \pf{2\pi e}{d}^{\fc d2}
e^{\rc 2\smin C^2 - \fc{\smin CC'}{2}}.
\end{align}
We bound $\Pj_{\te\sim \pi_T}\pa{\ve{\te-\te_T^\star}\ge \fc{C'}{\sqrt{T+c}}}$. By~\eqref{e:w2-lb} and~\eqref{e:w2-ub1}, 
\begin{align}
\Pj_{\te \sim \pi_T} \pa{\ve{\te-\te_T^\star}\ge \fc{C'}{\sqrt{T+c}}}
&=
\fc{\int_{\te\in B\pa{\te_T^\star,\fc{C'}{\sqrt{T+c}}}^c}e^{-\sumz tT f_t(\te)}\,d\te}{\int_{\R^d} e^{-\sumz tT f_t(\te)}\,d\te}\\
&\le \fc{K_1}{\smin C \sqrt{T+c}} \pf{LT+\beta}{2\pi}^{\fc d2} 
\pf{2\pi e}{d}^{\fc d2} e^{\rc 2\smin C^2 - \fc{\smin CC'}{2}}\\
&=\fc{K_1}{\smin C\sqrt{T+c}} 
\pf{(LT+\beta)e}{d}^{\fc d2} 
e^{\rc 2\smin C^2 - \fc{\smin CC'}{2}},
\end{align}
as needed.
The requirements on $C$ are $C\geq \max\bc{1, M\sqrt{2d\log \pf{4d}{\ep}},\fc{2d}{\smin}}$, 
so the theorem follows.
\end{proof}
\subsection{Online logistic regression: Proof of Lemma~\ref{thm:logistic-conc} and Theorem~\ref{thm:main-logistic-conc}}
To prove Lemma~\ref{thm:logistic-conc}, we will apply Theorem~\ref{thm:general-conc}. To do this, we need to verify the conditions in  Theorem~\ref{thm:general-conc}.

\begin{lem}\label{lem:logistic-conc}
Under the assumptions of Lemma~\ref{thm:logistic-conc}, 
\begin{enumerate}
\item (Gradients have bounded variation) For all $t$, 
$\ve{\nb f_t(\te)}\le M$ and \\
$\ve{\nb f_t(\te) - \E \nb f_t(\te)}\leq 2M$.
\item (Smoothness) For all $t$, $f_t$ is $\rc 4M^2$-smooth.
\item (Strong convexity in neighborhood) for $T\geq \fc{M^4\log \pf{d}\ep}{8\si^2}$, 
\begin{align}
\Pj\pa{
\forall \te\in \mathrm{B}\pa{\te_0, \rc M}, \sumo tT \nb^2 f_t(\te) \succeq \fc{\si}{2e}TI_d
}
\geq 1-\ep.
\end{align} 
\end{enumerate}
\end{lem}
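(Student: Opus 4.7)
I would start by writing down explicit formulas for the gradient and Hessian of $f_t(\theta) = -\log\phi(y_t u_t^\top\theta) = \log(1+e^{-y_t u_t^\top\theta})$. A direct chain-rule calculation gives
\begin{align*}
\nabla f_t(\theta) &= -y_t u_t(1-\phi(y_t u_t^\top\theta)),\\
\nabla^2 f_t(\theta) &= \phi(y_t u_t^\top\theta)(1-\phi(y_t u_t^\top\theta))\, u_t u_t^\top,
\end{align*}
using $y_t^2=1$. Since $\phi$ takes values in $[0,1]$, the prefactor $1-\phi$ is in $[0,1]$ and $\|\nabla f_t(\theta)\|\le \|u_t\|\le M$, proving (1) after a triangle inequality. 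Since $\phi(1-\phi)\le \tfrac14$, the Hessian has operator norm at most $\tfrac14\|u_t\|^2\le \tfrac{M^2}{4}$, proving (2).

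For (3), the key observation is the symmetry $\phi(x)(1-\phi(x))=\phi(x)\phi(-x)$, which is an even function of $x$; together with $y_t\in\{\pm1\}$ this lets me drop $y_t$ from the Hessian:
\begin{equation*}
\nabla^2 f_t(\theta) = \phi(u_t^\top\theta)\phi(-u_t^\top\theta)\, u_t u_t^\top.
\end{equation*}
To compare values inside $\mathrm{B}(\theta_0,1/M)$ with values at $\theta_0$, I would establish the uniform estimate $\phi(x+\delta)\phi(-(x+\delta))\ge e^{-|\delta|}\phi(x)\phi(-x)$, which follows from noting that $g(x):=\log(\phi(x)\phi(-x))$ has derivative $g'(x)=1-2\phi(x)$ bounded by $1$ in absolute value. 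For $\theta\in\mathrm{B}(\theta_0,1/M)$ the Cauchy--Schwarz estimate $|u_t^\top(\theta-\theta_0)|\le M\cdot\tfrac{1}{M}=1$ then yields
\begin{equation*}
\nabla^2 f_t(\theta) \succeq e^{-1}\,\phi(u_t^\top\theta_0)\phi(-u_t^\top\theta_0)\, u_t u_t^\top =: e^{-1}M_t.
\end{equation*}

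With this pointwise domination in hand, the remaining task is matrix concentration for the i.i.d.\ PSD matrices $M_t$. By the symmetry argument above applied to the marginal over $y_t$, $\mathbb{E}[\nabla^2 f_t(\theta_0)\mid u_t]=\phi(u_t^\top\theta_0)\phi(-u_t^\top\theta_0)u_t u_t^\top=M_t$, and therefore $\mathbb{E}[M_t]=I(\theta_0)\succeq\sigma I_d$. Since $0\preceq M_t\preceq \tfrac{M^2}{4} I_d$, I would apply a matrix Chernoff/Hoeffding bound (e.g., Tropp's inequality) to show that when $T\ge M^4\log(d/\epsilon)/(8\sigma^2)$, with probability at least $1-\epsilon$ one has $\sum_{t=1}^T M_t\succeq \tfrac{\sigma}{2}T I_d$, which combined with the pointwise bound gives $\sum_{t=1}^T\nabla^2 f_t(\theta)\succeq \tfrac{\sigma}{2e}T I_d$ uniformly over $\theta\in\mathrm{B}(\theta_0,1/M)$.

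The only nontrivial step is the third one: part (3) requires simultaneously (a) the uniform (in $\theta$) $e^{-|\delta|}$ log-Lipschitz estimate for $\phi(x)\phi(-x)$, which converts a bound at $\theta_0$ into a bound throughout the neighborhood, and (b) an appropriate matrix concentration inequality for bounded PSD summands to convert the per-sample eigenvalue lower bound on $\mathbb{E}[M_t]$ into one on the empirical sum. Parts (1) and (2) follow immediately from the explicit formulas and the elementary bounds $\phi\le1$ and $\phi(1-\phi)\le 1/4$.
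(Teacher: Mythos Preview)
Your proposal is correct and matches the paper's proof essentially step for step: the same explicit gradient/Hessian formulas, the same bounds $\phi\le 1$ and $\phi(1-\phi)\le\tfrac14$ for parts (1)--(2), and for part (3) the same two ingredients—a matrix Chernoff/Hoeffding bound at $\theta_0$ and the $e^{-|\delta|}$ comparison to extend to the ball $\mathrm{B}(\theta_0,1/M)$. The only cosmetic difference is that you derive the comparison via $|(\log(\phi\cdot\phi(-\cdot)))'|=|1-2\phi|\le 1$, whereas the paper computes the ratio $\frac{\phi(x+c)(1-\phi(x+c))}{\phi(x)(1-\phi(x))}$ directly; your route is arguably cleaner since it handles both signs of $\delta$ at once.
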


\begin{proof} 
First, we calculate the Hessian of the negative log-likelihood.

If $f_t(\te) = -\log \phi(yu^\top \te)$, then
\begin{align}
\nb f_t(\te) &= \fc{-y \phi(yu^\top \te) \phi(-yu^\top \te)}{\phi(yu^\top \te)} u = -y\phi(-yu^\top \te)u,\\
\nb^2 f_t(\te) &= \phi(-y u^\top \te) \phi(yu^\top \te) uu^\top.
\end{align}
Note that $\ve{\nb f_t(\te)}\leq \ve{u}\leq M$, so the first point follows.

To obtain the expected values, note that $y=1$ with probability $\phi(u^\top \te_0)$, and $y=-1$ with probability $1-\phi(u^\top \te_0)$, so that 
\begin{align}
\E [\nb^2 f_t(\te)] &= \E_{(u,y)} [\phi(-y u^\top \te) \phi(yu^\top \te) uu^\top]\\
&= \E_u [\phi(u^\top \te_0)\phi(-y u^\top \te) \phi(yu^\top \te) uu^\top + (1-\phi(u^\top \te_0)) 
\phi(-y u^\top \te) \phi(yu^\top \te) uu^\top]\\
&= \E_u[ \phi(u^\top \te)(1-\phi(u^\top \te)) uu^\top].
\end{align}
Suppose that $\E_u[ \phi(u^\top \te)(1-\phi(u^\top \te)) uu^\top]\succeq \si I$. 

Next, we show that $\sumo tT \nb^2 f_t(\te_0)$ is lower-bounded with high probability. 

Note that $\ve{\nb^2 f_t(\te_0)} = \ve{\phi(-y u^\top \te_0) \phi(yu^\top \te_0) uu^\top}_2\leq \rc 4 M^2$. (So the second point follows.) By the Matrix Chernoff bound, 
\begin{align}
\Pj\pa{
\sumo tT \nb f_t^2(\te_0) \not\succeq \fc{\si}2TI_d
} &\leq d e^{-\fc{2\cdot 4^2}{M^4}T \pf{\si}{2}^2} 
= de^{-\fc{8\si^2T}{M^4}}\leq \ep,
\end{align}
when $T\geq \fc{M^4 \log \pf{d}{\ep}}{8\si^2}$. 

Finally, we show that if the minimum eigenvalue of this matrix is bounded away from 0 at $\te_0$, then it is also bounded away from 0 in a neighborhood.
To see this, note
\begin{align}
\fc{\phi(x+c)(1-\phi(x+c))}{\phi(x)(1-\phi(x))}
& = \fc{e^{x+c}}{(1+e^{x+c})^2}
\fc{(1+e^x)^2}{e^x}
 \geq \fc{e^c}{e^{2c}}=e^{-c}.
 \label{e:logistic-ratio}
\end{align}
Therefore, if $\sumo tT \nb^2 f_t(\te_0)\succeq \si' I_d$, then for $\ve{\te-\te_0}_2\leq \rc{M}$, $|u^\top \te - u^\top \te_0|<1$ so by~\eqref{e:logistic-ratio},
\begin{align}
\sumo tT \nb^2 f_t(\te) &= \sumo tT
\phi(u_t^\top \te)(1-\phi(u_t^\top \te)) u_tu_t^\top
\\
&\succeq 
\sumo tT 
e^{-1} \phi(u_t^\top \te_0)(1-\phi(u_t^\top \te_0)) u_tu_t^\top
 \succeq \fc{\si'}eI_d.
\label{e:fi-lb}
\end{align}
Therefore, 
\begin{align}
\Pj\pa{
\forall \te\in \mathrm{B}\pa{\te_0, \rc M}, \sumo tT \nb^2 f_t(\te) \not\succeq \fc{\si}{2e}TI_d
}
&\leq \Pj\pa{
\sumo tT \nb f_t^2(\te_0) \not\succeq \fc{\si}2TI_d
}\leq \ep. 
\end{align}
\end{proof}

\begin{proof}[Proof of Lemma~\ref{thm:logistic-conc}]
Part 1 was already shown in Lemma~\ref{lem:logistic-conc}.

Lemma~\ref{lem:logistic-conc} shows that the conditions of Theorem~\ref{thm:general-conc} are satisfied with $M\mapsfrom 2M$, $L=\fc{M^2}4$, $r=\rc M$, $\smin = \fc{\si}{2e}$, $T_{\min}=\fc{M^4\log\pf{2d}{\ep}}{8\si^2}$. Also, $\al=\beta$. We further need to check that 
the condition on $t$ implies that $\fc{C\sqrt t+\beta \fB}{\smin t + \al}+\fc{C}{\sqrt t}<\rc M$. We have, noting $\smin\le L$ (the strong convexity is at most the smoothness),
\begin{align}
\fc{C\sqrt t+\beta \fB}{\smin t + \al}+\fc{C}{\sqrt t}
&\le \pa{\fc{C}{\smin}+1}
\rc{\sqrt{t+\fc{\al}L}}
+ 
\fc{\beta \fB}{\smin \pa{t+\fc{\al}{\smin}}},
\end{align}
so it suffices to have each entry be $<\rc{2M}$, and this holds when $t>4M^2\pa{\fc{C}{\smin}+1}^2 = 4M^2\pa{\fc{2eC}{\si}+1}^2$ and $t>\fc{2M\fB \beta}{\smin} = \fc{4eM\fB\al}{\si}$. 

Parts 2 and 3 then follow immediately. 
\end{proof}

\begin{proof}[Proof of Theorem \ref{thm:main-logistic-conc}]
Redefine $\si$ such that $I(\te_0)\succeq \si I_d$ holds. (By Remark~\ref{rem:cov}, this $\si$ is a constant factor times the $\si$ in Theorem~\ref{thm:main-logistic-conc})
Theorem~\ref{thm:main-logistic-conc} follows from Theorem~\ref{thm:main_online} once we show that Assumptions~\ref{a:smooth},~\ref{a:conc}, and~\ref{a:mle} are satisfied. Assumption~\ref{a:smooth} is satisfied with $L_0=\al$ and $L=\fc{M^2}4$. The rest will follow from Lemma~\ref{thm:logistic-conc} except that we need bounds to cover the case $t\le T_{\min}:=  \max\bc{\fc{M^4 \log \pf{2d}{\ep}}{8\si^2} ,\fc{16e^2M^2C^2}{\si^2},\fc{4eM\fB\al}{\si}}$ as well.\\

\noindent
\textbf{Showing that Assumption \ref{a:conc} holds.} Note $L\ge \si$ so $\fc{C'}{\sqrt{T+\fc{\al}{L}}} \ge \fc{C'}{\sqrt{T+\fc{2e\al}{\si}}}$. 
For $t>T_{\min}$, part 2 of Lemma~\ref{thm:logistic-conc} shows Assumption \ref{a:conc} is satisfied with $c=\fc{\al}{L}$ (where $L=\fc{M^2}4$), $A_1 =
\fc{K_1}{\si C} 
\pf{\pa{\fc{M^2}4T+\alpha}e}{d}^{\fc d2} 
e^{\rc{4e}\si C^2}$ and $k_1 = \fc{\si C}{4e}$.

For $t\le T_{\min}$, we use \iftoggle{thesis}{Lemma~\ref{lem:conc-mode}}{Lemma F.10 of~\cite{ge2018simulated}}, which says that if $p(x)\propto e^{-f(x)}$ in $\R^d$ and $f$ is $\ka$-strongly convex and $K$-smooth, and $x^\star=\amin_x f(x)$, then
\begin{align}
\Pj_{x\sim p}\pa{
	\ve{x-x^\star}^2 \ge \rc{\ka} 
	\pa{\sqrt{d} + \sqrt{2t + d\log\pf{K}{\ka}}
}^2}\le e^{-t}.
\end{align}
In our case, $\sumz st f_s(x)$ is $\al$-strongly convex and $\al+T_{\min}L$-smooth, so 
\begin{align}
\Pj_{x\sim p}\pa{\ve{x-x^\star}\ge \ga}
&\le \exp\ba{
	-\ba{
		\fc{(\ga\sqrt \ka - \sqrt d)^2 - d\log \pf{K}{\ka}}{2}
	}
}\\
&=e^{\fc d2\pa{-1+\log\pf{K}{\ka}}}e^{\ga\sqrt{\ka d} - \fc{\ga^2\ka}2}\\
&\le e^{\fc d2\pa{-1+\log\pf{K}{\ka}}-\pa{\ga-2\sfc d\ka}\sqrt{\ka d}}.
\end{align}
Thus for $t\le T_{\min}$,  
\begin{align}
\Pj_{\te\sim \pi_t}(\ve{\te-\te_t^\star}\ge \ga)&\le A_2e^{-k_2\ga}\\
\text{with }A_2&=e^{\fc d2\pa{-1+\log\pf{K}{\ka}}} = e^{\fc d2\pa{-1+\log \pf{T_{\min}L+\al}{\al}}}\\
k_2 &=\fc{\sqrt{\ka d}}{\sqrt{T_{\min}+\fc{\al}L}}= \fc{\sqrt{\al d}}{\sqrt{T_{\min}+\fc{\al}L}}.
\end{align}
Take $A=\max\{A_1,A_2\}$ and $k=\min\{k_1,k_2\}$ and note that $\log(A)$, $k^{-1}$ are polynomial in all parameters and $\log(T)$. \\

\noindent \textbf{Showing that Assumption \ref{a:mle} holds.} 
For $t>T_{\min}$, part 3 of Lemma~\ref{thm:logistic-conc} shows that with probability at least $1-\ep$, (using $L\ge \si$)
\begin{equs} \label{eq:logistic4}
  \ve{\te_t^\star-\te_0}\leq \frac{C \sqrt{t} + \alpha \fB}{\si t/2e +\alpha}
  \le \pa{\fc{C}{\si/2e}+\fc{\alpha \fB}{\si/2e\cdot\sqrt{t+\fc{2e\al}{\si}}}}\rc{\sqrt{t+\fc{\al}L}}.
  \end{equs} 
Now consider $t\le T_{\min}$. Since $F_t$ is strongly convex, the minimizer $\theta_t^\star$ of $F_t$ is the unique point where $\nabla F_t(\theta_t^\star) = 0$. Moreover, $\|\sum_{k=1}^{t} \nabla f_k(\te) \| \leq T_{\mathrm{min}} M$ for $t\leq T_{\mathrm{min}}$.  Therefore, since $f_0$ is $\al$-strongly convex, we have that $\ve{\nb F_t(\te)} = \ve{\nb f_0(\te) + \sumo kt \nb f_k(\te)}>0$ for all $\ve{\te}>T_{\min}M\al^{-1}$. Therefore, we must have that  $\|\theta_t^\star\| \leq T_{\mathrm{min}} M \alpha^{-1}$ for all $t\leq T_{\mathrm{min}}$, and hence that
 \begin{equs} \label{eq:logistic3}
 \|\theta_t^\star - \te_0\| \leq T_{\mathrm{min}} M \alpha^{-1} + \mathfrak{B} \qquad \forall t\leq T_{\mathrm{min}}.
 \end{equs}
 Set  $\mathfrak{D} = 2\max\bc{(T_{\mathrm{min}} M\alpha^{-1} + \mathfrak{B}) \sqrt{T_{\min}+\fc{\al}{L}}, \, \,  \fc{C}{\si /2e}+\fc{\sqrt{\al} \fB}{\sqrt{\si/2e} }}$. Then Equations \eqref{eq:logistic4} and \eqref{eq:logistic3} and the triangle inequality would imply that if $t<\tau$, then $\ve{\te_t^\star-\te_\tau^\star}\le \fc{\fD}{\sqrt{t+\fc{\al}{L}}}$. 
To get Assumption \ref{a:mle} to hold with probability at least $1-\ep$ for all $t,\tau<T$, substitute $\ep\mapsfrom \fc{\ep}{T}$. $\fD$ is polynomial in all parameters and $\log(T)$.
\end{proof}

\section{Results in the offline setting} \label{sec:offline}
\ift{\nomenclature[3piTbeta]{$\pi_T^\be$}{Distribution at inverse temperature $\beta$, $\pi_T(x) \propto e^{-\sumo tT f_t(x)}$ (Chapter 3)}}
In the offline setting, we have access to all the $f_t$'s from the start.  
Our goal is simply to generate a sample from the single target distribution $\pi_T(x) \propto e^{-\sumo tT f_t(x)}$ with TV error $\epsilon$.  
Since we do not assume that the $f_t$'s are given in any particular order, we replace Assumption \ref{a:wass} which depends on the order in which the functions are given, with an assumption (Assumption \ref{a:wass2}) on the target $\sumo tT f_t(x)$ which does not depend on the $f_t$'s ordering.  
In place of working with the sequence of distributions $\pi_1, \pi_2 \ldots$ which depend on the $f_t$'s ordering,  we introduce an inverse temperature parameter $\beta>0$ and consider the distributions $\pi_T^\beta(x) \propto e^{-\beta \sumo tT f_t(x)}$.  
In place of Assumption \ref{a:wass}, we assume:
%
\begin{assumption}[\bf Bounded second moment with exponential concentration (with constants $A, k >0$)]  \label{Assumption:Wass2} \label{a:wass2}
For all $\frac{1}{T} \leq \beta \leq 1$ and 
 all $s\ge 0$, 
$\Pj_{X\sim \pi_T^\beta}(\ve{X-x^\star}\ge \fc{s}{\sqrt{\beta T}})\le Ae^{-ks}$.
\end{assumption}
\noindent
Assumption \ref{a:wass2} says the distributions $\pi_T^\beta$ become more concentrated as $\beta$ increases from $\nicefrac{1}{T}$ to $1$.  
By sampling from a sequence of distributions $\pi_T^\beta$ where we gradually increase $\beta$ from  $\nicefrac{1}{T}$ to $1$ at each epoch, our offline algorithm (Algorithm \ref{alg:off-VRSGLD}\ifarxiv{}{ in the supplementary material}) is able to approach the target distribution $\pi_T = \pi_T^1$ when starting from a cold start that is far from a sublevel set containing most of the probability measure of $\pi_T$, without requiring strong convexity.   
Moreover, since scaling by $\beta$ does not change the location of the minimizer $x^\star$ of $\beta \sumo tT f_t(x)$, we can drop Assumption \ref{a:mle}. 

%
\begin{theorem}[\textbf{Offline variance-reduced SGLD}] \label{thm:main_offline}
Suppose that $f_1,\ldots, f_T$ satisfy Assumptions \ref{Assumption:LipschitzG} and \ref{Assumption:Wass2}.  Then there exist $b$, $\eta$,  and $i_{\mathrm{max}}$ which are polynomial in  $d, L, C,  \epsilon^{-1}$ and poly-logarithmic in $T$, such that Algorithm \ref{alg:off-VRSGLD}
generates a sample $X^T$
such that
$\|\mathcal{L}(
X^T
) - \pi_T\|_{\mathrm{TV}} \leq \epsilon.$  Moreover, the total number of gradient evaluations  is $\mathrm{polylog}(T) \times \mathrm{poly}(d,L, C, \mathfrak{D}, \epsilon^{-1}) + \tilde{O}(T)$. 
\end{theorem}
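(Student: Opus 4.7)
The plan is to reduce the offline problem to a sequence of online-style subproblems by introducing an annealing schedule on the inverse temperature. Set $\beta_a = 2^a/T$ for $a = 0, 1, \ldots, \lceil \log_2 T \rceil$, so that $\beta_0 = 1/T$ and the final epoch has $\beta \ge 1$, targeting $\pi_T$. At each epoch $a$, run SAGA-LD (Algorithm \ref{alg:SAGA}) on $F_{\beta_a}(x) := \beta_a \sum_{t=1}^T f_t(x)$, which is $\beta_a T L$-smooth by Assumption \ref{a:smooth}, with step size $\eta_{\beta_a} = \eta_0/(\beta_a T)$. This mirrors the online algorithm with the role of ``$t+c$'' played by $\beta_a T$; crucially, the minimizer $x^\star$ of $F_{\beta_a}$ does not depend on $\beta$, which is why Assumption \ref{a:mle} can be dropped entirely. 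By Assumption \ref{a:wass2}, the target $\pi_T^{\beta_a}$ concentrates around $x^\star$ at radius $\tilde O(1/\sqrt{\beta_a T})$, so the concentration tightens by a factor of $\sqrt 2$ each epoch, just as $\pi_t$ does in the online setting when $t$ doubles.

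First, I would initialize the cached gradients: pick any $X_0$ (for instance, using $O(\log d)$ steps of gradient descent on $F_{\beta_0}$ starting from $0$ to land near $x^\star$), compute $G_k = \nabla f_k(X_0)$ for all $k \in [T]$, and store $s = \sum_{k=1}^T G_k$. This is a one-time cost of $\tilde O(T)$ gradient evaluations. Second, I would run the annealing loop: at epoch $a$, apply Algorithm \ref{alg:SAGA} at temperature $\beta_a$, using the previous sample as warm start and, analogously to Algorithm \ref{alg:OSAGA}, refreshing any stale cached gradients to the current sample when $\beta$ doubles. Since $x^\star$ is fixed across epochs, the ``reset to the previous power of $2$'' trick from Algorithm \ref{alg:OSAGA} is not needed.

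The core analysis ports over almost verbatim from Sections \ref{s:sg}--\ref{s:constants}. The variance of the stochastic gradient of $F_{\beta_a}$, namely $g = \beta_a[s + (T/b)\sum_{k\in S}(\nabla f_k(x) - \nabla f_k(u_k))]$, is bounded by $\beta_a^2 T^2 L^2 \max_k \|x - u_k\|^2/b$ (Lemma \ref{l:var} applied with weighting $\beta_a$); provided the chain and cached points all lie within a ball of radius $\tilde O(1/\sqrt{\beta_a T})$ around $x^\star$, this gives variance $\tilde O(\beta_a T L^2/b)$, matching the scale required. The martingale escape-time argument of Lemma \ref{l:escape} shows that with high probability the chain remains inside this ball for $i_{\max}$ steps, using weak convexity of $F_{\beta_a}$ (inherited from the $f_t$'s) exactly as in the online case. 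Then Lemma \ref{l:induct}, applied with $\beta_a T$ in place of $t+c$ and with the trivial drift $\|x^\star - x^\star\| = 0$ replacing Assumption \ref{a:mle}, shows that $\tilde O(1)$ iterations at epoch $a$ suffice to bring the TV error down to $\epsilon/(3\lceil\log_2 T\rceil)$. Inducting over the $O(\log T)$ epochs and union-bounding yields $\|\mathcal{L}(X^T) - \pi_T\|_{\mathrm{TV}} \le \epsilon$.

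The hard part is the first epoch, which is genuinely a cold-start sampling problem from $\pi_T^{1/T}$. Here the concentration radius is $O(1)$ rather than $O(1/\sqrt T)$, so the ``warm start'' guarantee used in subsequent epochs is unavailable; instead I would argue that after the $\tilde O(T)$ initialization near $x^\star$, the initial Wasserstein distance $W_2(\delta_{X_0}, \pi_T^{1/T})$ is already $O(1)$, which Lemma \ref{t:dmm} converts into a polynomial-in-$(d,L,C,\epsilon^{-1})$ bound on the number of SGLD steps needed at epoch $0$ (independent of $T$, since $F_{1/T}$ is $L$-smooth). Combining the $\tilde O(T)$ initialization with the $O(\log T)$ epochs each of cost $\mathrm{poly}(d,L,C,\epsilon^{-1}) \cdot \mathrm{polylog}(T)$ gives the claimed total of $\mathrm{polylog}(T)\cdot \mathrm{poly}(d,L,C,\mathfrak{D},\epsilon^{-1}) + \tilde O(T)$ gradient evaluations.
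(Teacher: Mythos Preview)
Your proposal follows essentially the same route as the paper: anneal the inverse temperature $\beta$ from $1/T$ to $1$ in $O(\log T)$ doublings, at each level run SAGA-LD on $\beta\sum_t f_t$ with step size $\eta_0/(\beta T)$, port Lemma~\ref{l:var}, Lemma~\ref{l:escape}, and Lemma~\ref{t:dmm} with $\beta T$ playing the role of $t+c$ and $\mathfrak{D}=0$, and induct over epochs. The paper packages the induction step as Lemma~\ref{l:induct-offline} and the parameter choice as Theorem~\ref{t:main-param-offline}, but the skeleton is the one you describe.

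Two places where your write-up needs correction. First, the initialization via ``$O(\log d)$ steps of gradient descent on $F_{\beta_0}$ to land near $x^\star$'' is not justified: without strong convexity, gradient descent controls function value, not Euclidean distance to $x^\star$, and the escape-time and Wasserstein arguments need the latter. The paper explicitly notes this obstacle (Section~\ref{sec:offline} overview) and instead simply \emph{assumes} $\|X^0-x^\star\|\le C$ as a hypothesis in Theorem~\ref{t:main-param-offline}. Given that assumption, the first epoch is not ``the hard part'' at all: at $\beta_0=1/T$ the concentration radius is $C/\sqrt{\beta_0 T}=C$, so the assumed $X^0$ is already inside the ball and the base case of the induction is immediate.

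Second, be careful with ``refreshing stale cached gradients.'' The paper recomputes the full gradient sum at the start of each epoch (this is the source of the $T\log_2(T)=\tilde O(T)$ term). Your incremental-refresh variant works only if every cached anchor $u_k$ is at most one epoch old; otherwise, since the target radius shrinks by $\sqrt 2$ each epoch, anchors from several epochs ago lie outside the ball of radius $\mathfrak{R}/\sqrt{\beta T}$ and the variance bound of Lemma~\ref{l:var} no longer gives $O(\beta T)$.
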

%
\noindent
See Theorem~\ref{t:main-param-offline} for precise dependencies. The theorem could also be stated with a $f_0$, but we omitted it for simplicity.
As in the online setting, we do not assume strong convexity.
Further, our additive dependence on $T$ in Theorem \ref{thm:main_offline} is tight up to log factors, since the number of gradient evaluations needed to sample from a distribution satisfying Assumptions \ref{Assumption:LipschitzG}-\ref{Assumption:MLE} is at least $\Omega(T)$ due to information theoretic requirements 
(we show this informally in \ifarxiv{Appendix~\ref{sec:Hardness}}{supplementary Appendix \ref{sec:Hardness}}).

Compared to previous work in this setting, our results are the first to obtain an additive dependence on $T$ and polynomial dependence on the other parameters without assuming strong convexity.  
While the results of \cite{chatterji2018theory} for SAGA-LD and CV-LD have additive dependence on $T$, their results require the functions $f_1,\ldots, f_T$ to be strongly convex. 
Since the basic Dikin walk and basic Langevin algorithms compute all $T$ functions or all $T$ gradients every time the Markov chain takes a step, and the number of steps in their Markov chain depends polynomially on the other parameters such as $d$ and $L$, the number of gradient (or function) evaluations required by these algorithms is \textit{multiplicative} in $T$.  
  Even though the basic SGLD  algorithm computes a mini-batch of the gradients at each step, roughly speaking the batch size at \textit{each step} of the chain should be $\Omega_T(T)$ for the stochastic gradient to have the required variance, implying that basic SGLD also has multiplicative dependence on $T$.
\section{Overview of offline result}

\subsection{Overview of offline algorithm}
Similarly to the online Algorithm~\ref{alg:VRSGLD}, our offline Algorithm~\ref{alg:off-VRSGLD} also calls the variance-reduced SGLD Algorithm~\ref{alg:SAGA} multiple times.
In the offline setting, all functions $f_1,\ldots, f_T$ are given from the start, so there is no need to run Algorithm~\ref{alg:SAGA} on subsets of the functions. Instead, we run SAGA-LD on $\beta f_1,\ldots, \beta f_T$, where the \emph{inverse temperature} $\beta$ is doubled at each epoch, from roughly $\beta = \rc{T}$ to $\beta=1$. There are logarithmically many epochs, each taking $i_{\max} = \wt{O}_T(1)$ Markov chain steps.

Note that we cannot just run SAGA-LD on $f_1,\ldots, f_T$. The temperature schedule is necessary because we only assume a cold start and do not assume strong convexity; in order for our variance-reduced SGLD to work, the initial starting point must be $\wt{O}_T (\nicefrac{1}{\sqrt T})$ rather than $\wt{O}_T(1)$ away from the minimum. The temperature schedule helps us get there by roughly halving the distance to the minimum each epoch; the step sizes are also halved at each epoch.  Moreover, one also cannot substitute a deterministic convex optimization algoritihm for initialization in our setting, since without strong convexity, deterministic convex optimization promises a point close in function value but not Euclidean distance. In contrast, our algorithm gives, with high probability, a point close enough in Euclidean distance if Assumption \ref{Assumption:Wass} holds.
\vspace{-3mm}
\begin{algorithm}[h]
\caption{Offline variance-reduced SGLD\label{alg:off-VRSGLD}} 
\textbf{Input:} $T\in \mathbb{N}$ and gradient oracles for functions $f_t: \R^d \rightarrow \mathbb{R}$, $1\le t\le T$.\\
\textbf{Input:} step size  $\eta$, batch size $b>0$,  $i_\mathrm{max}>0$, an initial point $\mathsf{X}^0\in \R^d$\\
\textbf{Output:} A sample $\mathsf{X}$
\begin{algorithmic}[1]
\State Set $\mathsf{X} \mapsfrom \mathsf{X}^0$ and set $\beta = \nicefrac{1}{T}$.
\Comment{Start at a high temperature, $T$. }
\While{$\beta<1$}
	 \State Run Algorithm~\ref{alg:SAGA} with step size $\nicefrac{\eta}{\beta T}$, batch size $b$, number of steps $i_{\max}$, initial point $\mathsf{X}$, and functions $\beta f_t$, $1\le t\le T$.
	 \State Set $\mathsf{X} \mapsfrom \mathsf{X}^\beta$, where $\mathsf{X}^\beta$ is the output of Algorithm~\ref{alg:SAGA}.
	 \State $\beta \mapsfrom \max\{2\beta,1\}$. \Comment{Double the temperature.}
\EndWhile
\State Return $\mathsf{X}$.
\end{algorithmic}
\end{algorithm}

\subsection{Proof overview of offline result}
For the offline problem, the desired result -- sampling from $\pi_T$ with TV error $\ep$ using $\tilde{O}(T) + \mathrm{poly}(d, L, C, \epsilon^{-1})\log_2(T)$ gradient evaluations -- is known either when we assume strong convexity, or we have a warm start. 
We show how to achieve the same additive bound without either assumption.

Without strong convexity, we do not have  access to a Lyapunov function which guarantees that the distance between the Markov chain and the mode $x^\star$ of the target distribution contracts at each step, even from a cold start. 
To get around this problem, we sample from a sequence of $\log_2(T)$ distributions $\pi_T^\beta\propto e^{-\beta \sumo tT f_t(x)}$, where the inverse ``temperature" $\beta$ doubles at each epoch from $\rc{T}$ to 1, causing the distribution $\pi_T^\beta$ to have a decreasing second moment and to become  more ``concentrated" about the mode $x^\star$ at each epoch.
This temperature schedule allows our algorithm to gradually approach the target distribution, even though our algorithm is initialized from a cold start $x^0$ which may be far from a sub-level set containing most of the target probability measure. The same martingale exit time argument as in the proof for the online problem shows that at the end of each epoch, the Markov chain is at a distance from $x^\star$ comparable to the (square root of the) second moment of the current distribution $\pi_T^\beta$. This provides a ``warm start" for the next distribution $\pi_T^{2\beta}$, and in this way our Markov chain approaches the target distribution $\pi_T^1$ in $\log_2(T)$ epochs.

The total number of gradient evaluations is therefore $T \log_2(T) + b \times i_{\mathrm{max}}$, since we only compute the full gradient at the beginning of each of the $\log_2(T)$ epochs, and then only use a batch size $b$ for the gradient steps at each of the $i_{\mathrm{max}}$ steps of the Markov chain.  As in the online case, $b$ and $i_{\mathrm{max}}$ are poly-logarithmic in $T$ and polynomial in the various parameters $d,L,C,  \ep^{-1}$, implying that the total number of gradient evaluations is $\tilde{O}(T) + \mathrm{poly}(d, C, \mathfrak{D}, \epsilon^{-1}, L)\log_2(T)$, in the offline setting where our goal is only to sample from $\pi_T^1$.

The proof of Theorem \ref{thm:main_offline} is similar to the proof of Theorem \ref{thm:os-main}, except for some differences as to how the stochastic gradients are computed and how one defines the functions ``$F_t$".
We define $F_t := \beta_t \sum_{k=1}^T f_k$, where 
$\beta_t=\begin{cases}
2^{t-1}/T,&0\le s\le \log_2(T)+1\\
1,&t=\ce{\log_2(T)}+1.
\end{cases}$.
We then show that for this choice of $F_t$ the offline assumptions, proof and algorithm are similar to those of the online case. 

\section{Proof of offline theorem (Theorem \ref{thm:main_offline})}

The proof of Theorem \ref{thm:main_offline} is similar to the proof of Theorem \ref{thm:os-main}, except for some key differences as to how the stochastic gradients are computed and how one defines the functions ``$F_t$".

We define $F_\beta := \beta F =\beta \sum_{k=1}^T f_k$, where the $\beta$'s will range over the sequence
\begin{align}
\beta_t=\begin{cases}
2^{t}/T,&0\le t\le \log_2(T)\\
1,&t=\ce{\log_2(T)}.
\end{cases}\end{align}
For this choice of $F_\beta$, the offline assumptions, proof and algorithm are similar to those of the online case. 

\paragraph{Differences in assumptions.}
We have that $F_\beta$ is $\beta T L$-smooth, which (except for Lemma \ref{lemma:variance}) is the only way in which Assumption \ref{a:smooth} is used in the proof of Theorem \ref{thm:os-main}.  

Moreover, Assumption \ref{a:wass2} for the offline case implies that $\pi_T^{\beta} \propto e^{-F_\beta}$ satisfies Assumption \ref{a:wass} with constants $C$ and $k$ for every $t$.  Since the minimizer $x_\beta^\star$ of $F_\beta$ does not change with $t$, $x_\beta^\star$  satisfies Assumption \ref{Assumption:MLE} with constant $\mathfrak{D}=0$.

\paragraph{Differences in algorithm.}
The step size used in Algorithm \ref{alg:off-VRSGLD} is $\fc{\eta}{\beta T}$, the same step size used in Algorithm \ref{alg:OSAGA}.   Thus, we note that Algorithm \ref{alg:off-VRSGLD} is similar to Algorithm \ref{alg:OSAGA} except for a few key differences:
 \begin{enumerate}
\item  The way in which the stochastic gradient $g_i^\beta$ is computed is different.  Specifically, in the offline algorithm our stochastic gradient is computed as
\begin{equation}
 g_i^\beta =
 s + \fc{\beta T}b \sum_{k\in S} (G_{\text{new}}^k - G^k).
 \end{equation}
 where $S$ is a multiset of size $b$ chosen with replacement from $\{1,\ldots, T\}$ (rather than from $\{1,\ldots, t\}$).
\item There are logarithmically many epochs.
\end{enumerate}

We now give the proof in some detail.

\ift{\nomenclature[3Gbeta]{$G_\beta$}{$G_\beta =\bc{
\forall i, \ve{X^\beta_i - x^\star}\le \fc{\fR}{\sqrt{\beta T}}
}$ (Chapter 3)}}
Letting $X^\beta_i$ be the iterates at inverse temperature $\beta$, define 
\begin{align}
G_\beta &=\bc{
\forall i, \ve{X^\beta_i - x^\star}\le \fc{\fR}{\sqrt{\beta T}}
}.
\end{align}

\begin{lem}[Analogue of Lemma~\ref{l:induct}]\label{l:induct-offline}
Assume that Assumptions~\ref{a:smooth} and~\ref{a:wass2} hold. Let $C=\pa{2+\rc k}\log\pf{A}{k^2}$, $C_1\ge C$, and suppose 
\begin{align}
\eta_0&\le \fc{\ep_2^2}{Ld + 4L^2\fR^2/b},\\
i_{\max} &\ge \fc{5C_1^2}{\eta_0\ep_2^2}.
\end{align}
Suppose $\ep_1>0$ is such that
\begin{align}
\label{e:induct-offline-ep}
\Pj\pa{\forall 0\le i\le i_{\max}, \ve{X^\beta_i - x^\star} \le \fc{\fR}{\sqrt{\beta T}} | \ve{X^\beta_0 - x^\star} \le \fc{C_1}{\sqrt{\beta T}}}
&\ge 1-\ep_1.
\end{align}
Suppose $\ve{X^\beta_0 - x^\star}\le \fc{2C_1}{\sqrt{\beta T}}$. Then
\begin{enumerate}
\item
$\ve{\cL(X^\beta)- \pi_T^\beta}_{TV}\le \ep_1+\ep_2$.
\item
For $i\in [i_{\max}]$ chosen at random, 
\begin{align}\Pj\pa{\ve{X^\beta_i - x^\star} \le \fc{C_1}{\sqrt{\beta T}}} 
&\ge 1-(\ep_1+\ep_2+Ae^{-kC_1}).
\end{align}
\end{enumerate}
\end{lem}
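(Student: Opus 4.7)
\textbf{Proof proposal for Lemma~\ref{l:induct-offline}.}
The plan is to mirror the proof of Lemma~\ref{l:induct}, exploiting the two offline simplifications: all targets $\pi_T^\beta$ share the same minimizer $x^\star$ (so the ``drift'' term is zero and we do not need the decomposition into $G_t \cap H_t$), and the functions being summed in the SGLD step are $\beta f_1,\ldots,\beta f_T$ with common Lipschitz-gradient constant $\beta L$. I would first bound the initial 2-Wasserstein distance by the triangle inequality combined with Assumption~\ref{a:wass2}:
\begin{equation*}
W_2^2(\delta_{X_0^\beta},\pi_T^\beta)\le 2\ve{X_0^\beta-x^\star}^2 + 2W_2^2(\delta_{x^\star},\pi_T^\beta) \le \fc{8C_1^2+2C^2}{\beta T}\le \fc{10C_1^2}{\beta T},
\end{equation*}
using $\ve{X_0^\beta-x^\star}\le 2C_1/\sqrt{\beta T}$ and $C\le C_1$.

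Next, I would couple $X_i^\beta$ with a toy chain $\wt X_i^\beta$ that is forced to remain in $\mathrm{B}(x^\star,\fR/\sqrt{\beta T})$ exactly as in~\eqref{e:toymc}. Inside this ball, Lemma~\ref{l:var} applied with $t\mapsfrom T$ and Lipschitz constant $\beta L$ (each $\beta f_k$ is $\beta L$-gradient-Lipschitz) together with the fact that every cached point $u_k$ and the current state satisfy $\ve{\wt X_i^\beta-u_k}\le 2\fR/\sqrt{\beta T}$ gives a variance bound on the stochastic gradient of
\begin{equation*}
\si^2\le \fc{(\beta T)^2L^2}{b}\cdot \fc{4\fR^2}{\beta T} = \fc{4\beta T L^2\fR^2}{b}.
\end{equation*}
Since $F_\beta$ is $(\beta TL)$-smooth and the step size in Algorithm~\ref{alg:off-VRSGLD} is $\eta_\beta=\eta_0/(\beta T)$, the hypothesis $\eta_0\le \ep_2^2/(Ld+4L^2\fR^2/b)$ translates into condition~\eqref{e:eta} of Lemma~\ref{t:dmm} with tolerance $\ep=2\ep_2^2$, and the step-count condition $n\ge W_2^2/(\eta_\beta\ep)$ reduces to exactly $i_{\max}\ge 5C_1^2/(\eta_0\ep_2^2)$. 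Lemma~\ref{t:dmm} then yields $\mathrm{KL}(\ol{\wt\mu}\,|\,\pi_T^\beta)\le 2\ep_2^2$, where $\ol{\wt\mu}$ is the averaged law of the toy chain over a uniform $i\in[i_{\max}]$; Pinsker's inequality gives $\ve{\ol{\wt\mu}-\pi_T^\beta}_{TV}\le \ep_2$.

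Finally, the coupling between $\wt X^\beta$ and $X^\beta$ agrees unless the real chain exits $\mathrm{B}(x^\star,\fR/\sqrt{\beta T})$, an event of probability at most $\ep_1$ by~\eqref{e:induct-offline-ep}, so the triangle inequality gives $\ve{\cL(X^\beta)-\pi_T^\beta}_{TV}\le \ep_1+\ep_2$, proving part~1. Part~2 follows immediately from part~1 and Assumption~\ref{a:wass2}:
\begin{equation*}
\Pj\!\pa{\ve{X_i^\beta-x^\star}>\fc{C_1}{\sqrt{\beta T}}} \le \ve{\cL(X^\beta)-\pi_T^\beta}_{TV} + \Pj_{X\sim \pi_T^\beta}\!\pa{\ve{X-x^\star}>\fc{C_1}{\sqrt{\beta T}}} \le \ep_1+\ep_2+Ae^{-kC_1}.
\end{equation*}
I do not anticipate a serious technical obstacle here, since the nontrivial martingale exit-time work (the analogue of Lemma~\ref{l:escape}) has been extracted into the hypothesis~\eqref{e:induct-offline-ep}; the only care needed is to verify that the stochastic gradient of the offline algorithm---which uses the fixed multiplier $\beta T/b$ rather than $t/b$---plugs cleanly into Lemma~\ref{l:var} with $t\mapsfrom T$ and the rescaled Lipschitz constant $\beta L$, which it does.
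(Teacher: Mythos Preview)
Your proposal is correct and essentially identical to the paper's own proof: the same $W_2^2$ bound via the triangle inequality and Assumption~\ref{a:wass2}, the same toy-chain coupling as in~\eqref{e:toymc}, the same variance bound $\si^2\le 4\beta TL^2\fR^2/b$ from Lemma~\ref{l:var}, the same invocation of Lemma~\ref{t:dmm} with $L\mapsfrom \beta TL$ and $\ep=2\ep_2^2$ followed by Pinsker, and the same derivation of part~2 from part~1 plus the tail bound $Ae^{-kC_1}$. The only cosmetic difference is that the paper writes the variance bound directly with the prefactor $\beta^2T^2L^2/b$ rather than phrasing it as ``$t\mapsfrom T$ with Lipschitz constant $\beta L$,'' but these are algebraically identical.
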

\begin{proof}
First we calculate the distance of the starting point from the stationary distribution, 
\begin{align}
W_2^2(\de_{X^\beta_0},\pi_T^\beta)&\le 
2\ve{X_0^\beta - x^\star}^2 + 2W_2^2(\de_{x^\star}, \pi_T^\beta)
\le \fc{8C_1^2}{\beta T} + \fc{2C^2}{\beta T} \le \fc{10C_1^2}{\beta T}.
\end{align}
Define a toy Markov chain coupled to $X^\beta_i$ as follows. Let $\wt{X}^\beta_0=X^\beta_0$ and
\begin{align}
\wt X_{i+1}^\beta &=
\begin{cases}
\wt X_i^\beta - \eta g_i^\beta + \sqrt{\eta} \xi_i,&\text{when }\ve{\wt X_j^\tau - x^\star}\le \fc{\fR}{\sqrt{\beta T}}\text{ for all }0\le j\le i\\
\wt X_i^\beta - \eta\beta\nb F(\wt X_i),&\text{otherwise.}
\end{cases}\label{e:toymc-offline}
\end{align}
By Lemma~\ref{l:var}, the variance of $g^\beta_i$ is at most $\fc{\beta^2T^2L^2}{b} \max_{0\le j\le i} \ve{\wt X^\beta_i - \wt X^\beta_j}^2$. If $\ve{X^\beta_i-x^\star}\le \fc{\fR}{\sqrt{\beta T}}$ for all $0\le i\le i_{\max}$, then $\ve{\wt X^\beta_i-\wt X^\beta_j}\le \fc{2\fR}{\sqrt{\beta T}}$ for all $0\le i,j\le i_{\max}$. Then we can apply Lemma~\ref{t:dmm} with $\ep=2\ep_2^2$, $L\mapsfrom L\beta T$, $\si^2 \le
\fc{(\beta T)^2 L^2}{b} \fc{4\fR^2}{\beta T}=
 \fc{4\beta T L^2\fR^2}{b}$, and $W_2^2(\mu_0,\pi)\le  \fc{10C_1^2}{\beta T}$. By Pinsker's inequality, for random $i\in [i_{\max}]$,
 \begin{align}
\ve{\cal L(\wt X^\beta_i)-\pi_T^\beta}_{\mathrm{TV}} &\le \sqrt{\rc2\mathrm{KL}(\wt \mu| \pi_\tau)} \leq \ep_2.
\end{align}
Under $G_\beta$, $X_i^\beta = \wt X_i^\beta$ for all $i\le i_{\max}$ and $s\le \tau$, so 
\begin{align}
 \|\mathcal{L}({X}^{\beta}_i) - \pi_T^\beta \|_{\mathrm{TV}} &\leq
\Pj(G_\beta^c)
+ \ve{\cal L(\wt X^\beta_i)-\pi_T^\beta}_{\mathrm{TV}} \leq
\ep_1+\ep_2.
 \label{e:induct1-offline}
\end{align}
This shows part 1. 

For part 2, note that by Assumption~\ref{a:conc},
\begin{align}
\Pj_{X\sim \pi_T^{\beta}}\ba{
\ve{X-x^\star}\ge \fc{C_1}{\sqrt{\beta T}}
}&\le 
Ae^{-kC_1}.\label{e:induct2-offline}
\end{align}
Combining~\eqref{e:induct1-offline} and~\eqref{e:induct2-offline} gives part 2.
\end{proof}

\begin{thm}[Theorem~\ref{thm:main_offline} with parameters]\label{t:main-param-offline}
Suppose that Assumptions~\ref{a:smooth} and~\ref{a:wass2} hold, with $L\ge 1$, $k\le 1$, and $\ve{X^0-x^\star}\le C$. 
Suppose Algorithm~\ref{alg:off-VRSGLD} is run with parameters $\eta_0,i_{\max}$ given by
\begin{align}
\ep_1&= \fc{\ep}{3 \ce{\log_2(T)+1}},\\
\label{e:main-C1-offline}
C_1 &=\pa{2+\rc k}\log\pf{A}{\ep_2k^2},\\
\fR &= 
\fc{10000 C_1\sqrt d}{\ep_1} \log\pa{\max\bc{L,C_1+\fD, \rc{\ep_1}}}
\\
\eta_0 &= \fc{\ep_1^{2}}{2L^2\fR^2},\\
i_{\max} &= 
\ce{\fc{5C_1^2}{\eta_0\ep_1^2}}
=\ce{\fc{10L^2\fR^2 C_1^2}{\ep_1^4}},
\end{align}
with any constant batch size $b\ge 4$. Then it outputs $X^1$ such that $X^1$ is a sample from $\wt\pi_T$ satisfying $\ve{\wt \pi_T-\pi_T}_{TV}\le \ep$, 
using $\wt O(T)+\poly\log(T)\poly(d,L,C,\ep^{-1})$ gradient evaluations.
\end{thm}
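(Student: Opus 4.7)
The plan is to mirror the inductive structure of the proof of Theorem~\ref{t:main-param}, but simplified because the minimizer $x^\star$ does not change across epochs (so $\mathfrak{D}=0$) and there are only $\lceil\log_2(T)\rceil+1$ epochs. I will induct on the temperature index: letting $\beta_t = 2^t/T$ for $0 \le t \le \log_2(T)$ and $\beta_{\lceil\log_2(T)\rceil}=1$, I will show by induction on $t$ that after the $t$th epoch the output $\mathsf{X}^{\beta_t}$ satisfies both $\ve{\mathsf{X}^{\beta_t} - x^\star} \le \frac{C_1}{\sqrt{\beta_t T}}$ and $\ve{\cal L(\mathsf{X}^{\beta_t}) - \pi_T^{\beta_t}}_{\mathrm{TV}} \le (t+1)(\ep_1+\ep_2+Ae^{-kC_1})$, with suitable probability. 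At the final epoch $\beta=1$ this yields $\ve{\cal L(\mathsf X) - \pi_T}_{\mathrm{TV}} \le \ep$ once I set $Ae^{-kC_1} \le \ep_2$ via the choice~\eqref{e:main-C1-offline} and $\epsilon_1+2\epsilon_2 \le \ep/(\lceil\log_2(T)\rceil+1)$.

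For the base case ($t=0$, i.e.\ $\beta_0 = 1/T$), the cold-start hypothesis $\|\mathsf{X}^0 - x^\star\| \le C \le C_1$ gives $\|\mathsf{X}^0 - x^\star\| \le \frac{C_1}{\sqrt{\beta_0 T}} \le \frac{2C_1}{\sqrt{\beta_0 T}}$, so the precondition of Lemma~\ref{l:induct-offline} holds. For the induction step, observe that if $\|\mathsf X^{\beta_t} - x^\star\| \le \frac{C_1}{\sqrt{\beta_t T}}$, then at the next inverse temperature $\beta_{t+1} = 2\beta_t$ we still have $\|\mathsf X^{\beta_t} - x^\star\| = \frac{\sqrt 2 C_1}{\sqrt{\beta_{t+1} T}} \le \frac{2C_1}{\sqrt{\beta_{t+1} T}}$, so the starting condition for Lemma~\ref{l:induct-offline} is again met. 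Lemma~\ref{l:induct-offline} then yields both that $\mathsf X^{\beta_{t+1}}$ is within TV distance $\epsilon_1+\epsilon_2$ of $\pi_T^{\beta_{t+1}}$ and that $\|\mathsf X^{\beta_{t+1}} - x^\star\| \le \frac{C_1}{\sqrt{\beta_{t+1} T}}$ with probability at least $1-(\ep_1+\ep_2+Ae^{-kC_1})$, closing the induction via a union bound.

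The key technical obstacle, exactly as in the online case, is verifying the escape-probability hypothesis~\eqref{e:induct-offline-ep} of Lemma~\ref{l:induct-offline}. For this I would invoke Lemma~\ref{l:escape} with $F \leftarrow \beta_t F$, step size $\eta/(\beta_t T)$, $r = \mathfrak{R}/\sqrt{\beta_t T}$, and the bound on the stochastic gradient error $\|\zeta_i\| \le 2 \beta_t T L \cdot \frac{2\mathfrak R}{\sqrt{\beta_t T}}$ from the ``maximum'' version in Lemma~\ref{l:var} (applied with all past points inside the ball of radius $\mathfrak R/\sqrt{\beta_t T}$). Substituting these into Lemma~\ref{l:escape} and the parameter choices for $\eta_0, i_{\max}, \mathfrak{R}, C_\xi$ reduces the bound to a system of five polynomial inequalities in $\mathfrak R$, analogous to~\eqref{e:r25-1}--\eqref{e:r25-5}, each of which is satisfied by $\mathfrak R = \Theta(C_1 \sqrt d \, \ep_1^{-1} \log(\max\{L,C_1,\ep_1^{-1}\}))$. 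The computation is essentially identical but cleaner than in the online case since there is no drift term and no dependence on past epochs.

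Finally, I would tally gradient evaluations: at the start of each of the $\lceil\log_2(T)\rceil+1$ epochs, Algorithm~\ref{alg:SAGA} requires one full gradient evaluation of $\sum_{k=1}^T f_k$ (at most $T$ gradient calls), and then $i_{\max}$ SGLD steps each using a batch of $b$ gradients. Using the values of $\eta_0, i_{\max}$ from the theorem statement, one sees that $b \cdot i_{\max} = \mathrm{poly}(d,L,C,\ep^{-1}) \cdot \mathrm{polylog}(T)$, giving a total of $\widetilde O(T) + \mathrm{polylog}(T)\cdot \mathrm{poly}(d,L,C,\ep^{-1})$ gradient evaluations as claimed.
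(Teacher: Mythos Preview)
Your proposal is correct and follows essentially the same route as the paper: induct over the temperature levels $\beta_t$, verify the escape hypothesis~\eqref{e:induct-offline-ep} via Lemma~\ref{l:escape} with $S_t = 4\sqrt{\beta T}L\fR$, and use Lemma~\ref{l:induct-offline} (parts 1 and 2) to propagate both the TV bound and the confinement to the next epoch. Your choice to carry the $C_1/\sqrt{\beta_t T}$ bound through the induction (rather than the weaker $\fR/\sqrt{\beta_t T}$ bound the paper states) is in fact slightly cleaner, since Lemma~\ref{l:induct-offline} requires the $2C_1$ starting condition anyway.
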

\begin{proof}[proof of Theorem \ref{thm:main_offline}]
The proof is similar to the proof of Theorem~\ref{t:main-param}, and we omit the details. 
We show by induction that 
\begin{align}
\Pj\pa{\ve{X^{\beta_s}_i-x^\star}\le \fc{\fR}{\sqrt{\beta_s T}}}&\ge 1-2s\ep_1.
\end{align}
The base case follows from $C\le C_1\le \fR$.
The induction step follows from noting first that
\begin{align}
\ve{X^{\beta_s}_i-x^\star}\le \fc{\fR}{\sqrt{\beta_s T}}
\implies 
\ve{X^{\beta_{s+1}}_0-x^\star}\le \fc{2\fR}{\sqrt{\beta_{s+1} T}},
\end{align}
noting that the conditions imply (for $\eta_\beta=\fc{\eta_0}{\sqrt{\beta T}}$, $r_t = \fc{\fR}{\sqrt{\beta T}}$, $S_t = 4\sqrt{\beta T} L\fR$, 
and $C_\xi = \sqrt{2d+8\log \pf{2i_{\max}}{\ep_1}}$) that 
\begin{align}
  \ep_1 & \ge i_{\max}\Bigg[
  \exp\pa{-\fc{(r_\beta^2 -  
   \fc{4C_1^2}{t+L_0/L} 
-  i [2\eta_t^2 (S_\beta^2 + L^2t^2 r_\beta^2) + \eta_\beta d])^2}{2i_{\max}( 2\eta_\beta S_\beta r_\beta + 2 \sqrt{\eta_\beta} C_\xi (r_\beta+\eta_\beta S_\beta + \eta_\beta Ltr_t) + \eta_\beta C_\xi^2 )^2}}\\
&\qquad \qquad   +\exp\pa{-\fc{C_\xi^2-d}8}\Bigg].
\end{align}
Then using Lemma~\ref{l:escape}, we get that~\eqref{e:induct-offline-ep} is satisfied with $\ep_1$, and the induction step follows from part 2 of Lemma~\ref{l:induct-offline}.

Finally, once we have $\ve{X^1_0-x^\star} \le \fc{\fR}{\sqrt{T}}$, the conclusion about $X^1$ follows from part 1 of Lemma~\ref{l:induct-offline}.
\end{proof}
\section{Simulations}\label{s:exp}
We test our algorithm against other sampling algorithms on a synthetic dataset for logistic regression. 
The dataset consists of $T=1000$ data points in dimension $d=20$. We compare the marginal accuracies of the algorithms.

The data is generated as follows. First, $\te\sim N(0,I_d), b\sim N(0,1)$ are randomly generated.
For each $1\le t\le T$, a feature vector $x_t\in \R^d$ and output $y_t\in \{0,1\}$ are generated by 
\begin{align}
x_{t,i}&\sim \textrm{Bernoulli}\pf{s}{d}&1\le i&\le d,\\
y_t &\sim \textrm{Bernoulli}(\phi(\te^\top x_t+b)),
\end{align}
where the sparsity is $s=5$ in our simulations, and $\phi(x) = \rc{1+e^{-x}}$ is the logistic function.
We chose $x_t\in \{0,1\}^d$ because in applications, features are often indicators.

The algorithms are tested in an online setting as follows. At epoch $t$ each algorithm has access to $x_{s,i},y_s$ for $s\le t$, and attempts to generate a sample from the posterior distribution $p_t(\te) \propto e^{-\fc{\ve{\te}^2}2}e^{-\fc{b^2}2}\prodo st \phi(\te^\top x_t+b)$; the time is limited to $t=0.1$ seconds. We estimate the quality of the samples at $t=T=1000$, by saving the state of the algorithm at $t=T-1$, and re-running it 1000 times to collect 1000 samples. We replicate this entire simulation 8 times, and the marginal accuracies of the runs are given in Figure~\ref{f:1}. 

The marginal accuracy (MA) is a heuristic to compare accuracy of samplers (see e.g.  \cite{durmus2017convergence}, \cite{faes2011variational} and \cite{chopin2017leave}). 
The marginal accuracy between the measure $\mu$ of a sample and the target $\pi$ is $MA(\mu, \pi) := 1-\frac{1}{2d} \sum_{i=1}^d \|\mu_i - \pi_i\|_{\mathrm{TV}}$, where $\mu_i$ and $\pi_i$ are the marginal distributions of $\mu$ and $\pi$ for the coordinate $x_i$.  Since MALA is known to sample from the correct stationary distribution 
for the class of distributions analyzed in this paper, we let $\pi$ be the estimate of the true distribution obtained from 1000 samples generated from running MALA for a long time (1000 steps). We estimate the TV distance by the TV distance between the histograms when the bin widths are 0.25 times the sample standard deviation for the corresponding coordinate of $\pi$.

We compare our online SAGA-LD algorithm with SGLD, full and online Laplace approximation, P\'olya-Gamma, and MALA.
The Laplace method approximates the target distribution with a multivariate Gaussian distribution.  Here, one first finds the mode of the target distribution using a deterministic optimization technique and then computes the Hessian  $\nabla^2 F_t$  of the log-posterior at the mode.  The inverse of this Hessian is the covariance matrix of the Gaussian. In the online version of the algorithm, 
given in~\cite{chapelle2011empirical}, to speed up optimization, only a 	quadratic approximation (with diagonal Hessian) to the log-posterior is maintained.
The P\'olya-Gamma chain \cite{dumitrascu2018pg} is a Markov chain specialized to sample from the posterior for logistic regression. 
Note that in contrast, our algorithm works more generally for any smooth probability distribution over $\R^d$. 

Our results show that our online SAGA-LD algorithm is competitive with the best 
samplers for logistic regression, namely, the P\'olya-Gamma Markov  chain and the full Laplace approximation. 
We note that the full Laplace approximation requires optimizing a sum of $t$ functions, which has runtime that scales linearly with $t$ at each epoch, while our method only scales as $\text{polylog}(t)$.

The parameters are as follows. The step size at epoch $t$ is $\fc{0.1}{1+0.5t}$ for MALA, $\fc{0.01}{1+0.5t}$ for SGLD, and $\fc{0.05}{1+0.5t}$ for online SAGA-LD. A smaller step size must be used with SGLD because of the increased variance. For MALA, a larger step size can be used because the Metropolis-Hastings acceptance step ensures the stationary distribution is correct. The batch size for SGLD and online SAGA-LD is 64. The step sizes $\eta_0$ were chosen by hand from testing various values in the range from $0.001$ to $1.0$.
We found the reset step of our online SAGA-LD algorithm, and the random number of steps, to be unnecessary in practice, so the results are reported for our online SAGA-LD algorithm without these features.
The experiments were run on Fujitsu CX2570 M2 servers with dual, 14-core 2.4GHz Intel Xeon E5 2680 v4 processors with 384GB RAM running the Springdale distribution of Linux.

\begin{figure}
\iftoggle{thesis}{\includegraphics[scale=0.5,valign=c]{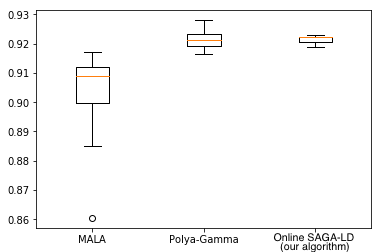}}{\includegraphics[scale=0.6,valign=c]{box2.png}}\begin{tabular}{|c|c|}
\hline 
Algorithm & Mean marginal accuracy\tabularnewline
\hline 
\hline 
SGLD & 0.442\tabularnewline
\hline 
Online Laplace & 0.571\tabularnewline
\hline 
MALA & 0.901\tabularnewline
\hline 
Polya-Gamma & 0.921\tabularnewline
\hline 
\textbf{Online SAGA-LD} & \multirow{2}{*}{0.921}\tabularnewline
\textbf{(our algorithm)} & \tabularnewline
\hline
Full Laplace & 0.924 \tabularnewline
\hline 
\end{tabular}
\caption{Marginal accuracies of 6 different sampling algorithms on online logistic regression, with $T=1000$ data points, dimension $d=20$, and time 0.1 seconds, averaged over 8 runs. SGLD and online Laplace perform much worse and are not pictured.}\label{f:1}
\end{figure}

%
%
%
%
%
%
%


\section{Discussion and future work}
\label{s:conclusion}

\ifarxiv{In this paper we obtain logarithmic-in-$T$ bounds at each epoch when sampling from a sequence of log-concave distributions $\pi_t \propto e^{-\sum_{k=0}^tf_k}$, improving on previous results which are linear-in-$T$ in the online setting.    Since we do not assume the $f_t$'s are strongly convex, we also obtain bounds which have an improved dependence on $T$  for a wider range of applications including Bayesian logistic regression.}{}


\paragraph{Comparison to using a regularizer.}
Recall that one issue in proving Theorem~\ref{thm:os-main} is that we don't assume the $f_t$ are strongly convex. One way to get around this is to add a strongly convex regularizer, and use existing results for Langevin in the strongly convex case. In the online case, one would have to add $\ep t ||x-\hat x_t||^2$ to the objective, where $\hat x_t$ is an estimate of the mode $x_t^\star$. Assuming we have such an estimate, using results on Langevin for strong convexity, to get $\epsilon$ TV-error, we also require $\tilde O\prc{\ep^6}$ steps per iteration. (Specifically, use \cite[Corollary 22]{durmus2018analysis}, with strong convexity $m=\ep t$ to get that $\tilde O\prc{\epsilon^3}$ iterations are required to get KL-error $\ep$, and apply Pinsker's inequality.)

%

\paragraph{Preconditioning.} 
Note our result does not hold if the covariance matrix of the $u_t$'s distribution becomes much more ill-conditioned over time, as is the case in certain Thompson sampling applications \cite{russo2017tutorial}. 

We would like to obtain similar bounds under more general assumptions where the covariance matrix could change at each epoch and be ill-conditioned.  This type of distribution arises in reinforcement learning applications such as Thompson sampling \cite{dumitrascu2018pg}, where the data is determined by the user's actions.  If the user favors actions in certain ``optimal" directions, in some cases the distribution may have a much smaller covariance in those directions than in other directions, causing the covariance matrix of the target distribution to become more ill-conditioned over time.  


\paragraph{Improved bounds for strongly convex functions.} Suppose that we dropped the requirement of independence. Note that if we use SAGA-LD with the last sample from the previous epoch, we have a warm start for the previous distribution, and would be able to achieve TV error that decreases as $T$ with  $\wt O_T(1)$ time per epoch. It seems possible to reduce the TV error to $O\pf{\ep}{t^{\rc 6}}$ this way, and possibly to $O\pf{\ep}{t^{\rc 4}}$ with stronger drift assumptions. These guarantees may also extend to subexponential distributions.

\paragraph{Distributions over discrete spaces.} 
There has been work on stochastic methods in the setting of discrete variables \cite{pmlr-v80-desa18a} that could potentially be used to develop analogous theory in the discrete case.

\paragraph{Non-compact distributions} One can also consider the problem of sampling from log-densities which are a sum of $T$ functions with compact support (online sampling from such distributions was considered in \cite{narayanan2013efficient}, but their running time bounds are not logarithmic in $T$  at each epoch).   One cannot directly apply our results to compactly supported log-densities, since they do not satisfy our Lipschitz gradient assumption (Assumption \ref{a:smooth}).  At the very least we would have to modify our algorithm, for example by rejecting steps proposed by our algorithm that would otherwise cause the Markov chain to leave the support of the target distribution.  A more challenging issue would be that restricting the distribution to a compact support can cause the distribution’s covariance matrix to become increasingly ill-conditioned as the number of functions $t$ increases, even if the support is convex.  To get around this problem we would need to modify our algorithm by including an adaptive pre-conditioner which changes along with the changing target distribution.

\paragraph{Necessity of drift condition (Assumption \ref{a:drift}).}
Since we do not assume that the individual functions $f_k$ are strongly convex, the mode (or, alternatively, the mean) of the target distribution cannot be controlled by the mode (or mean) of the individual functions.  For instance, in logistic regression, all of the individual functions have “mode” at $\pm \infty$ in the direction of the data vector.  Therefore, unlike in the strongly convex case, a condition on the mode of each individual function $f_k$ does not suffice for many non-strongly convex applications including logistic regression.  Rather, the mode depends on the probability distribution from which the individual functions are drawn.  We show that Assumption \ref{a:drift} holds in Section \ref{sec:Bayesian_summary} for the special case of Bayesian logistic regression, and give more general conditions for when Assumption \ref{a:drift} holds in Theorem \ref{thm:general-conc}.

\bibliographystyle{alpha}
\bibliography{bib}

\appendix


\section{A simple example where our assumptions hold} \label{sec:simple_example}
As a simple example to motivate our assumptions, we consider the Bayesian linear regression model $y_t =  z_t^\top \theta_0 + w_t$, where $y_t \in \mathbb{R}^1$ is the dependent variable, $z_t \in \mathbb{R}^d$ the independent variable, and   $w_t \sim N(0,1)$ the unknown noise term.  The Bayesian posterior distribution for the coefficient $\theta_0$ is
$\pi_t(\theta) \propto e^{-\sum_{k=1}^t f_k(\theta)} = e^{-[\theta- \mu]^\top \Sigma^{-1} [\theta- \mu]}$
where $f_k(\theta) = (y_k - z_k \theta)^2$ for each $k$, $\Sigma^{-1} = \sum_{k=1}^T z_k z_k^\top$ and $\mu = \Sigma^{\nicefrac{1}{2}} \sum_{k=1}^T y_k z_k$.  Hence, the posterior $\pi_t$ has distribution $N(\mu, \Sigma)$.  While computing $\Sigma$ requires at least $T\times d^2$, computing a stochastic gradient with batch size $b$ requires $d \times b$ operations.  Therefore, one can hope to sample in fewer than $T\times d^2$ operations (we prove this in Theorem \ref{thm:os-main}).  

We now show that our assumptions hold for this example. For simplicity, we assume that the dimension $d=1$, $z_t =1$ for all $t$, and assume an improper ``flat" prior, that is, $f_0=0$.  At each epoch $t\in \{1,\ldots,T\}$, the Bayesian posterior distribution for the coefficient $\theta_0$ is $\pi_t(\theta) \propto e^{-\sum_{k=1}^t f_k(\theta)}$, which a simple computation shows is the normal distribution with mean $\theta_0  +  \frac{\sum_{k=0}^t w_k}{t}$ and variance $\frac{1}{2t} \leq \frac{1}{t+1}$.   Thus, Assumption \ref{Assumption:LipschitzG} is satisfied with $L=1$ and Assumption \ref{Assumption:Wass} is satisfied with $C=2$.  To verify Assumption \ref{Assumption:MLE}, we note that $x_t^\star = \frac{\sum_{k=1}^t w_k}{t}$, and thus $x_t^\star \sim N(0, \frac{1}{t})$.   We can then apply Gaussian concentration inequalities to show that $\mathfrak{D} = 4 \log^{\frac{1}{2}}(\frac{\log(T)}{\delta})$ with probability at least $1-\delta$.

\section{Hardness} \label{sec:Hardness}\label{s:hard}

\paragraph{Hardness of optimization with stochastic gradients.}
The authors of \cite{agarwal2009information} consider the problem of optimizing an $L$-Lipschitz function $F:\mathcal{K} \rightarrow \mathbb{R}$ on a convex body $K$ contained in an $\ell_\infty$ ball of radius $r>0$. Given an initial point in $\mathcal{K}$ and access to a first-order stochastic gradient oracle with variance $\si^2$, they show that any optimization method, given a worst-case initial point in $\mathcal{K}$, requires at least $\Omega(\frac{L^2\si^2 d}{\delta^2})$ calls to the stochastic gradient oracle to obtain a random point $\hat{x}$ such that $\mathbb{E}[F(\hat{x}) - F(x^\star)]\leq \delta$.
\vspace{-2mm}
\paragraph{Hardness in our setting.} 
What is the minimum number of gradient evaluations required to sample from a target distribution satisfying Assumptions \ref{Assumption:LipschitzG}--\ref{Assumption:MLE} with fixed TV error $\epsilon >0$, given only access to the gradients $\nb f_k$, $0\le k\le T$?  In this section we show (informally) by counterexample that one needs to compute at least $\Omega(T)$ gradients to sample with TV error $\epsilon \leq \frac{1}{20}$. As a counterexample, consider the Bayesian linear regression posterior considered in Section \ref{sec:simple_example}, with $d=1$.  Suppose that one only computes stochastic gradients using gradients with index in a random set $S_i = \{\tau_1,\ldots, \tau_{\frac{T}{2}}\}$, of size $\frac{T}{2}$, where each element of $S_i$ is chosen independently from the uniform distribution on $\{1,\ldots, T\}$.  Then the mean of these stochastic gradients (conditioned on the subset $S_i$) are gradients of a function $-\log(\hat{\pi}^{(i)})$, for which $\hat{\pi}^{(i)}$ is the density of the normal distribution $N(\mu_i,  \frac{1}{2t})$, where the mean $\mu_i = \frac{\sum_{k\in S_i} w_k}{t} \sim N(0, \frac{1}{t})$ is itself (conditional on $S_i$) a random variable.   Now consider two independent random subsets $S_1$ and $S_2$ with corresponding distributions $\hat{\pi}^{(1)}$ and $\hat{\pi}^{(2)}$.  The means of the distributions $\hat{\pi}^{(1)}$ and $\hat{\pi}^{(2)}$ (conditional on $S_1$ and $S_2$) are independent random variables $\mu_1, \mu_2 \sim  N(0, \frac{1}{t})$.  Hence, the difference in their means $\mu_1 - \mu_2 \sim  N(0, \frac{2}{t})$ is normally distributed with standard deviation $\frac{\sqrt{2}}{\sqrt{t}}$.  Thus, with probability at least $\frac{1}{2}$, we have $|\mu_1 - \mu_2| \geq \frac{1}{\sqrt{t}}$.  Therefore, since (conditional on $S_1, S_2$) we have $\hat{\pi}^{(i)} \sim N(\mu_i,  \frac{1}{2t})$ for $i \in \{1,2\}$, we must have that $\|\hat{\pi}^{(1)}- \hat{\pi}^{(2)}\|_{\mathrm{TV}} \geq \frac{1}{10}$ whenever $|\mu_1 - \mu_2| \geq \frac{1}{\sqrt{t}}$. That is, $\|\hat{\pi}^{(1)}- \hat{\pi}^{(2)}\|_{\mathrm{TV}} \geq \frac{1}{10}$ occurs with probability at least $\frac{1}{2}$.  Therefore, one cannot hope to sample from $\pi_T$ with TV error $\epsilon< \frac{1}{20}$ by using the information from only $\frac{T}{2}$ gradients.  One therefore needs to compute at least $\Omega(T)$ gradients to sample from $\pi_T$ with TV error $\epsilon < \frac{1}{20}$.

\iftoggle{thesis}{}{\section{Miscellaneous inequalities}

We give some inequalities used in the proofs in Section~\ref{s:bayes}.}

\begin{lem}\label{lem:azuma-d}
Suppose that $X_t$ are a sequence of random variables in $\mathbb{R}^d$ and for each $t$, $\ve{X_t - \E[X_t|X_{1:t-1}]}_{\iy}\leq M$ (with probability 1). Let $S_T= \sumo tT \E [X_t|X_{1:t-1}]$ (a random variable depending on $X_{1:T}$). Then
\begin{align}
\Pj\pa{
\ve{\sumo tT  X_t - S_t}_2 \geq c
} & \leq 2d e^{-\fc{c^2T}{2M^2d}}.
\end{align}
\end{lem}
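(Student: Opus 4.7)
\medskip

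\noindent\textbf{Proof proposal.} The plan is to reduce the vector-valued statement to $d$ scalar applications of the standard Azuma--Hoeffding inequality and then convert the resulting $\ell_\infty$ tail bound to an $\ell_2$ tail bound by a single norm inequality.

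First, for each coordinate $i \in \{1,\ldots,d\}$, consider the scalar sequence $D_{t,i} := X_{t,i} - \mathbb{E}[X_{t,i}\mid X_{1:t-1}]$. By construction $D_{t,i}$ is a martingale difference with respect to the filtration $\sigma(X_{1:t})$, and the hypothesis $\|X_t - \mathbb{E}[X_t\mid X_{1:t-1}]\|_\infty \le M$ (a.s.) gives $|D_{t,i}| \le M$ deterministically. I would then apply the standard scalar Azuma--Hoeffding inequality to obtain
\begin{equation*}
\Pj\!\left(\left|\sum_{t=1}^T D_{t,i}\right| \ge c'\right) \le 2\exp\!\left(-\frac{c'^2}{2TM^2}\right)
\end{equation*}
for every threshold $c' > 0$ and every fixed coordinate $i$.

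Second, I would take a union bound over the $d$ coordinates to pass to the $\ell_\infty$ norm of the martingale $Z_T := \sum_{t=1}^T X_t - S_T$, giving
\begin{equation*}
\Pj(\|Z_T\|_\infty \ge c') \le 2d\exp\!\left(-\frac{c'^2}{2TM^2}\right).
\end{equation*}
Finally, I would use the elementary inequality $\|v\|_2 \le \sqrt{d}\,\|v\|_\infty$, so that $\{\|Z_T\|_2 \ge c\} \subseteq \{\|Z_T\|_\infty \ge c/\sqrt{d}\}$, and set $c' = c/\sqrt{d}$ to conclude the stated bound (up to the factor of $T$ versus $1/T$ in the exponent, which I suspect is a typo in the lemma statement; the natural Azuma scaling gives $\exp(-c^2/(2TM^2 d))$, and this is consistent with the way the lemma is applied in the proof of Theorem~\ref{thm:general-conc} with $c = C\sqrt{T}$).

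There is no real obstacle here; everything is a direct appeal to Azuma--Hoeffding plus a union bound and a norm comparison. The only thing to verify carefully is that the hypothesis indeed gives a \emph{uniform} (almost sure) bound $|D_{t,i}| \le M$ for each coordinate, which is immediate from $\|\cdot\|_\infty \le M$.
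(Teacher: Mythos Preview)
Your proposal is correct and follows essentially the same route as the paper's proof: apply scalar Azuma--Hoeffding to each coordinate, then use the implication $\|v\|_2 \ge c \Rightarrow \max_j |v_j| \ge c/\sqrt{d}$ together with a union bound over the $d$ coordinates. Your observation about the exponent is also right: the $T$ belongs in the denominator, and the paper's own application of the lemma (with $c$ of order $C\sqrt{T}$, yielding $2d e^{-C^2/(2M^2 d)}$) confirms this is a typo in the lemma statement and its proof.
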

\begin{proof}
By Azuma's inequality, for each $1\leq j\leq d$, 
\begin{align}
\Pj \pa{\ab{\sumo tT (X_t)_j - (S_t)_j} \geq c}& \leq 2e^{-\fc{c^2T}{2M^2}}.
\end{align}
By a union bound,
\begin{align}
\Pj\pa{
\ve{\sumo tT  X_t - S_t}_2 \geq c
} 
&\leq \sumo jd
\Pj \pa{\ab{\sumo tT (X_t)_j - (S_t)_j} \geq \fc{c}{\sqrt d}}\leq
 2de^{-\fc{c^2T}{2M^2d}}.
\end{align}
\end{proof}

\begin{lemma}\label{l:subexp-implies-2m}\label{l:exp-2m}
Suppose that $\pi$ is a distribution with $\Pj_{\te\sim \pi}(\ve{\te-\te_0}\ge \ga)\le Ae^{-k\ga}$, for some $\te_0$. 
Then 
\begin{equs}
\E_{\te\sim \pi}[\ve{\te-\te_0}^2]&\le \pa{2+\rc k}\log\pf{A}{k^2}.
\end{equs}
\end{lemma}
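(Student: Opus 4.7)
The natural approach is tail integration. Setting $X := \|\theta - \theta_0\|$, we have the identity
\[
\E[X^2] \;=\; \int_0^\infty 2\gamma\,\Pr(X \ge \gamma)\,d\gamma,
\]
and the two usable bounds on the integrand are $\Pr(X \ge \gamma) \le 1$ (trivial) and $\Pr(X \ge \gamma) \le A e^{-k\gamma}$ (hypothesis). The exponential bound only becomes useful once $Ae^{-k\gamma} < 1$, so I would split the integral at a threshold $\gamma_0$ where the two bounds meet up to constants.

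The concrete plan is to pick $\gamma_0 = \tfrac{1}{k}\log(A/k^2)$ (the natural threshold that makes $A e^{-k\gamma_0} = k^2$, so that the resulting exponential integral produces factors of $1/k^2$ that cancel the $A$). Applying $\Pr(X\ge\gamma)\le 1$ on $[0,\gamma_0]$ contributes $\gamma_0^2 = \tfrac{1}{k^2}\log^2(A/k^2)$; applying the subexponential tail on $[\gamma_0,\infty)$ contributes
\[
2A\int_{\gamma_0}^\infty \gamma\, e^{-k\gamma}\,d\gamma \;=\; \tfrac{2A\,e^{-k\gamma_0}(k\gamma_0+1)}{k^2} \;=\; 2\bigl(\log(A/k^2)+1\bigr).
\]
Summing and simplifying using $1 \le \log(A/k^2)$ (which holds in the regime of interest, as otherwise the bound is trivial and one can enlarge $A$ slightly) gives a bound of the form claimed, with the $1/k$ term coming from the first piece and the $2$ from the second piece (after regrouping $\log^2/k^2$ as $\tfrac{1}{k}\log(A/k^2)\cdot\tfrac{\log(A/k^2)}{k}$ and absorbing a logarithmic factor).

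\textbf{Main obstacle.} The only delicate point is the choice of the splitting threshold $\gamma_0$ and the bookkeeping that converts the natural $\log^2(A/k^2)/k^2$ term arising from the trivial bound into the cleaner form $(2+1/k)\log(A/k^2)$ stated in the lemma. This requires being a little careful about the regime of $A$ and $k$ (the lemma is used only in contexts where $A \ge k^2$, so $\log(A/k^2)\ge 0$), and using monotonicity to absorb lower-order terms. No probabilistic subtlety is involved beyond the tail-integration identity; the argument is essentially a one-variable calculus optimization.
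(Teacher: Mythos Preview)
Your approach is essentially identical to the paper's: tail integration $\E[X^2]=\int_0^\infty 2\gamma\,\Pr(X\ge\gamma)\,d\gamma$, split at the same threshold $\gamma_0=\tfrac{1}{k}\log(A/k^2)$, trivial bound on $[0,\gamma_0]$, exponential tail on $[\gamma_0,\infty)$.

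One remark: the paper bounds the contribution of $[0,\gamma_0]$ by $\gamma_0$ rather than by $\gamma_0^2$ (an apparent slip), whereas you correctly record $\gamma_0^2$. You are also right that the final bookkeeping to land exactly on $(2+1/k)\log(A/k^2)$ is the only delicate point; in fact, with $L:=\log(A/k^2)$ the honest sum is $L^2/k^2+2L+2$, and this does \emph{not} in general simplify to $(2+1/k)L$ (your proposed absorption would require $L^2/k^2+2\le L/k$, which never holds). The paper's own proof does not close this gap either. So the method is the right one and matches the paper; the discrepancy in the final constant is an artifact of the lemma's stated bound being slightly loose in its bookkeeping, not of your argument.
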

\begin{proof}
Without loss of generality, $\te_0=0$. 
Then
\begin{align}
\E_{\te\sim \pi}[\ve{\te}^2] &= \int_0^\iy 2\ga \Pj_{\te\sim \pi}(\ve{\te}\ge \ga)\,d\ga\\
&\le \ga_0 + \int_{\ga_0}^\iy2\ga \Pj_{\te\sim \pi}(\ve{\te}\ge \ga)\,d\ga\\
&\le \ga_0 + \int_{\ga_0}^\iy2\ga Ae^{-k\ga}\,d\ga&\text{by assumption}&\\
&= \ga_0 + A\pa{-\fc{2\ga}{k}e^{-k\ga}\Big|^\iy_{\ga_0} - \int_{\ga_0}^\iy -\fc{2}k e^{-k\ga}\,d\ga}&\text{integration by parts}&\\
&=A\pa{\fc{2\ga_0}{k}e^{-k\ga_0} + \fc{2}{k^2} e^{-k\ga_0}}.
\end{align}
Set $\ga_0=\fc{\log\pf{A}{k^2}}k$. Then this is $\le \pa{2+\rc k} \log\pf{A}{k^2}$, as desired.
\end{proof}

\end{document}